\setlist{nosep}
\pgfplotsset{compat=1.9}
\newif\ifcompiletikz
 \newcounter{tikzfigcntr}
\def\mystrut{\vphantom{hg}}
\pgfplotsset{
    legend image with text/.style={
        legend image code/.code={%
            \node[anchor=center] at (0.3cm,0cm) {#1};
        }
    },
}
\pgfplotsset{%
  redweak/.style = {red!70!black,densely dotted,
                    mark=o, mark options={scale=0.7,solid}}}
\pgfplotsset{%
  redhollow/.style = {red!100!white,densely dashed,
                      mark=o, mark options={scale=1.0,solid}}}
\pgfplotsset{%
  redsolid/.style = {red!50!white,mark=*,mark options={scale=1.0,solid}}}
\pgfplotsset{%
  orangeweak/.style = {orange!70!black,densely dotted,mark=diamond,
                       mark options={scale=1.2*0.7,solid}}}
\pgfplotsset{%
  orangehollow/.style = {orange!100!white,densely dashed,mark=diamond,
                         mark options={scale=1.2,solid}}}
\pgfplotsset{%
  orangesolid/.style = {orange!50!white,mark=diamond*,
                        mark options={scale=1.2,solid}}}
\pgfplotsset{%
  violetweak/.style = {violet!70!black,densely dotted,mark=diamond,
                       mark options={scale=1.2*0.7,solid}}}
\pgfplotsset{%
  violethollow/.style = {violet!100!white,densely dashed,mark=diamond,
                         mark options={scale=1.2,solid}}}
\pgfplotsset{%
  violetsolid/.style = {violet!50!white,mark=diamond*,
                        mark options={scale=1.2,solid}}}
\pgfplotsset{%
  blueweak/.style = {blue!70!black,densely dotted,mark=square,
                     mark options={scale=0.8*0.7,solid}}}
\pgfplotsset{%
  bluehollow/.style = {blue!100!white,densely dashed,mark=square,
                       mark options={scale=0.8,solid}}}
\pgfplotsset{%
  bluesolid/.style = {blue!50!white,mark=square*,
                      mark options={scale=0.8,solid}}}
\pgfplotsset{%
  tealweak/.style = {teal!70!black,densely dotted,mark=triangle,
                     mark options={scale=1.2*0.7,solid}}}
\pgfplotsset{%
  tealhollow/.style = {teal!100!white,densely dashed,mark=triangle,
                       mark options={scale=1.2,solid}}}
\pgfplotsset{%
  tealsolid/.style = {teal!50!white,mark=triangle*,
                      mark options={scale=1.2,solid}}}
\newcommand\numberthis{\addtocounter{equation}{1}\tag{\theequation}}
\DeclareMathOperator*{\argmax}{arg\,max}
\DeclareMathOperator*{\softmax}{softmax}
\DeclareMathOperator*{\diag}{diag}
\newcommand{\N}{\mathcal{N}}
\newcommand{\Nb}{\mathbb{N}}
\newcommand{\Rb}{\mathbb{R}}
\newcommand{\indcia}{\mathbbm{1}_{[1,k]}(i)z}
\newcommand{\E}{\mathop{{}\mathbb{E}}}
\DeclareMathOperator{\KL}{KL}
\DeclareMathOperator{\elbo}{\mathcal{L}_{\text{ELBO}}}
\newcommand{\indep}{\perp \!\!\! \perp}
\newcommand{\betavae}{$\boldsymbol{\beta}$-VAE\xspace}
\newcommand{\nlltoppl}[1]{%
  \pgfmathparse{exp(#1)}%
  \pgfmathprintnumber[fixed,zerofill,precision=1,assume math mode=true]{\pgfmathresult}}
\newcommand{\cX}{\mathcal{X}}
\newcommand{\reducedstrut}{\vrule width 0pt height .9\ht\strutbox depth .9\dp\strutbox\relax}
\newcommand{\colorboxx}[2]{%
  \begingroup%
  \setlength{\fboxsep}{0pt}%
  \colorbox{#1}{\reducedstrut#2\/}%
  \endgroup
}
\newcommandx{\mgl}[2][1=]{\colorboxx{blue!30}{\tiny\,}\todo[color=blue!30,#1]{MG: #2}}
\newcommandx{\todoa}[2][1=]{\colorboxx{brown!30}{\tiny\,}\todo[color=brown!30,#1]{AG: #2}}
\newcommandx{\pb}[2][1=]{\colorboxx{gray!30}{\tiny\,}\todo[color=gray!30,#1]{PB: #2}}
\newcommandx{\cd}[2][1=]{\colorboxx{cyan!30}{\tiny\,}\todo[color=cyan!30,#1]{CD: #2}}
\newcommandx{\jr}[2][1=]{\colorboxx{red!30}{\tiny\,}\todo[color=red!30,#1]{JR: #2}}
\title{Mutual Information Constraints for Monte-Carlo Objectives\\
  to Prevent Posterior Collapse\\
  Especially in Language Modelling}
\author{G\'abor Melis \email melisgl@google.com \\
  \addr DeepMind, London, UK; University College London, UK
  \AND
  Andr\'as Gy\"orgy \email agyorgy@google.com \\
  \addr DeepMind, London, UK
  \AND
  Phil Blunsom \email pblunsom@google.com \\
  \addr DeepMind, London, UK; Oxford University, UK
}
\begin{document}

\maketitle

\begin{abstract}%
Posterior collapse is a common failure mode of density models trained as variational autoencoders, wherein they model the data without relying on their latent variables, rendering these variables useless.
We focus on two factors contributing to posterior collapse, that have been studied separately in the literature.
First, the underspecification of the model, which in an extreme but common case allows posterior collapse to be the theoretical optimium.
Second, the looseness of the variational lower bound and the related underestimation of the utility of the latents.
We weave these two strands of research together, specifically the tighter bounds of multi-sample Monte-Carlo objectives and constraints on the mutual information between the observable and the latent variables.
The main obstacle is that the usual method of estimating the mutual information as the average Kullback-Leibler divergence between the easily available variational posterior $q(z|x)$ and the prior does not work with Monte-Carlo objectives because their $q(z|x)$ is not a direct approximation to the model's true posterior $p(z|x)$.
Hence, we construct estimators of the Kullback-Leibler divergence of the true posterior from the prior by recycling samples used in the objective, with which we train models of continuous and discrete latents at much improved rate-distortion and no posterior collapse.
While alleviated, the tradeoff between modelling the data and using the latents still remains, and we urge for evaluating inference methods across a range of mutual information values.
\end{abstract}

\begin{keywords}
latent variables, variational inference, Monte-Carlo objective, mutual information
\end{keywords}

\section{Introduction}

The promise of latent variable models is in learning about the underlying generative process, discovering structure in the data, principled representation learning, improved generalization and controllable generation; all made possible by judicious choice of model structure, such as the prior, the likelihood, and any conditional independence assumptions.
Variational autoencoders (VAEs, \citealp{kingma2013auto,rezende2014stochastic}) provide a general framework for statistical inference in latent variable models of the form $p_\theta(x,z)=p_\theta(x|z)p(z)$, where $x$ is the observable data, $z$ is the vector of latent variables, and the objective is to learn the parameters $\theta$ so that the resulting marginal distribution $p_\theta(x)$ well approximates the empirical data distribution $p_D(x)$.
The generality of VAEs comes at a price, as the variational posterior $q_\phi(z|x)$, used to approximate the true posterior $p_\theta(z|x)$, usually underestimates the variance of the latter \citep{maddison2017filtering},\footnote{While we follow established terminology, strictly speaking $q_\phi$ does not ``estimate'' the variance of $p_\theta$, and underestimation of the variance means that the distribution $q_\phi(z|x)$ has lower variance than that of $p_\theta(z|x)$.} which is often observed as the underuse of latent variables.
In the extreme case, the underestimation leads to ignoring the latents entirely, which is known as posterior collapse \citep{zhao2019infovae} and is the main focus of this work.
The issue of posterior collapse is especially acute with auto-regressive decoders, which are capable of modelling the data without using the latents at all \citep{yang2017improved}.
\Citet{bowman2015generating} attributed this to a ``difficult learning problem'', and dozens of attempts to remedy it followed \citep{alemi2017fixing,dieng2017variational,van2017neural,kim2018semi} to help VAEs fulfill their promise in representation learning.

This work aims to understand and remedy posterior collapse in VAEs with the long-term goal of facilitating research into latent variable models.
While acknowledging that their ultimate evaluation is necessarily in terms of performance on down-stream tasks or as density models, we demonstrate that suboptimal inference can present a severe tradeoff between latent variable usage and data fit.
This inefficiency of inference renders the posterior unfit for its purpose as a representation of the data.
Therefore, instead of measuring the performance of the learned models on specific down-stream tasks, we evaluate this tradeoff in terms of their rate-distortion behaviour \citep{alemi2017fixing}, by measuring the rate as the mutual information between the observables $x$ and the latents $z$, and distortion based on the negative log-likelihood assigned by the model to the data.

Several interacting factors (outlined in \S\ref{sec:posterior-collapse}) play a role in posterior collapse, but the two most pertinent to this paper are the looseness of the lower bound and underspecification.
\emph{The looseness of the variational lower bound} (such as the ELBO objective) biases solutions found by VAEs away from the theoretical optimum.
Hence, designing tighter lower bounds has been a mainstay of research on variational inference \citep{rezende2015variational,kingma2016improved,tomczak2018vae}, and one approach is to take multiple samples from a variational posterior distribution $q_\phi(z|x)$ to form an approximation of the marginal likelihood $p_\theta(x)$.\footnote{Throughout we do not explicitly distinguish between densities and probability mass functions unless it is necessary; these are naturally dictated by the type (continuous/discrete) of the underlying variables.}
In these so-called Monte-Carlo objectives \citep{mnih2016variational}, such as IWAE \citep{burda2015importance}, $q_\phi(z|x)$ does not represent the true posterior $p_\theta(z|x)$ explicitly, but it can be interpreted as a factor in a more elaborate, implicit approximate posterior \citep{cremer2017reinterpreting}.
In this context, it is thus more correct to refer to $q_\phi(z|x)$ as the proposal distribution.

The second factor we consider is \emph{underspecification}.
We use underspecification in the sense that models that are optimal in terms of marginal likelihood can differ greatly in their posteriors \citep{huszar2017representation}, thus the optimization aiming to fit the data by maximizing the likelihood is underspecified.
This issue is most apparent in that, for VAEs with powerful function classes expressing the likelihoods, posterior collapse can be an optimal solution in terms of data fit because the usual evidence lower bound (ELBO) objective is neutral with respect to the mutual information between the latents and the data.
However, as \citet{huszar2017representation} argues and as we show in \S\ref{sec:cia-an-pc}, the marginal likelihood objective leaves the posterior underspecified, so underspecification is a shortcoming of the ELBO in only as much as it does not correct for it.
Many proposed methods aim to address underspecification by constraining the mutual information between the observable and the latent variables \citep{higgins2016beta,phuong2018mutual,zhao2019infovae} relying on the availability of a good posterior approximation.
As discussed next, with Monte-Carlo objectives we do not have access to a good posterior approximation, and this must be addressed if we hope constrain mutual information to reduce underspecification.

When the proposal $q_\phi(z|x)$ is close to $p_\theta(z|x)$, we can approximate the mutual information $I_p(X,Z) = \E_{p_\theta(x)}\KL(p_\theta(z|x)\|p(z))$
(where $\KL$ denotes the Kullback-Leibler divergence) with $\E_{p_\theta(x)}\KL(q_\phi(z|x)\|p(z))$, which features the proposal $q_\phi$.
Unfortunately, with Monte-Carlo objectives we cannot expect the proposal to approximate the posterior well \citep{mnih2016variational}, and $\KL(q_\phi(z|x)\|p(z))$ (called the \emph{representational KL}) is a highly biased estimate of $\KL(p_\theta(z|x)\|p(z))$ (the \emph{true KL} from now on).

\emph{Our main contribution} is a novel method to constrain the mutual information between the observable and the latent variables in the context of multi-sample Monte-Carlo objectives, bringing research on loose bounds and underspecification together.
More specifically, we introduce an optimization objective which features two terms, one coming from the variational lower bound and another from the mutual information, where both terms are based on multiple samples taken from the proposal distribution $q_\phi$.
Compared to the single-sample case, we get the benefit of the tighter lower bounds Monte-Carlo objectives offer without having to give up control of the mutual information.
At the same time, our multi-sample estimators for the mutual information are much more efficient than in the single-sample case and can better tolerate low-quality posterior approximations.
Our mutual information term is computed from the recycled samples of the Monte-Carlo estimator of the marginal likelihood, hence the method has negligible computational overhead.
Combined with best-of-breed gradient estimators, such as DReG \citep{tucker2018doubly} and VIMCO \citep{mnih2016variational} for a multi-sample objective, we train models with continuous and discrete latents at much improved rate-distortion.

The rest of the paper is structured as follows.
\begin{itemize}
\item \S\ref{sec:posterior-collapse} provides an overview of the known causes of posterior collapse, which are all shortcomings of the inference method except for underspecification.
\item In \S\ref{sec:cia-an-pc}, we characterize underspecification as the lack of sufficient conditional independence assumptions, which may be partially offset by constraining mutual information.
\item \S\ref{sec:mi-objective} proposes reusing samples from Monte-Carlo objectives to better estimate the mutual information, which is the main contribution of this paper.
\item \S\ref{sec:connection-to-the-representational-kl} and \S\ref{sec:connection-to-the-beta-vae} show that the representational KL, which underlies many mutual information estimates, corresponds to the single-sample case of our estimators, and the single-sample objectives built on them are equivalent to the \betavae objective of \citet{higgins2016beta}.
\item \S\ref{sec:experiments} experimentally verifies the effectiveness of the proposed methods on synthetic and language modelling tasks, emphasizing evaluation in terms of the data fit vs latent usage tradeoff.
\end{itemize}

\section{Variational Autoencoders and Posterior Collapse}
\label{sec:posterior-collapse}

This section introduces variational autoencoders and describes the known causes of posterior collapse.
Contrary to what the name variational autoencoder may suggest, a VAE is not a model itself but an inference\footnote{We use \emph{inference} in the statistical sense, commonly referred to as \emph{training} or \emph{learning} in the machine learning literature.} mechanism for models of the form $p_\theta(x,z)=p_\theta(x|z)p(z)$ where $\{p_\theta(x|z)\}$ is a parametric family of conditional distributions \citep{kingma2013auto}.
VAE training constructs an approximate maximum likelihood estimate of the model parameters $\theta$ with the aim of maximizing the probability over the empirical data distribution: $\argmax_{\theta} \E_{x \sim p_D(x)}[ \ln p_\theta(x)]$.
Since $\ln p_\theta(x)$ has no analytic form in general, VAEs posit a variational family of distributions $\mathcal{Q} = \{q_\phi(z|x)\}$ parameterized by $\phi$ to approximate the true posterior $p_\theta(z|x)$ and construct a lower bound on the marginal likelihood $p_\theta(x)$, also called the evidence:
\begin{align}
\label{eq:elbo-posterior-contrastive}
\ln p_\theta(x) \geqslant \elbo(x, \theta, \phi)
&= \ln p_\theta(x) - \KL\big(q_\phi(z|x)\|p_\theta(z|x)\big) \\
\label{eq:elbo-prior-contrastive}
&= \E_{z \sim q_\phi(z|x)} [\ln p_\theta(x|z)] - \KL\big(q_\phi(z|x)\|p(z)\big).
\end{align}
As evident in the ``posterior-contrastive'' form of the ELBO \eqref{eq:elbo-posterior-contrastive}, it is a lower bound on $\ln p_\theta(x)$ due to the non-negativity of the KL divergence \citep{kullback1951information}.
In the ``prior-contrastive'' form \eqref{eq:elbo-prior-contrastive}, both the expectation and the KL divergence  term can be estimated by taking a single sample from $q$, which forms the basis of its optimization.
Alternatively, the KL may be computable analytically.
Putting it all together, gradient-based optimization with VAEs is performed jointly over parameters $\theta$ of the model and $\phi$ of the approximate posterior as
\begin{align*}
\argmax_{\theta,\phi} \E_{x \sim p_D(x)} \elbo(x, \theta, \phi).
\end{align*}
In practice, the expectation in \eqref{eq:elbo-prior-contrastive} is approximated with a single sample from $q_\phi(z|x)$ and the expectation over $p_D(x)$ with a ``minibatch'', which goes by the name of doubly stochastic variational inference \citep{titsias2014doubly}.
For a broader context, we refer the reader to \citet{zhang2018advances}, who provide a comprehensive overview of developments in variational inference \citep{jordan1999introduction}.
From this point on, wherever possible we drop the subscripts in $p_\theta$ and $q_\phi$ to declutter the notation.

The possibility of posterior collapse (also referred to as over-pruning, \citealt{yeung2017tackling}, or information preference, \citealt{zhao2019infovae}) is most evident in the prior-contrastive ELBO \eqref{eq:elbo-prior-contrastive}.
If the likelihood $p(x|z)$ is able to model the distribution of $x$ without the latents $z$, then the reconstruction term $\E_{q(z|x)} \ln p(x|z)$ is just $\ln p(x)$ independently of $q(z|x)$.
Since $q$ does not affect the reconstruction term, we can just set it to the prior $p(z)$, which is often in $\mathcal{Q}$, so that the KL penalty is zero.
In this case, $q(z|x) = p(z|x)$ is also satisfied, but unfortunately the two posteriors have now ``collapsed'' to match the prior $p(z)$, which renders the latent variables useless.

\begin{figure}
  \begin{minipage}{0.9\linewidth}
    \hspace*{0.9cm} \includegraphics[scale=0.6,trim={4.5cm 3.2cm 2.5cm
        0.0cm},clip]{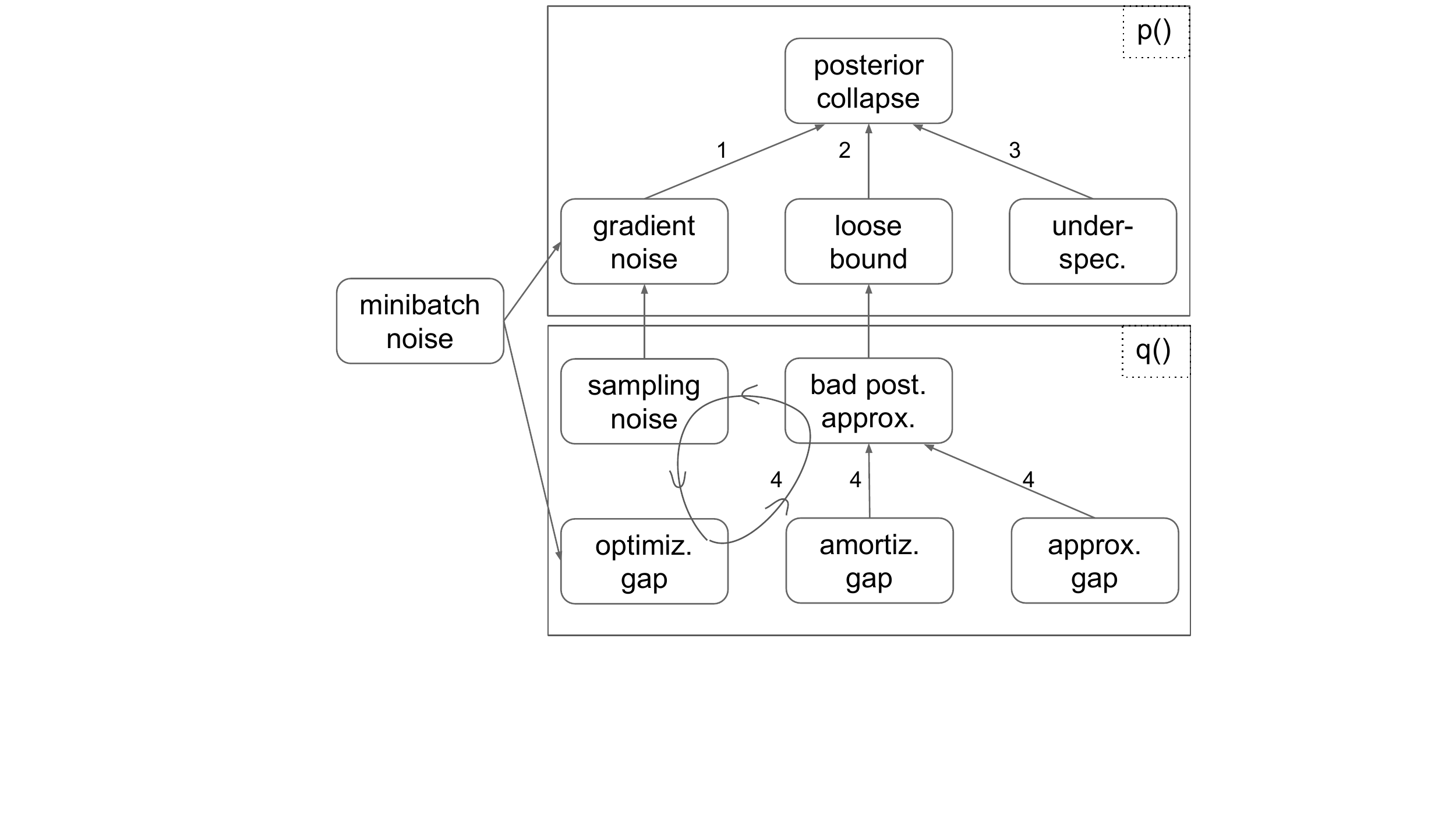}
    \captionof{figure}{\small Causes of posterior collapse in VAEs.
     1.~High variance gradient estimates and SGD's preference for flat minima exerts pressure to reduce variance by ignoring the latents.
     2.~A loose lower bound underestimates the benefits of the latents.
     3.~Underspecification can allow posterior collapse to be the theoretical optimum.
     4.~Posterior collapse reduces or eliminates all of the gaps and the sampling noise.}
    \label{fig:posterior-collapse}
  \end{minipage}
\end{figure}

The issues in what we observe as posterior collapse (or its milder form, the underuse of latents) span a number of causes.
\Cref{fig:posterior-collapse} summarizes the main known contributors to posterior collapse and their interactions.
At a glance, the immediate causes are high-variance gradient estimates, a loose lower bound, and underspecification.
\begin{itemize}
\item \textbf{Gradient noise}:
Optimization can be adversely affected by high-variance gradient estimators \citep{roeder2017sticking,tucker2018doubly} and minibatch noise in stochastic gradient descent \citep{titsias2014doubly}.
Unlike minibatch noise, sampling noise - the variance induced by the latents - can be eliminated by ignoring the latents.
Coupled with SGD's preference for flat minima \citep{hochreiter1995simplifying}, this biases optimization towards posterior collapse.
\item \textbf{Loose lower bound}:
As its posterior-contrastive form \eqref{eq:elbo-posterior-contrastive} suggests, the ELBO is tight if $q(z|x)$ matches $p(z|x)$, and the worse the approximation, the looser the ELBO.
However, perfect posterior approximation might be hard or impossible to achieve, depending on the following factors \citep{cremer2017reinterpreting}.
\setlist{nosep}
\begin{itemize}[leftmargin=0.4cm]
\item[-] The \textbf{approximation gap} is the distance of the true posterior from the variational family $\mathcal{Q}$ if $p(z|x) \not\in \mathcal{Q}$ \citep{DBLP:conf/nips/ShuBZKE18,razavi2018preventing}.
\item[-] The \textbf{amortization gap} is the gap caused by the encoder's inability to represent the optimal $q_{\phi^\star}(z|x)$ (which is $p_\theta(z|x)$) for all $x$ with the same $\phi$ \citep{kim2018semi,DBLP:conf/nips/ShuBZKE18}.
\item[-] The \textbf{optimization gap} is caused the suboptimality of $q_\phi(z|x)$ found by the optimizer relative to $q_{\phi^\star}(z|x)$ \citep{he2018lagging}.
\end{itemize}
Notably, there is a negative feedback loop: optimization difficulties cause bad posterior approximation, which increases the variance induced by sampling the latents, which makes optimization harder.
\item \textbf{Underspecification}: As argued by \citet{alemi2017fixing}, the ELBO is neutral with respect to the mutual information, and even perfect optimization can land anywhere on the rate-distortion curve (the expected likelihood of the data as a function of the mutual information of the data and the latents).
More generally, the optimization task for generative models of the form $p(x,z)= p(x|z)p(z)$ is often underspecified \citep{huszar2017representation}, which we explore in \S\ref{sec:cia-an-pc}.
\end{itemize}

Unfortunately, posterior collapse reduces or eliminates all of the gaps and also the sampling noise, which makes the bound tight.
For VAEs to use their latent variables reliably and optimally, all of the above issues must be addressed.
The direction we take in this paper delegates the problem of dealing with the looseness of the bound to a Monte-Carlo estimator of the marginal likelihood, such as IWAE, and that of reducing gradient noise to a corresponding gradient estimator, such as DReG or VIMCO.
Importantly, in addition to providing tighter bounds, Monte-Carlo estimators employ multiple samples from $q(z|x)$, which benefits our efforts to tackle the issue of underspecification by designing better estimators of the mutual information.

\section{Related Work}

With several interacting issues to disentangle, the body of work on posterior collapse has grown large and disparate.
Many works aim to make the prior or the posterior more flexible to tighten the lower bound \citep{rezende2015variational,kingma2016improved,tomczak2018vae}.
At the same time, \citet{DBLP:conf/nips/ShuBZKE18} argue that while it is tempting to think that the variational familiy $\mathcal{Q}$ and $q_{\phi}(z|x)$ should be as large and as flexible possible, a more restrictive choice can prove useful in performing posterior regularization.
\Citet{kim2018semi} propose reducing the amortization gap by only initializing the value of the latent variables according to the encoder, then optimizing their values separately for each example as in non-amortized variational inference \citep{jordan1999introduction}.
\Citet{he2018lagging} focus on improving the approximation quality of $q_\phi(z|x)$ by taking several optimization steps on $\phi$ for each update of $\theta$ in $p_{\theta}$.
\Citet{yang2017improved} replace an auto-regressive decoder with dilated convolutions to restrict the available context and to thus enforce the use of the latents.
\Citet{van2017neural} introduce categorically distributed latents with a fixed KL cost and replace the prior with the aggregate posterior, which eliminates any pressure to ignore the latents.

Controlling the KL term in \eqref{eq:elbo-prior-contrastive} by various means also received lots of attention.
Downweighting or annealing it during training is common \citep{bowman2015generating,higgins2016beta,sonderby2016ladder,NEURIPS2019_9bdb8b1f,NEURIPS2018_65b0df23}.
\Citet{kingma2016improved} ``flat spot'' the KL term so that small KL values are not penalized.
\Citet{razavi2018preventing} achieve a similar effect by choosing the variational family and the approximate posterior such that there is always an approximation gap.

Several works \citep{alemi2017fixing,zhao2019infovae,mccarthy2019improved,rezaabad2020learning,serdega2020vmi} recognize the issue of underspecification and propose to constrain the mutual information.
All work with single sample VAEs and the representational KL.
In contrast, we propose combining multi-sample Monte-Carlo objectives with  estimates based on the true KL.
As we will see, this results in our mutual information estimates being less tied to the quality of $q(z|x)$, while containing the single-sample, representational KL scenario as a degenerate case.

\section{Conditional Independence Assumptions and Posterior Collapse}
\label{sec:cia-an-pc}

In this section, we explore \emph{why} our models are underspecified to better understand whether constraining the mutual information is a good solution.
More specifically, we ask under what conditional independence assumptions (CIA\footnote{Due to CIA being a collective noun in other contexts, we let its singular form stand also for its plural.}) can posterior collapse be an optimal solution for \emph{any} model to abstract away from finite capacities and optimization difficulties.
The next proposition shows that the answer is in fact trivial: the independence assumptions can be satisfied with a model where the latent and the observable variables are independent and the model matches the data distribution perfectly if and only if the independence assumptions are compatible with the marginal distribution of the data.

In particular, we consider Bayesian networks, given as $p(x,z)=p(z) \prod_{i=1}^n p(x_i|\mathrm{pa}(i),\indcia)$, where $\mathrm{pa}(i)$ denotes the set of parent nodes of $x_i$, i.e. those on which it directly depends, and the first $k$ $x_i$ directly depend also on $z$.

\begin{proposition}
Let $\{x_i\}$ ($i \in [1,n]$) be a partitioning of $x$, and $\mathrm{pa}(i) \subseteq \{x_i\}$.
For any prior $p(z)$, there exists a distribution\footnote{Here we consider the set of all distributions and do not restrict attention to the parametrized family $\{p_\theta\}$.}
$p(x,z)$ satisfying, for all $x$ and $z$,
\begin{enumerate}[(i)]
\item $p(x,z)=p(x)p(z)$ \hspace{0.1em} (posterior collapse);
\item $p(x)=p_D(x)$ \hspace{0.1em} (perfectly modelling the data);
\item $p(x,z)= p(z) \prod_{i=1}^n p(x_i|\mathrm{pa}(i),\indcia)$ for some $k \leqslant n$ \hspace{0.1em} (CIA)
\end{enumerate}
if and only if $p_D(x)=\prod_{i=1}^n p_D(x_i|\mathrm{pa}(i))$ \hspace{0.1em} (the data distribution is compatible with the CIA of (iii)).
\end{proposition}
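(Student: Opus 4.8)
My plan is to prove the two implications separately, and the conceptual key is that condition~(i) --- posterior collapse, i.e.\ $x \indep z$ under $p$ --- makes the conditioning on $z$ in the factors of~(iii) vacuous, so that~(iii) degenerates into a factorization statement about the $x$-marginal alone, which by~(ii) is $p_D$.

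For the ``if'' direction I would assume $p_D(x)=\prod_{i=1}^n p_D(x_i\mid\mathrm{pa}(i))$ and simply exhibit the product distribution $p(x,z):=p_D(x)\,p(z)$. Marginalizing out $z$ yields $p(x)=p_D(x)$, giving~(ii), and then~(i) is immediate. For~(iii) I would observe that, since $x\indep z$ under this $p$ and $\mathrm{pa}(i)\subseteq\{x_j\}$, every conditional satisfies $p(x_i\mid\mathrm{pa}(i),z)=p(x_i\mid\mathrm{pa}(i))=p_D(x_i\mid\mathrm{pa}(i))$; so for any $k$ (for instance $k=0$) the right-hand side of~(iii) collapses to $p(z)\prod_{i=1}^n p_D(x_i\mid\mathrm{pa}(i))=p(z)\,p_D(x)=p(x,z)$.

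For the ``only if'' direction I would take any $p(x,z)$ and any $k\le n$ satisfying (i)--(iii) and run the same collapse in reverse. From~(i), $z$ is independent of $x$, hence of each pair $(x_i,\mathrm{pa}(i))$, which forces $p(x_i\mid\mathrm{pa}(i),z)=p(x_i\mid\mathrm{pa}(i))$ for $i\le k$; for $i>k$ the factor in~(iii) is already $z$-free since $\indcia=0$. Substituting these into~(iii) and integrating (or summing) over $z$ leaves a product that does not depend on $z$, so $p(x)=\prod_{i=1}^n p(x_i\mid\mathrm{pa}(i))$; since the $x$-marginal of $p$ equals $p_D$ by~(ii), these conditionals coincide with those of $p_D$, and the desired identity $p_D(x)=\prod_{i=1}^n p_D(x_i\mid\mathrm{pa}(i))$ follows.

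There is no deep obstacle here --- the whole argument is a few lines --- so the only thing to be careful about is bookkeeping: reading the notation $\indcia$ correctly so that~(iii) really does split into a $z$-dependent block ($i\le k$) and a $z$-free block ($i>k$), and chaining the elementary conditional-independence identities that turn $x\indep z$ into $p(x_i\mid\mathrm{pa}(i),z)=p(x_i\mid\mathrm{pa}(i))$. If full rigour were wanted one would also note that these manipulations on conditional densities/mass functions implicitly assume the relevant conditioning events have positive probability (or should be phrased via regular conditional distributions on $\supp p_D$), a point I would treat informally, as it does not affect the substance.
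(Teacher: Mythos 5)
Your proof is correct and follows essentially the same route as the paper: the ``if'' direction exhibits $p(x,z)=p_D(x)\,p(z)$ with $k=0$, and the ``only if'' direction uses (i) to strip $z$ from the conditionals in (iii), obtains $p(x)=\prod_{i=1}^n p(x_i|\mathrm{pa}(i))$, and transfers this factorization to $p_D$ via (ii). Your closing remark about positive-probability conditioning events is a minor rigor point the paper also leaves implicit.
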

\begin{proof}
If $p_D(x)=\prod_{i=1}^n p_D(x_i|\mathrm{pa}(i))$, then it is easy to check that $p(x,z)=p_D(x)p(z)$ satisfies the conditions (i)--(iii) with $k=0$.
To prove the other direction, assume (i)--(iii) hold.
Then (i) and (iii) imply that $p(x)=\prod_{i=1}^n p(x_i|\mathrm{pa}(i),\indcia)$ for all $z$.
Also, from (i) it follows that $(\textrm{pa}(i), x_i) \indep z$, and hence for all $z$, $p(x_i|\mathrm{pa}(i),z)=p(x_i,\mathrm{pa}(i)|z)/p(\mathrm{pa}(i)|z)=p(x_i|\mathrm{pa}(i))$, giving $p(x)=\prod_{i=1}^n p(x_i|\mathrm{pa}(i))$.
Finally, from (ii) we have that the two joints, $p_D(x)$ and $p(x)$ are the same, therefore their marginals and conditionals are the same too, which implies $p_D(x)=\prod_{i=1}^n p_D(x_i|\mathrm{pa}(i))$.
\end{proof}

The proposition says that just by specifying CIA for otherwise dependent parts of the data, any model that suffers posterior collapse (in the sense that $x$ and $z$ are independent) will be suboptimal in terms of the model evidence $p(x)$.
This gives a degree of assurance that given such a structure, a well-optimized model with high enough capacity will not suffer posterior collapse.
Latent variable image models, that model pixels or patches conditionally independently given the latents fall into this category, which explains while posterior collapse is not so prevalent in that case.

Conversely, in theory and in the absence of CIA, there is a trivial latent variable model $p(x,z)=p_D(x)p(z)$ which is optimal in terms of the marginal likelihood $p(x)$ but does not use the latents.
The prototypical example for this case is auto-regressive likelihoods, such as RNN language models.

To summarize, \emph{CIA must be made} to guarantee that latents are used given powerful enough models and inference methods.
On the other hand, lacking the necessary CIA, we can still bias solutions by changing the objective.
One such change to compensate for the lack of model structure is adding a constraint on mutual information.

\section{Mutual Information Augmented Objectives}
\label{sec:mi-objective}

Mutual information is often used as a measure of latent variable usage in trained models or as part of the training objective to control latent usage and reduce underspecification.
First, as a measure of latent usage, it is a diagnostic of the inference method.
In this role, it is but a proxy for the generalization ability of the model or for the performance on down-stream tasks.
Second, as a constraint during training, it can be seen as compensating for the lack of structure in the model.
However, its role is not essential in either of these: evaluating representations without the down-stream tasks is fraught with peril, and equipping the model with structure sounds a rather more appealing direction to pursue.
Still, finding a good model structure is easier said than done, and in practice mutual information is useful both as a diagnostic for inference and as a tool for model specification.
Thus, we augment the marginal likelihood objective with a mutual information term and maximize
\begin{align}
\label{eq:mi-obj}
\E_{p_D(x)}\ln p(x) + \lambda I_p(X,Z),
\end{align}
where $p_D(x)$ is the data distribution and $\lambda \in \Rb, \lambda
\geqslant 0$.
Throughout, we assume that $I_p(X,Z)$ is bounded, which is satisfied by any model for which the optimization of $p(x)$ is well-posed and hence its $p(x|z)$ is bounded for all $z$.
Also note that for any given $I_p(X,Z)$, the model achieves its global maximum when $p(x)=p_D(x)$, and we can reasonably expect models that suffer from posterior collapse to effectively balance data fit and mutual information.

Motivated by the identity $I_p(X,Z)=\E_{p(x)} \KL(p(z|x)\|p(z))$, the mutual information term can be estimated by the average KL divergence.
While this true KL is hard to compute in general due to the intractable posterior, the availability the variational posterior offers the compelling alternative of estimating $I_p(X,Z)$ by
\begin{align*}
I_{p,q}(X,Z) \coloneqq \E_{p_D(x)} \KL\big(q(z|x)\|p(z)\big).
\end{align*}
If we plan to use the representations obtained from the variational posterior $q(z|x)$ on some task, then $I_{p,q}$ is a natural quantity to track \citep{zhao2019infovae,rezaabad2020learning}.
However, in this work, our primary concern lies not with artifacts of variational inference but with the model $p(x,z)$ and its ability to capture information in the latents.
Moreover, employing $I_{p,q}$ as a proxy objective is problematic because it overestimates $I_p$ as $q$ tends to underestimate the variance of the true posterior.\footnote{For VAEs, this follows from the properties of the KL divergence, while for Monte-Carlo objectives in general, it follows from how the looseness of the lower bound relates to the variance of the estimator \citep{maddison2017filtering}.}
Even worse, with $I_{p,q}$ the quality of $q(z|x)$ would influence our conclusions about latent variable usage.
Monte-Carlo objectives, whose $q(z|x)$ in itself is no longer a direct approximation to $p(z|x)$ \citep{mnih2016variational}, exacerbate the problem with latent usage estimation.
Experimentally, we found that, for models trained with Monte-Carlo objectives, $I_{p,q}$ can wildly under- or overestimate $I_p$.
Thus, to form a better estimate of $I_p$, we replace $q(z|x)$ with $p(z|x)$ in $I_{p,q}$:
\begin{align}
\label{eq:cross-mi}
I_{p_D}^p(X,Z) \coloneqq \E_{p_D(x)} \KL\big(p(z|x)\|p(z)\big).
\end{align}
This ``cross'' mutual information $I_{p_D}^p$ is the average true KL over the data distribution $p_D$.
In the typical doubly stochastic optimization setting \citep{titsias2014doubly}, averaging over $p_D(x)$ instead of $p(x)$ allows us to estimate the mutual information based on the current minibatch without sampling from the model.
The price for efficiency is a possible generalization problem: the average KL may be different over $p_D(x)$ and $p(x)$.
Generalization may eventually become a pressing issue, but as our experiments will demonstrate (\S\ref{sec:experiments}), we first have to deal with underfitting.
In addition to being expedient, as we will show in \S\ref{sec:connection-to-the-representational-kl} and \S\ref{sec:connection-to-the-beta-vae}, this choice makes the single-sample case of our estimators correspond to the representational KL and the \betavae{} \citep{higgins2016beta}.


\begin{definition}[Mutual information augmented objective]
The mutual information augmented objective, which is a  combination of the usual marginal likelihood objective and $I_{p_D}^p$, is defined as
\begin{align}
\label{eq:cross-mi-obj}
\mathcal{O}(\lambda) = \E_{p_D(x)} \ln p(x) + \lambda I_{p_D}^p(X,Z).
\end{align}
Furthermore, the pointwise version of the objective is defined as
\begin{equation}
\label{eq:cross-mi-obj-x}
\mathcal{O}(\lambda,x) = \ln p(x) + \lambda \KL\big(p(z|x)\|p(z)\big),
\end{equation}
which in turn satisfies $\mathcal{O}(\lambda) = \E_{p_D(x)} \mathcal{O}(\lambda,x)$.
\end{definition}
In the following, we propose estimators of $\mathcal{O}(\lambda,x)$ and the true KL within it to estimate $\mathcal{O}(\lambda)$ and the mutual information in a manner suitable for the doubly stochastic optimization setting.

\subsection{The KL Objective}

To find the maximum of $\mathcal{O}(\lambda)$ in $\theta$, both of its terms must be estimated well.
We delegate the task of estimating the marginal log-likelihood $\ln p(x)$ to a ``base'' Monte-Carlo estimator of the form $\hat{S}^K(x,z_{1:K}) = \ln\left( \tfrac{1}{K} \sum_{i=1}^K f(x, z_i)\right)$, where $z_{1:K}=(z_1, \ldots, z_K)$ are independent samples from the proposal distribution $q(z|x)$, and $f$ is some function of the observable and latent variables \citep{mnih2016variational}.
Ideally, $\hat{S}^K$ is chosen to have low bias and low variance, allowing optimization to strike a better balance with mutual information.
For our first contribution, the KL objective, we rewrite the true KL in a form more amenable to importance sampling:
\begin{align*}
\KL\big(p(z|x)\|p(z)\big)
&= \E_{p(z|x)} \ln \frac{p(z|x)}{p(z)}
 = \E_{p(z|x)} \big[ \ln p(x|z) \big] - \ln p(x)\\
&= \E_{q(z|x)} \bigg[ \frac{p(z|x)}{q(z|x)} \ln p(x|z) \bigg] - \ln p(x)\\
&= \frac{1}{p(x)} \E_{q(z|x)} \bigg[ \frac{p(x,z)}{q(z|x)} \ln p(x|z) \bigg]
   - \ln p(x).
\end{align*}
Plugging this into the definition of $\mathcal{O}(\lambda,x)$ in \eqref{eq:cross-mi-obj-x} and grouping the $\ln p(x)$ terms, which can be estimated with the base Monte-Carlo estimator $\hat{S}^K$, we get
\begin{align*}
\mathcal{O}(\lambda,x)
&= (1-\lambda)\ln p(x) +
   \lambda \frac{1}{p(x)} \E_{q(z|x)} \frac{p(x,z)}{q(z|x)} \ln p(x|z).
\end{align*}
This leaves the task of estimating the second term
\begin{align*}
\numberthis\label{eq:kl-plus-is}
\frac{1}{p(x)} \E_{q(z|x)} \frac{p(x,z)}{q(z|x)} \ln p(x|z),
\end{align*}
in which $p(x)$ can be estimated with a simple $K$-sample importance sampling estimator
\begin{align}
\label{eq:p-hat-k}
\hat{p}^K(x, z_{1:K}) \coloneqq \frac{1}{K} \sum_{i=1}^K \frac{p(x,z_i)}{q(z_i|x)}.
\end{align}
However, combining $\hat{p}^K$ with an importance sampling estimate of the expectation in \eqref{eq:kl-plus-is} using different samples begot very high variance in preliminary experiments.
Instead, we approximate \eqref{eq:kl-plus-is} with the self-normalized importance sampling estimator
\begin{align*}
\numberthis\label{eq:kl-estimator}
\hat{U}^K(x,z_{1:K})
 \coloneqq \hat{p}^K(x, z_{1:K})^{-1}
           \frac{1}{K} \sum_{i=1}^K \frac{p(x,z_i)}{q(z_i|x)} \ln p(x|z_i),
\end{align*}
which uses the same samples for estimating $1/p(x)$ and the expectation.
This leads to our first estimator for $\mathcal{O}(\lambda,x)$.

\begin{definition}[KL objective]
Let $\hat{S}^K$ be any $K$-sample Monte-Carlo estimator of $\ln p(x)$.
Then the augmented objective $\mathcal{O}(\lambda,x)$ can be estimated by
the KL objective
\begin{align}
\label{eq:kl-obj}
\mathcal{O}_{\KL}(\hat{S}, K, \lambda,x)
= \E_{z_{1:K}\sim q(z|x)} \hat{O}_{\KL}(\hat{S}, K, \lambda, x, z_{1:K}),
\end{align}
where
\begin{align}
\label{eq:kl-obj-estimator}
\mathcal{\hat{O}}_{\KL}(\hat{S}, K, \lambda, x, z_{1:K})
&= (1-\lambda)\hat{S}^K(x,z_{1:K}) + \lambda \hat{U}^K(x,z_{1:K}).
\end{align}
\end{definition}
Note that $\mathcal{\hat{O}}_{\KL}(\hat{S}^K, K, \lambda, x, z_{1:K})$ uses the same samples $z_{1:K} \sim q(z|x)$ to estimate both terms of the augmented objective \eqref{eq:cross-mi-obj-x}.
Grouping the terms differently, we can separate out the estimate of the KL:
\begin{align}
\label{eq:kl-obj-estimator-alt}
\mathcal{\hat{O}}_{\KL}(\hat{S}, K, \lambda, x, z_{1:K})
&= \underbrace{\hat{S}^K(x,z_{1:K})}_{\approx \ln p(x)} +
   \lambda \underbrace{\big( \hat{U}^K(x,z_{1:K}) - \hat{S}^K(x,z_{1:K}) \big)}_{\approx \KL(p(z|x)\|p(z))}.
\end{align}
We assume throughout that the estimators $\hat{S}^K(x,z_{1:K})$ and $\hat{U}^K(x,z_{1:K})$ have finite variance.
This implies the following properties of the estimators $\mathcal{\hat{O}}_{\KL}(\hat{S}^K, K, \lambda, x)$ of $\mathcal{O}(\lambda, x)$ and $\hat{U}^K(x,z_{1:K}) - \hat{S}^K(x,z_{1:K})$ of $\KL(p(z|x)\|p(z))$:

\begin{proposition}[properties of the KL objective]
\label{prop:properties of the KL objective} \mbox{}
\begin{enumerate}[(i)]
\item If $\hat{S}^K$ converges in probability or almost surely to $\ln p(x)$ as $K \to \infty$, then so do $\mathcal{\hat{O}}_{\KL}$ and $\hat{U}^K - \hat{S}^K$ to
$\mathcal{O}(\lambda, x)$ and $\KL(p(z|x)\|p(z))$, respectively.
\item If $\E_{z_{1:K} \sim q(z|x)}\hat{S}^K(x,z_{1:K}) \leqslant \ln p(x)$, then $\E_{z_{1:K} \sim q(z|x)}[\hat{U}^K(x,z_{1:K}) - \hat{S}^K(x,z_{1:K})] \geqslant \KL(p(z|x)\|p(z))$, that is, the estimator is biased upward.
\item The bias of the self-normalized importance sampling estimator $\hat{U}^K$ is bounded if $p(x,z)$ is bounded as shown in Proposition 7 of \citet{metelli2020importance}.
\item The variance of $\hat{U}^K$ decays with $K$, but unlike in non-normalized importance sampling, for any given $K$, there is no proposal distribution with which the variance is zero unless $\hat{U}^K$ is constant for all $z_{1:K}$ with probability 1.
\end{enumerate}
\end{proposition}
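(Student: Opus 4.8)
The plan is to treat the four parts in order, using the identity derived just above the statement, $\KL(p(z|x)\|p(z)) = V(x) - \ln p(x)$ with $V(x) := \tfrac{1}{p(x)}\,\E_{q(z|x)}\!\big[\tfrac{p(x,z)}{q(z|x)}\ln p(x|z)\big] = \E_{p(z|x)}[\ln p(x|z)]$, together with the observation that, writing the importance weights as $w_i := p(x,z_i)/q(z_i|x)$, the estimator $\hat{U}^K = \big(\tfrac1K\sum_{i} w_i \ln p(x|z_i)\big)\big/\big(\tfrac1K\sum_{i} w_i\big)$ is exactly the self-normalised importance sampling estimator of $V(x)$, sharing its samples and weights with $\hat{p}^K$.

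Part (i). First show $\hat{U}^K \to V(x)$ in the relevant mode. Applying the strong (resp.\ weak) law of large numbers separately to numerator and denominator, $\tfrac1K\sum_{i} w_i \ln p(x|z_i) \to \E_{q(z|x)}[w\ln p(x|z)] = p(x)\,V(x)$ and $\tfrac1K\sum_{i} w_i \to \E_{q(z|x)}[w] = p(x) > 0$, the required integrability being part of the standing assumptions. Since $(a,b)\mapsto a/b$ is continuous at $(p(x)V(x),p(x))$ because $p(x)>0$, the continuous mapping theorem (almost-sure case) or Slutsky's theorem (in-probability case) gives $\hat{U}^K \to V(x)$. Combining with the hypothesis $\hat{S}^K \to \ln p(x)$ yields $\hat{U}^K - \hat{S}^K \to V(x) - \ln p(x) = \KL(p(z|x)\|p(z))$, and $\hat{\mathcal{O}}_{\KL} = (1-\lambda)\hat{S}^K + \lambda\hat{U}^K \to (1-\lambda)\ln p(x) + \lambda V(x) = \ln p(x) + \lambda\,\KL(p(z|x)\|p(z)) = \mathcal{O}(\lambda,x)$.

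Part (ii). Decompose the bias of the KL estimator,
\[
\E_{z_{1:K}}\!\big[\hat{U}^K - \hat{S}^K\big] - \KL(p(z|x)\|p(z)) = \big(\E_{z_{1:K}}[\hat{U}^K] - V(x)\big) + \big(\ln p(x) - \E_{z_{1:K}}[\hat{S}^K]\big).
\]
The second summand is non-negative by the hypothesis $\E[\hat{S}^K]\le \ln p(x)$, so it remains to show $\E_{z_{1:K}}[\hat{U}^K] \ge V(x)$, i.e.\ that the self-normalised importance sampling estimator is biased upward relative to its target. I expect this to be the main obstacle, since the direction of the bias of a ratio estimator is delicate; the argument should exploit that the numerator and denominator of $\hat{U}^K$ carry the same exact weights $w_i$ — equivalently, via the identity $\E[\hat{U}^K] = V(x) - \mathrm{Cov}\big(\tfrac1K\sum_{i} w_i,\,\hat{U}^K\big)/p(x)$, it reduces to showing that the normaliser is non-positively correlated with the weighted average — or else follow from the bias bounds invoked in part (iii).

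Parts (iii) and (iv). For (iii) the plan is simply to verify that boundedness of $p(x,z)$, together with the standing finite-variance assumptions on $\hat{S}^K$ and $\hat{U}^K$, supplies the hypotheses of Proposition~7 of \citet{metelli2020importance}, and then cite that bound verbatim. For (iv), the $O(1/K)$ decay of $\var(\hat{U}^K)$ follows from the standard delta-method analysis of self-normalised (ratio) importance sampling estimators under the finite-second-moment assumption, or from $\hat{U}^K \to V(x)$ almost surely (part (i)) plus uniform integrability of $(\hat{U}^K)^2$. For the impossibility of a zero-variance proposal at fixed $K$: write $\hat{U}^K = \sum_{i} \bar{w}_i \ln p(x|z_i)$ with $\bar{w}_i = w_i/\sum_{j} w_j \ge 0$ summing to one, so $\hat{U}^K$ is a convex combination of the $\ln p(x|z_i)$ and always lies in $[\min_i \ln p(x|z_i),\,\max_i \ln p(x|z_i)]$. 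If $\ln p(x|\cdot)$ is not $q$-almost-surely constant, pick $a<b$ with $q(\ln p(x|z)\le a)>0$ and $q(\ln p(x|z)\ge b)>0$; the events that all $K$ samples satisfy $\ln p(x|z_i)\le a$, respectively $\ln p(x|z_i)\ge b$, have positive probability and force $\hat{U}^K\le a$, respectively $\hat{U}^K\ge b$, so $\var(\hat{U}^K)>0$. Conversely, if $\ln p(x|\cdot)$ is $q$-a.s.\ constant then $\hat{U}^K$ equals that constant for every $z_{1:K}$ with probability one; and since any valid proposal must dominate $p(z|x)\propto p(x|z)p(z)$, this forces $p(x|z)$ to be $p(z|x)$-a.s.\ constant, i.e.\ the same degenerate situation holds whatever the proposal. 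Hence no proposal makes the variance vanish outside this degenerate case.
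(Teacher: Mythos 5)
Your parts (i), (iii) and (iv) are fine, and indeed more detailed than the paper itself, whose entire proof is a one-line appeal to standard properties of self-normalized importance sampling (Owen's book) together with the citation of Proposition~7 of Metelli et al.\ for (iii). Your LLN-plus-continuous-mapping argument for (i) and your convex-combination argument for the no-zero-variance claim in (iv) are the standard routes and are correct, modulo the usual domination assumption $q(\cdot|x)\gg p(\cdot|x)$ (needed so that the all-zero-weight edge case cannot occur and the ratio is defined) and the integrability you already flag.

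The genuine gap is part (ii), and it is exactly the step you defer. Your decomposition reduces the claim to $\E_{z_{1:K}}[\hat U^K]\geqslant V(x)$ with $V(x)=\E_{p(z|x)}[\ln p(x|z)]$, i.e.\ to the assertion that the self-normalized estimator is upward biased for its target --- equivalently, via your covariance identity, that $\mathrm{Cov}\bigl(\hat U^K,\tfrac1K\sum_i w_i\bigr)\leqslant 0$. This is not a general property of self-normalized importance sampling and it fails in this very setting: already for $K=1$ one has $\hat U^1(x,z_1)=\ln p(x|z_1)$, so $\E[\hat U^1]=\E_{q(z|x)}[\ln p(x|z)]$, which lies \emph{below} $V(x)$ whenever $q$ is flatter than the true posterior (e.g.\ $q(z|x)=p(z)$), because $p(z|x)\propto p(z)p(x|z)$ tilts mass toward large $p(x|z)$; correspondingly $\mathrm{Cov}(\ln p(x|z),w)=\mathrm{Cov}(f,e^{f})>0$ there, and the first-order bias of $\hat U^K$ for general $K$, proportional to $\E_q\bigl[w^2\bigl(V(x)-\ln p(x|z)\bigr)\bigr]$, has the same negative sign. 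So the route ``second summand $\geqslant 0$ by hypothesis, first summand $\geqslant 0$ by upward bias of SNIS'' cannot be completed, and the bias bound of Metelli et al.\ (a magnitude bound, not a sign) does not rescue it. Worse, the same example shows the difficulty is not merely with your decomposition: for $K=1$ with the ELBO-type $\hat S^1=\ln\frac{p(x,z_1)}{q(z_1|x)}$ (which satisfies $\E[\hat S^1]\leqslant\ln p(x)$) one gets $\E[\hat U^1-\hat S^1]=\KL(q(z|x)\|p(z))$, which is smaller than $\KL(p(z|x)\|p(z))$ for such a $q$; hence any correct argument for (ii) must invoke more than the stated hypothesis --- some coupling between the bias of $\hat S^K$ and the particular proposal --- and neither your sketch nor the paper's citation supplies that.
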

\vspace{-1em}
\begin{proof}
These follow from  the properties of self-normalized importance sampling \citep{art2013mcbook} except where noted.
\end{proof}

Note that although the objective is biased upwards, its bias is decreased with more samples and is bounded if $p(x,z)$ is bounded.
It can be assumed that $p(x,z)$ is bounded, else the optimization of $\ln p(x)$ is ill-posed even without the mutual information term.
Consequently we may be able to rely on $\lambda$ to counteract the bias thus bounded.
Importantly, the computation of $\hat{U}^K$ imposes minimal overhead as it needs to evaluate only $p(x|z_i)$, $p(z_i)$ and $q(z_i|x)$: the same quantities and same $z_i$ as needed for computing $\hat{S}^K$.
Referring back to \eqref{eq:mi-obj}, we argue that this KL objective allows for effective interpolation between fitting the data and capturing information in the latent variables.

\subsection{The Rényi Objective}
\label{sec:renyi-objective}

In this section, we introduce a second estimator of the augmented objective $\mathcal{O}(\lambda, x)$ \eqref{eq:cross-mi-obj-x}, based on the Rényi divergence, to address a potential issue with the KL objective's estimate \eqref{eq:kl-obj-estimator}.
This issue lies in the fact that the KL objective linearly combines two estimators ($\hat{S}^K$ and $\hat{U}^K$) of different quantities.
How their biases and variances relate deserves some consideration.
As we have seen, with $\hat{S}^K$ that underestimate $\ln p(x)$ (e.g. IWAE), $\hat{U}^K-\hat{S}^K$ overestimates the true KL.
Luckily, both biases can be reduced with more samples.

However, taking more samples may not help if the two estimators have very different variances, in which case optimizing the objective may be difficult.
This issue could be addressed by designing a $\hat{U}^K$ for every $\hat{S}^K$, but this would limit the applicability of our method in practice.
Instead, we apply $\hat{S}$ not only to estimate $\ln p(x)$ but \emph{a second time} too to estimate the Rényi divergence, itself a biased estimate of the true KL.
As we will see later, this works surprisingly well in practice despite the presence of the bias.

The Rényi divergence between two distributions $f(x)$ and $g(x)$ is defined as $D_\alpha(f\|g)=\frac{1}{\alpha-1}\ln \E_{g(x)} f(x)^\alpha g(x)^{-\alpha}$, where $\alpha$ is a positive real number.
For $\alpha<1$, $D_\alpha(f\|g) \leqslant \KL(f\|g)$, while for $\alpha>1$, $D_\alpha(f\|g) \geqslant \KL(f\|g)$.
Since $\lim_{\alpha \to 1} D_\alpha(f\|g) = \KL(f\|g)$,  $D_\alpha(f\|g)$ can approximate $\KL(f\|g)$ arbitrarily closely when $\alpha$ is sufficiently close to $1$.
This latter property motivates the use of $D_\alpha(p(z|x)\|p(z))$ as an approximation to the true KL.
To construct an estimator, we first rewrite the Rényi divergence as:
\begin{align*}
(\alpha-1)D_\alpha\big(p(z|x)\|p(z)\big)
&= \ln \E_{p(z)} \frac{p(z|x)^\alpha}{p(z)^\alpha}\\
&= \ln \E_{p(z)} \bigg[ \frac{p(z|x)^\alpha p(x)^\alpha}{p(z)^\alpha} \bigg]
   - \alpha \ln p(x)\\
&= \ln \E_{p(z)} \big[ p(x|z)^\alpha \big] - \alpha \ln p(x)\\
\numberthis\label{eq:renyi-rewrite}
&= \ln p^\alpha(x) - \alpha \ln p(x),
\end{align*}
where $p^\alpha(x)\coloneqq\E_{p(z)} p(x|z)^\alpha$.
We note in passing that $p^\alpha(x)$ has an intuitive interpretation, particularly when $\alpha \in \Nb$.
Consider the task of modelling the distribution of discrete data over some discrete set $\cX$ duplicated $\alpha$ times, that is $p^\alpha_D(x, \ldots, x) \coloneqq p_D(x)$.
That is, $p^\alpha_D$ is a probability distribution over $\cX^\alpha$, whereas $p_D$ is over $\cX$, and $p^\alpha_D(x_1, \ldots, x_\alpha)=0$ unless all $x_i$ are identical.
On this ``new'' task, $\alpha \ln p(x) = \ln p(x)^\alpha$ acts as the uninformed baseline, in which a separate set of latents is used for each branch $p(x_i|z)$, thus the cost of information in the latents must be paid $\alpha$ times.
This means that, based on its alternative form $\ln p^\alpha(x) - \alpha \ln p(x)$ in \eqref{eq:renyi-rewrite}, we can interpret the Rényi divergence as a measure of how much better the $\alpha$-duplicated model $p^\alpha(x)$ does at modelling the duplicated data compared to the worst-case solution $p(x)^\alpha$, which does not use the latents to amortize the cost of encoding the data multiple times.

We now derive a biased approximation to $\mathcal{O}(\lambda,x)$:
\begin{align*}
\mathcal{O}(\lambda,x)
&= \ln p(x) + \lambda \KL\big(p(z|x)\|p(z)\big)\\
&\approx \ln p(x) + \lambda D_\alpha\big(p(z|x)\|p(z)\big)\\
&= \ln p(x) + \frac{\lambda}{\alpha-1} \big(\ln p^\alpha(x) - \alpha \ln p(x)\big)\\
&= \frac{\lambda}{\alpha-1} \ln p^\alpha(x)
   - \bigg(\frac{\lambda\alpha}{\alpha-1}-1\bigg) \ln p(x).
\end{align*}

\begin{definition}[Rényi objective]
Let $\lambda, \alpha>0$, and let $\hat{S}^K(x,z_{1:K})$ and $\hat{S}^K_\alpha(x,z_{1:K})$ be $K$-sample Monte-Carlo estimators for $\ln p(x)$ and $\ln p^\alpha(x)$, respectively.
Then the augmented objective $\mathcal{O}(\lambda,x)$ can be estimated by
the Rényi objective
\begin{align}
\label{eq:renyi-obj}
\mathcal{O}_{R}(\hat{S}, K, \lambda, \alpha, x)
= \E_{z_{1:K}\sim q(z|x)} \mathcal{\hat{O}}_{R}(\hat{S}, K, \lambda, \alpha, x, z_{1:K}),
\end{align}
where
\begin{align}
\label{eq:renyi-obj-estimator}
\mathcal{\hat{O}}_{R}(\hat{S}, K, \lambda, \alpha, x, z_{1:K})
&= \frac{\lambda}{\alpha-1} \hat{S}^K_\alpha(x,z_{1:K})
   - \bigg(\frac{\lambda\alpha}{\alpha-1}-1\bigg) \hat{S}^K(x,z_{1:K}).
\end{align}
\end{definition}
Separating out the estimate of the Rényi divergence yields the alternative form
\begin{align}
\label{eq:renyi-obj-estimator-alt}
\mathcal{\hat{O}}_{R}(\hat{S}, K, \lambda, \alpha, x, z_{1:K})
&= \underbrace{\hat{S}^K(x, z_{1:K})}_{\approx \ln p(x)}
   + \lambda \underbrace{ \bigg( \frac{1}{\alpha-1}
                                 \Big( \hat{S}^K_\alpha(x,z_{1:K}) -
                                       \alpha \hat{S}^K(x,z_{1:K}) \Big)
                          \bigg)
                        }_{\approx D_\alpha(p(z|x)\|p(z))}.
\end{align}

Our goal was to address the mismatched biases and variances of the KL objective's $\hat{S}^K$ and $\hat{U}^K$.
Having eliminated $\hat{U}^K$, we are left with only $\hat{S}^K$ and $\hat{S}^K_\alpha$, estimating two closely related quantities, $\ln p(x)$ and $\ln p^\alpha(x)$.
Note that $\hat{S}^K_\alpha$ can be obtained with a slight modification of $\hat{S}^K$, as explained in the next section.
In \S\ref{sec:experiments}, we validate experimentally that the benefits this scheme affords outweigh the obvious drawback of additional bias in the Rényi objective.

\subsubsection{Estimating \texorpdfstring{$\mathbf{p^\alpha(x)}$}{palpha(x)} with IWAE}

Since $\ln p(x) = \ln\left(\E_{p(z)} p(x|z)\right)$ and $\ln p^\alpha(x) = \ln \left(\E_{p(z)} p(x|z)^\alpha\right)$, if $\hat{S}^K(x,z_{1:K})$ is computed explicitly in terms of $p(x|z)$, then $\hat{S}^K_\alpha(x,z_{1:K})$ can be derived simply by replacing $p(x|z)$ with $p(x|z)^\alpha$ in $\hat{S}^K(x, z_{1:K})$.
All base estimators considered in this paper have this property and we elucidate the derivation through the example of the IWAE.

The ELBO of variational autoencoders can be rather loose, and the variance of the resulting approximate posterior is usually smaller than that of the true posterior.
To tackle these issues, \cite{burda2015importance} proposed the importance weighted autoencoder (IWAE).
Whereas the ELBO is single-sample, the IWAE bound is based on $K \in \Nb$ samples:
\begin{align*}
\ln p(x)
= \ln \E_{p(z)} p(x|z)
&= \ln \E_{\substack{z_1,\ldots, z_K\\ \sim q(z|x)}}
  \frac{1}{K} \sum_{i=1}^K \frac{p(x|z_i)p(z_i)}{q(z_i|x)}\\
&\geqslant \E_{\substack{z_1,\ldots, z_K\\ \sim q(z|x)}}
  \ln \frac{1}{K} \sum_{i=1}^K \frac{p(x|z_i)p(z_i)}{q(z_i|x)}.
\end{align*}

In importance sampling terms, $p(x|z)$ is the integrand, the function whose expectation we want to compute with respect to $p(z)$.
While $p(x|z)$ here is a probability mass function, importance sampling does not require this.
In fact, we can estimate the expectation of $p(x|z)^\alpha$ analogously:
\begin{align*}
\numberthis
\label{eq:power-iwae-bound}
\ln p^\alpha(x)
&\geqslant \E_{\substack{z_1,\ldots, z_K\\ \sim q(z|x)}}
  \ln \frac{1}{K} \sum_{i=1}^K \frac{p(x|z_i)^\alpha p(z_i)}{q(z_i|x)}.
\end{align*}
To estimate the expectation in the above lower bound, we can recombine quantities already computed for the base estimator with a negligible computational overhead.

Note that there is an optimal proposal distribution  $q_{\text{opt}}(z|x)=p(x|z)^\alpha p(z) / p^\alpha(x)$ that leads to exactly computing $\ln p^\alpha(x)$ (i.e. estimating it with zero bias and variance).
On the other hand, the other term in the Rényi objective is best estimated using, in general, a different proposal distribution.
One solution would be to apply separate proposal distributions, another is to choose $\lambda$ and $\alpha$ such that in $\ln p(x) + \lambda D_\alpha$ the $\ln p(x)$ term is cancelled out, which we explore briefly in \S\ref{sec:power-objective} below.

\subsubsection{Other Estimators}

In the interest of space, we omit re-derivations of further estimators of $\ln p^\alpha(x)$ and their gradient estimators.
As in the IWAE case, all we need to show is that the estimators do not depend on properties of $p(x|z)$ that $p^\alpha(x|z)$ doesn't have.
This is true for the estimators used in our experiments: REINFORCE \citep{williams1987class,mnih2014neural}, VIMCO \citep{mnih2016variational}, STL \citep{roeder2017sticking} and DReG \citep{tucker2018doubly}.

\subsection{The Power Objective}
\label{sec:power-objective}

Notice that, in the Rényi estimator \eqref{eq:renyi-obj-estimator}, if $\lambda$ is set to $(\alpha-1)/\alpha$, then the coefficient of the $\ln p(x)$ term becomes zero, yielding
\begin{align*}
\numberthis\label{eq:power-obj}
\mathcal{\hat{O}}_P(\hat{S}, K, \alpha, x)
= \alpha^{-1} \hat{S}^K_\alpha(x,z_{1:K}),
\end{align*}
which we call the \emph{power objective}.
Note that for optimization the constant $\alpha^{-1}$ can be dropped from the objective and that if $\alpha=1$, the power objective is equal to the log-likelihood $\ln p(x)$.
Next we show that if $\alpha>1$, maximizing $p^\alpha(x)$ optimizes a lower bound on $p(x)$, and this lower bound is tight when the latents fully determine the observables.
\begin{restatable}{proposition}{proppowerlowerbound}
\label{prop:power-lower-bound}
Assume that $X$ is concentrated on the countable set $\cX$, $\alpha>1$, and let $x \in \cX$ be arbitrary. Then $p^\alpha(x) \leqslant p(x)$ with equality for all $x \in \cX$ if and only if $H_p(X|Z)=0$.
Furthermore, $(p(x))^\alpha \leqslant p^\alpha(x)$ and $\ln p^\alpha(x) - \alpha \ln p(x) = (\alpha-1)D_\alpha(p(z|x)\|p(z))$.
\end{restatable}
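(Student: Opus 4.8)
The three assertions are essentially independent; I would prove the last two quickly and reserve the care for the first.

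The identity $\ln p^\alpha(x) - \alpha\ln p(x) = (\alpha-1)D_\alpha(p(z|x)\|p(z))$ is exactly the rewriting already performed in \eqref{eq:renyi-rewrite}: by Bayes' rule $p(z|x)/p(z) = p(x|z)/p(x)$, so $\E_{p(z)}[p(z|x)^\alpha p(z)^{-\alpha}] = p(x)^{-\alpha}\E_{p(z)} p(x|z)^\alpha = p^\alpha(x)/p(x)^\alpha$, and applying $\tfrac{1}{\alpha-1}\ln(\cdot)$ gives the identity. The bound $(p(x))^\alpha \leqslant p^\alpha(x)$ is Jensen's inequality applied to the convex map $t \mapsto t^\alpha$ on $[0,\infty)$ (convex since $\alpha>1$) and the nonnegative variable $p(x|z)$ under $z \sim p(z)$: $(\E_{p(z)} p(x|z))^\alpha \leqslant \E_{p(z)} p(x|z)^\alpha$; equivalently it is the statement $D_\alpha(p(z|x)\|p(z)) \geqslant 0$ read off the identity just proved.

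For $p^\alpha(x) \leqslant p(x)$: since $X$ is concentrated on the countable set $\cX$, each $p(x|z)$ is a probability mass, hence $p(x|z) \in [0,1]$, and $\alpha>1$ gives the pointwise inequality $p(x|z)^\alpha \leqslant p(x|z)$ for every $z$; integrating against $p(z)$ yields $p^\alpha(x)\leqslant p(x)$. For equality, fix $x$ and write $p(x) - p^\alpha(x) = \E_{p(z)}\big[ p(x|z)\bigl(1 - p(x|z)^{\alpha-1}\bigr) \big]$, whose integrand is nonnegative; thus $p^\alpha(x) = p(x)$ iff $p(x|z) \in \{0,1\}$ for $p(z)$-a.e.\ $z$. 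Imposing this for \emph{all} $x \in \cX$ simultaneously, and using countability of $\cX$ to intersect the corresponding conull sets, is equivalent to: $p(z)$-a.s.\ the conditional law $p(\cdot|z)$ is a point mass, i.e.\ $H_p(X|Z=z) = 0$ for $p(z)$-a.e.\ $z$, i.e.\ (since $H_p(X|Z=z)\geqslant 0$ and $H_p(X|Z) = \E_{p(z)} H_p(X|Z=z)$) $H_p(X|Z) = 0$.

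The only non-routine point — the main obstacle — is the quantifier exchange ``for every $x$, for $p(z)$-a.e.\ $z$'' $\Leftrightarrow$ ``for $p(z)$-a.e.\ $z$, for every $x$'', which is legitimate precisely because $\cX$ is countable; this is where the countability hypothesis is used, and it is worth flagging that the equality characterization can fail over an uncountable observation space. Everything else is elementary: the inequality $t^\alpha \leqslant t$ on $[0,1]$, Jensen, and the standard fact that a discrete distribution has zero entropy iff it is degenerate.
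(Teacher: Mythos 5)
Your proof is correct and follows essentially the same route as the paper: the pointwise inequality $p(x|z)^\alpha \leqslant p(x|z)$ for $p(x|z)\in[0,1]$, $\alpha>1$ for the first claim, and the identity \eqref{eq:renyi-rewrite} together with nonnegativity of the R\'enyi divergence (equivalently, your Jensen argument) for the second. The only difference is that you spell out the quantifier exchange ``for every $x$, a.e.\ $z$'' $\Leftrightarrow$ ``a.e.\ $z$, for every $x$'' via countability of $\cX$, a detail the paper's proof passes over implicitly when concluding that equality for all $x$ means $X$ is a deterministic function of $Z$.
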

\begin{proof}
Since $X$ is countable, $p(x|z)$ is discrete, hence $p(x|z) \leqslant 1$ for all $x$ and $z$. Therefore, since $\alpha>1$, $p(x|z)^\alpha \leqslant p(x|z)$ with equality if and only if $p(x|z) \in \{0,1\}$. Thus,
$p^\alpha(x)
= \E_{p(z)} p(x|z)^\alpha \leqslant \E_{p(z)} p(x|z)
= p(x)$,
with equality if and only if $p(x|z) \in \{0,1\}$ for all $x$, for almost all $z$. The latter holds if and only if $X$ is a deterministic function of $Z$ with probability 1, which is equivalent to $H_p(X|Z)=0$.
The second statement of is a direct consequence of \eqref{eq:renyi-rewrite} and $\alpha-1>0$.
\end{proof}

So the power objective with $\alpha>1$ is an upper bound on the KL (and $I_{p_D}^p$ when averaged over $x$), but this upper bound nevertheless becomes tight when the latents determine the observable variables (i.e. $H_p(X|Z)=0$).

In summary, the power objective has three important differences from the Rényi objective, of which it is a special case.
First, with $\alpha>1$, it is guaranteed to be a lower bound on $\ln p(x)$.
Second, since there is only a single quantity, $\ln p^\alpha(x)$, being estimated, the question of using separate $q(z|x)$ proposal distributions to estimate the different terms does not arise.
Third, $\lambda$ and $\alpha$ are tied, so it may be harder to find a good balance between ease of optimization and low bias.

\section{Connection to the Representational KL}
\label{sec:connection-to-the-representational-kl}

\begin{proposition}[Single-sample KL estimate]
\label{prop:single-sample-kl-estimate}
In the single-sample case with $\hat{S}^1(x, z_1) = \ln \frac{p(x,z_1)}{q(z_1|x)}$, the KL objective's estimate of the true KL in \eqref{eq:kl-obj-estimator-alt} is the representational KL in expectation:
\begin{align*}
\E_{z_1 \sim q(z|x)} \big[ \hat{U}^1(x,z_1) - \hat{S}^1(x,z_1) \big] = \KL\big(q(z|x)\|p(z)\big).
\end{align*}
\end{proposition}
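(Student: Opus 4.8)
The plan is a direct computation that exploits the fact that self-normalization becomes trivial with a single sample. First I would evaluate $\hat{U}^1$ from its definition \eqref{eq:kl-estimator}. With $K=1$ we have $\hat{p}^1(x,z_1) = p(x,z_1)/q(z_1|x)$, so the normalizing factor $\hat{p}^1(x,z_1)^{-1}$ cancels exactly against the single importance weight $p(x,z_1)/q(z_1|x)$ appearing in the sum, leaving simply $\hat{U}^1(x,z_1) = \ln p(x|z_1)$.

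Next I would subtract $\hat{S}^1(x,z_1) = \ln\frac{p(x,z_1)}{q(z_1|x)}$ and substitute $p(x,z_1) = p(x|z_1)p(z_1)$. The $\ln p(x|z_1)$ terms cancel, giving the pointwise identity
\begin{align*}
\hat{U}^1(x,z_1) - \hat{S}^1(x,z_1) = \ln p(x|z_1) - \ln\frac{p(x|z_1)p(z_1)}{q(z_1|x)} = \ln\frac{q(z_1|x)}{p(z_1)}.
\end{align*}
Taking the expectation over $z_1 \sim q(z|x)$ then yields $\E_{z_1 \sim q(z|x)} \ln\frac{q(z_1|x)}{p(z_1)} = \KL\big(q(z|x)\|p(z)\big)$ by the definition of the KL divergence, which is the claim.

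There is essentially no hard step here; the only thing worth flagging is the observation that drives the whole argument, namely that the self-normalized importance sampling estimator $\hat{U}^K$ collapses to a plain log-likelihood evaluation $\ln p(x|z_1)$ when $K=1$, because the ratio of the weighted sum to the normalizer is degenerate. Once that is noted, the rest is a one-line manipulation of logarithms together with the factorization $p(x,z)=p(x|z)p(z)$.
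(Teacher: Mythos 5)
Your proposal is correct and follows the same route as the paper's own proof: with $K=1$ the self-normalization cancels the single importance weight, so $\hat{U}^1(x,z_1)=\ln p(x|z_1)$, and subtracting $\hat{S}^1(x,z_1)=\ln\frac{p(x,z_1)}{q(z_1|x)}$ with $p(x,z)=p(x|z)p(z)$ yields $\ln\frac{q(z_1|x)}{p(z_1)}$, whose expectation under $q(z|x)$ is the representational KL. You even make the final expectation step explicit, which the paper leaves implicit; nothing is missing.
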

\vspace{-1em}
\begin{proof}
With $\hat{p}^1(x, z_1) = p(x,z_1)/q(z_1|x)$ from \eqref{eq:p-hat-k}, we have that
\begin{align*}
\hat{U}^1(x,z_1) - \hat{S}^1(x,z_1)
&= \hat{p}^1(x, z_1)^{-1} \frac{p(x,z_1)}{q(z_1|x)} \ln p(x|z_1) -
   \ln \frac{p(x,z_1)}{q(z_1|x)} \\
&= \ln p(x|z_1) - \ln \frac{p(x,z_1)}{q(z_1|x)}
= \ln \frac{q(z_1|x)}{p(z_1)}
\end{align*}
\end{proof}
So not only are the expectations the same, but our single-sample estimate is the same as the trivial single-sample estimate of the representational KL.
We now prove a similar result for the Rényi objective.
\begin{proposition}[Single-sample Rényi estimate]
\label{prop:single-sample-renyi-estimate}
In the single-sample case with $\hat{S}^1(x, z_1) = \ln \frac{p(x,z_1)}{q(z_1|x)}$, the Rényi objective's estimate of $D_\alpha(p(z|x)\|p(z))$ in \eqref{eq:renyi-obj-estimator-alt} is the representational KL in expectation:
\begin{align*}
\E_{z_1 \sim q(z|x)} \bigg[ \frac{1}{\alpha-1}
                                 \Big( \hat{S}^1_\alpha(x,z_1) -
                                       \alpha \hat{S}^1(x,z_1) \Big)
                 \bigg]
= \KL\big(q(z|x)\|p(z)\big).
\end{align*}
\end{proposition}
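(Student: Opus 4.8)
The plan is to mirror the proof of Proposition~\ref{prop:single-sample-kl-estimate}: reduce the $K=1$ Rényi estimate to an explicit log-ratio and then take the expectation. First I would recall from \S\ref{sec:renyi-objective} that $\hat{S}^1_\alpha$ is obtained from $\hat{S}^1$ by substituting $p(x|z)$ with $p(x|z)^\alpha$; with the stated single-sample choice $\hat{S}^1(x,z_1) = \ln\frac{p(x,z_1)}{q(z_1|x)} = \ln\frac{p(x|z_1)p(z_1)}{q(z_1|x)}$, this gives $\hat{S}^1_\alpha(x,z_1) = \ln\frac{p(x|z_1)^\alpha p(z_1)}{q(z_1|x)}$. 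Being explicit about this definition is the only point that needs care: the statement concerns this canonical pair $(\hat{S}^1,\hat{S}^1_\alpha)$, not an arbitrary one.

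Next I would compute the bracketed quantity pointwise. Expanding both logarithms, the $\alpha\ln p(x|z_1)$ contributions cancel between $\hat{S}^1_\alpha(x,z_1)$ and $\alpha\hat{S}^1(x,z_1)$, leaving $\hat{S}^1_\alpha(x,z_1) - \alpha\hat{S}^1(x,z_1) = (1-\alpha)\ln\frac{p(z_1)}{q(z_1|x)}$. Dividing by $\alpha-1$ yields $\ln\frac{q(z_1|x)}{p(z_1)}$, exactly the naive single-sample integrand of the representational KL, and this holds for every $\alpha>0$ with $\alpha\neq 1$ without any limiting argument. Taking the expectation over $z_1\sim q(z|x)$ then gives $\E_{z_1\sim q(z|x)}\ln\frac{q(z_1|x)}{p(z_1)} = \KL(q(z|x)\|p(z))$ by definition of the KL divergence, which is the claim.

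There is essentially no obstacle here — the computation is a two-line cancellation of logarithms. I would close, as the surrounding discussion does for the KL objective, by remarking that, just as in Proposition~\ref{prop:single-sample-kl-estimate}, the pointwise single-sample Rényi estimate is not merely equal in expectation but is \emph{identically} the trivial single-sample estimate $\ln\frac{q(z_1|x)}{p(z_1)}$ of the representational KL, independently of the choice of $\alpha$.
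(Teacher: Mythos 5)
Your proposal is correct and follows essentially the same route as the paper's proof: substitute $\hat{S}^1_\alpha(x,z_1)=\ln\frac{p(x|z_1)^\alpha p(z_1)}{q(z_1|x)}$, cancel the $\alpha\ln p(x|z_1)$ terms to obtain $\ln\frac{q(z_1|x)}{p(z_1)}$ pointwise, and take the expectation under $q(z|x)$. Your closing remark that the estimate is \emph{identically} (not just in expectation) the trivial single-sample representational KL estimate, independently of $\alpha$, matches the observation the paper makes after Proposition~\ref{prop:single-sample-kl-estimate}.
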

\vspace{-1em}
\begin{proof}
\begin{align*}
\frac{1}{\alpha-1} \bigg( \hat{S}^1_\alpha(x,z_1) - \alpha \hat{S}^1(x,z_1) \bigg)
&= \frac{1}{\alpha-1}\bigg(\ln \frac{p(x|z_1)^\alpha p(z_1)}{q(z_1|x)} - \alpha \ln \frac{p(x,z_1)}{q(z_1|x)}\bigg)\\
&= \frac{1}{\alpha-1}\bigg(\ln \frac{p(z_1)}{q(z_1|x)} - \alpha \ln \frac{p(z_1)}{q(z_1|x)}\bigg)\\
&= \ln \frac{q(z_1|x)}{p(z_1)}
\end{align*}
\end{proof}

In the single-sample case, where $q(z|x)$ approximates the true posterior much better compared to the multi-sample Monte-Carlo estimators, the representational KL approximates the true KL with a relatively small bias.
This bias is, however, asymptotically eliminated by our KL estimator as the number of samples $K$ grows (and reduced for our Rényi estimator), which significantly improves the usage of the latent variables.
This is demonstrated in our experiments (e.g. \Cref{fig:synthetic-dreg-samples,fig:synthetic-vimco-samples}) where we compare methods using KL estimates based on different number of samples from $q(z|x)$.

\section{Connection to the \texorpdfstring{$\boldsymbol{\beta}$}{Beta}-VAE}
\label{sec:connection-to-the-beta-vae}

We have seen that in the single-sample case our estimators are equal to the representational KL in expectation.
Still in the single-sample case, we show that the \betavae{} \citep{higgins2016beta} objective defined as
\begin{align}
\label{eq:beta-vae-obj}
\mathcal{O}_{\beta\text{-VAE}}(\beta, x)
= \E_{z_1 \sim q(z|x)} \big[ \ln p(x|z_1) \big] - \beta \KL\big(q(z|x)\|p(z)\big)
\end{align}
is equal to the KL, Rényi and power objectives with suitably chosen parameters.

\begin{proposition}[\betavae{} equivalence]
In the single-sample case with $\hat{S}^1(x, z_1) = \ln \frac{p(x,z_1)}{q(z_1|x)}$, the \betavae{} and our single-sample objectives are equal with a suitable choice of $\lambda$ or $\alpha$:
\begin{align*}
\mathcal{O}_{\beta\text{-VAE}}(\beta, x)
&= \mathcal{O}_{\KL}(\hat{S}, 1, 1-\beta, x)
 & \qquad \text{($\lambda=1-\beta$)}\\
&= \mathcal{O}_R(\hat{S}, 1, 1-\beta, \alpha, x)
 & \qquad \text{($\lambda=1-\beta$)}\\
&= \mathcal{O}_P(\hat{S}, 1, \beta^{-1}, x)
 & \qquad \text{($\alpha=\beta^{-1}$)}.
\end{align*}
\end{proposition}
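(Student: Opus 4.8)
The plan is to reduce all three identities to the two pointwise computations already carried out in the proofs of Proposition~\ref{prop:single-sample-kl-estimate} and Proposition~\ref{prop:single-sample-renyi-estimate}, combined with the elementary decomposition $\hat{S}^1(x,z_1) = \ln\frac{p(x,z_1)}{q(z_1|x)} = \ln p(x|z_1) - \ln\frac{q(z_1|x)}{p(z_1)}$, obtained by writing $p(x,z_1)=p(x|z_1)p(z_1)$. Recall from those proofs that, for every $z_1$, both $\hat{U}^1(x,z_1) - \hat{S}^1(x,z_1)$ and $\frac{1}{\alpha-1}\big(\hat{S}^1_\alpha(x,z_1) - \alpha\hat{S}^1(x,z_1)\big)$ equal $\ln\frac{q(z_1|x)}{p(z_1)}$; the expectations in the statements of those propositions are only there to identify the result with $\KL(q(z|x)\|p(z))$.

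For the KL objective, I would start from the alternative form \eqref{eq:kl-obj-estimator-alt}, which in the single-sample case reads $\hat{\mathcal{O}}_{\KL}(\hat{S},1,\lambda,x,z_1) = \hat{S}^1(x,z_1) + \lambda\big(\hat{U}^1(x,z_1) - \hat{S}^1(x,z_1)\big) = \hat{S}^1(x,z_1) + \lambda\ln\frac{q(z_1|x)}{p(z_1)}$. Substituting the decomposition of $\hat{S}^1$ gives $\hat{\mathcal{O}}_{\KL}(\hat{S},1,\lambda,x,z_1) = \ln p(x|z_1) - (1-\lambda)\ln\frac{q(z_1|x)}{p(z_1)}$, and taking $\E_{z_1\sim q(z|x)}$ turns the second term into $(1-\lambda)\KL(q(z|x)\|p(z))$. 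Hence $\mathcal{O}_{\KL}(\hat{S},1,\lambda,x) = \E_{z_1\sim q(z|x)}[\ln p(x|z_1)] - (1-\lambda)\KL(q(z|x)\|p(z))$, which is exactly $\mathcal{O}_{\beta\text{-VAE}}(\beta,x)$ upon setting $1-\lambda=\beta$, i.e.\ $\lambda=1-\beta$.

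For the Rényi objective I would repeat the argument verbatim, invoking Proposition~\ref{prop:single-sample-renyi-estimate} in place of Proposition~\ref{prop:single-sample-kl-estimate}: the divergence-estimating term in \eqref{eq:renyi-obj-estimator-alt} also equals $\ln\frac{q(z_1|x)}{p(z_1)}$ pointwise and \emph{independently of $\alpha$}, so $\hat{\mathcal{O}}_R(\hat{S},1,\lambda,\alpha,x,z_1) = \hat{S}^1(x,z_1) + \lambda\ln\frac{q(z_1|x)}{p(z_1)}$ coincides with $\hat{\mathcal{O}}_{\KL}(\hat{S},1,\lambda,x,z_1)$; taking expectations gives $\mathcal{O}_R(\hat{S},1,1-\beta,\alpha,x) = \mathcal{O}_{\beta\text{-VAE}}(\beta,x)$ for any admissible $\alpha$. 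For the power objective there are two routes. Directly, $\hat{\mathcal{O}}_P(\hat{S},1,\alpha,x,z_1) = \alpha^{-1}\hat{S}^1_\alpha(x,z_1) = \alpha^{-1}\ln\frac{p(x|z_1)^\alpha p(z_1)}{q(z_1|x)} = \ln p(x|z_1) - \alpha^{-1}\ln\frac{q(z_1|x)}{p(z_1)}$, whose expectation is $\E_{z_1\sim q(z|x)}[\ln p(x|z_1)] - \alpha^{-1}\KL(q(z|x)\|p(z)) = \mathcal{O}_{\beta\text{-VAE}}(\beta,x)$ for $\alpha^{-1}=\beta$. Alternatively, since the power objective is the Rényi objective with $\lambda=(\alpha-1)/\alpha$ (\S\ref{sec:power-objective}), the choice $\alpha=\beta^{-1}$ yields $\lambda=1-\beta$ and the claim follows from the Rényi case, which also confirms the mutual consistency of the three identities.

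I do not expect a genuine obstacle: the whole argument is a chain of substitutions once Propositions~\ref{prop:single-sample-kl-estimate} and~\ref{prop:single-sample-renyi-estimate} are in hand. The only points needing care are that those two identities hold pointwise in $z_1$ (so nothing is lost in passing to expectations), and that the parameter ranges be admissible — $\beta>0$ throughout, and $\beta\neq 1$ wherever a factor $1/(\alpha-1)$ with $\alpha=\beta^{-1}$ is involved.
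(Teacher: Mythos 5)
Your proposal is correct and follows essentially the same route as the paper: it reduces all three identities to Propositions~\ref{prop:single-sample-kl-estimate} and~\ref{prop:single-sample-renyi-estimate} via the alternative forms \eqref{eq:kl-obj-estimator-alt} and \eqref{eq:renyi-obj-estimator-alt}, and handles the power objective through $\lambda=(\alpha-1)/\alpha$ with $\alpha=\beta^{-1}$, exactly as the paper does. The only (harmless) differences are that you carry out the cancellation pointwise in $z_1$ before taking expectations and additionally verify the power case by direct computation.
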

\vspace{-1em}
\begin{proof}
We prove the KL objective case from \eqref{eq:kl-obj-estimator-alt} and \Cref{prop:single-sample-kl-estimate}:
\begin{align*}
\E_{z_1 \sim q(z|x)} \mathcal{\hat{O}}_{\KL}(\hat{S}, 1, \lambda, x, z_1)
&= \E_{z_1 \sim q(z|x)} \big[ \hat{S}^1(x,z_1) \big] +
   \lambda \E_{z_1 \sim q(z|x)} \big[ \hat{U}^1(x,z_1) - \hat{S}^1(x,z_1) \big]\\
&= \E_{z_1 \sim q(z|x)} \bigg[ \ln \frac{p(x,z_1)}{q(z_1|x)} \bigg]
    + \lambda \KL\big(q(z|x)\|p(z)\big)\\
&= \E_{z_1 \sim q(z|x)} \big[ \ln p(x|z_1) \big]
   + (\lambda - 1) \KL\big(q(z|x)\|p(z)\big).
\end{align*}
The proof for the Rényi case follows a similar route, starting from \eqref{eq:renyi-obj-estimator-alt} and using \Cref{prop:single-sample-renyi-estimate}.
Finally, for the power objective, $\lambda = (\alpha-1)/\alpha$, so $\alpha=\beta^{-1}$ recovers $\lambda = 1-\beta$.
\end{proof}

\section{Experiments}
\label{sec:experiments}

The goal of our experiments is to demonstrate the difficulty of inference with VAEs and Monte-Carlo objectives and to evaluate the proposed methods.
Our results indicate the presence of a severe tradeoff between data fit and latent variable usage.
We emphasize that, for progress to be made, the choice of evaluation method must acknowledge the existence of this tradeoff.
In this work, evaluation is performed in terms of Pareto frontiers of data likelihood vs latent usage curves; reporting a single, best data likelihood would always pick the point with zero latent usage.
Results with our proposed estimators, either with continuous latents and DReG or discrete latents and the VIMCO base estimator, markedly improve on their baselines, which do not have multiple samples or cannot use them as efficiently.
The improvement is especially significant with discrete latents.

Instead of trying to improve the predictive performance directly, first we demonstrate the difficulty of inference on a simple, synthetic data set and a model to which posterior collapse comes easy.
These experiments focus exclusively on data fit to highlight the tradeoff against latent variable usage.
After the experiments on synthetic data, we move on to language modelling with recurrent networks, a very hostile application for VAEs \citep{bowman2015generating}.

\subsection{Experiments with Synthetic Data}
\label{sec:experiments-with-synthetic-data}

Every data point is a single symbol drawn from a discrete uniform distribution over a vocabulary of 10000 symbols.
Thus, the optimal solution is to assign probability 1/10000 to each symbol, which can be trivially satisfied by a simple model that ignores the latents.
Note that $\lambda>0$ makes such solutions suboptimal.
In fact, for all priors such that $H(Z)\geqslant H(X)$, the optimal solution must capture all information in the latents (i.e. $I(X,Z)=H(X)$).
Our goal with these experiments is to demonstrate the difficulty of fitting the data while using the latents.

\subsubsection{Model Architecture}

The encoder implementing $q$ assigns an embedding \citep{mikolov2013distributed} to each word in the vocabulary, feeds the embedding to a two-layer, densely connected neural network with $\tanh$ non-linearities.
For continuous latents, denoting the output of the last encoder layer for a given $x$ with $o$, $q(z|x)$ follows an isotropic normal distribution $\N(f(o),\diag(\exp(g(o))))$, where $f$ and $g$ are affine transformations parameterized as densely connected neural network layers.
For categorical latents, $q(z|x)$ is proportional to $\exp(o_z)$ (i.e. it is $Categorical(\softmax(o))$).
The decoder is a neural network similar to the encoder but with a final softmax layer.
In the decoder, values of continuous latents are fed directly as input to the first layer, but values of categorical latents are first embedded.
To compensate for this discrepancy, decoders with categorical latents have only one hidden layer.
All embeddings and hidden layers are of size 128.

\subsubsection{Evaluation Methodology}

Since there is no single number to summarize the tradeoff between latent usage and the quality of the model, we plot the model's latent usage and the expected negative log-likelihood (NLL).
The NLL of optimal solutions is the entropy of the data distribution, $\ln 10000\approx 9.210$.
The models' NLLs are estimated using IWAE with 100 samples.
We quantify latent variable usage as the mutual information $I_p(X,Z)$, estimated as the average $\hat{U}^K(x, z_{1:K}) - \ln \hat{p}^K(x, z_{1:K})$ \eqref{eq:kl-estimator} with $K=100$ samples.
We validated empirically that the variance of these estimates is small ($<0.01$) over all feasible latent usage values.
Zero latent usage corresponds to posterior collapse.

These plots (like Figure 3b in \citealt{alemi2017fixing}) carry the same information as rate-distortion curves, which can be recovered by subtracting the rate $I_p$ from the NLL.
Only the measurements on the Pareto frontier are shown, and we tune hyperparemeters, such as the learning rate and $\lambda$, of the augmented objective \eqref{eq:cross-mi-obj} to push the Pareto frontier towards more efficient latent usage.
See Appendix~\ref{sec:optimization-settings} for details.

\subsubsection{Overview of Results}

Results for continuous and discrete latents are presented in separate sections, following a similar progression:
\begin{itemize}
\item We first show that the choice of base estimator matters: the variance issues of IWAE and REINFORCE limit the potential of the proposed method.
\item Next, with DReG and VIMCO we present clear improvements over the single-sample baselines.
\item Finally, as an ablation study, we verify that using multiple samples in both the marginal likelihood and the mutual information terms of the objective is necessary for best performance.
\end{itemize}
We also note that without augmenting the objective with a mutual information term, our model tends to fit the data perfectly with negligible latent usage.
These degenerate curves consisting of a single point are omitted from the plots.

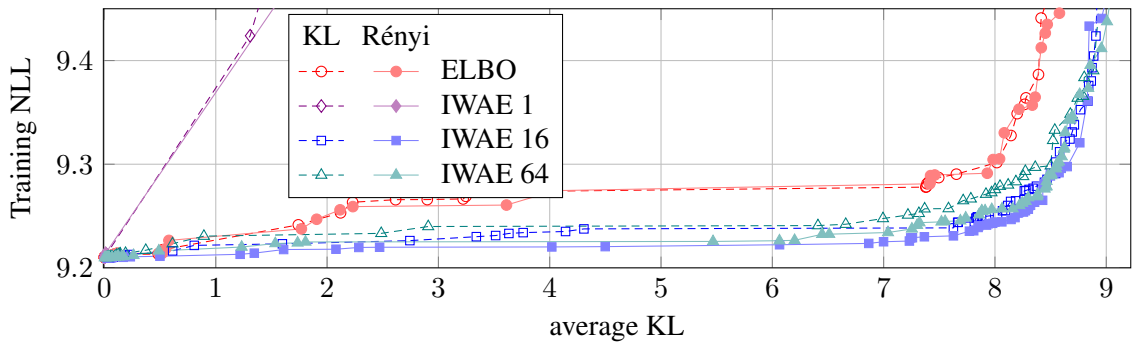
\begin{figure}
\begin{tikzpicture}
\begin{axis}[xmin=-0.01,xmax=9.22,ymin=9.20,ymax=9.45,
  xlabel=average KL, ylabel=Training NLL,
  height=0.33*\textwidth,
  width=\textwidth,
  grid=major,
  legend pos=north west,
  legend columns=2,
  legend style={
    font=\mystrut,
    legend cell align=left,
    at={(0.18,0.98)},
    anchor=north west,
  },
]

\addlegendimage{legend image with text=KL}
\addlegendentry{}
\addlegendimage{legend image with text=Rényi}
\addlegendentry{}

\pgfplotstablesort[sort key=validklp]{\sorted}
                  {data/synthetic/l40x0-bs256-hs128-t40-vae-kl}
\addplot[redhollow] table [x=validklp, y=validnll] {\sorted};
\addlegendentry{};

\pgfplotstablesort[sort key=validklp]{\sorted}
                  {data/synthetic/l40x0-bs256-hs128-t40-vae-renyi}
\addplot[redsolid] table [x=validklp, y=validnll] {\sorted};
\addlegendentry{ELBO};

\pgfplotstablesort[sort key=validklp]{\sorted}
                  {data/synthetic/l40x0-bs256-hs128-t40-iwae1-kl}
\addplot[violethollow] table [x=validklp, y=validnll] {\sorted};
\addlegendentry{};

\pgfplotstablesort[sort key=validklp]{\sorted}
                  {data/synthetic/l40x0-bs256-hs128-t40-iwae1-renyi}
\addplot[violetsolid] table [x=validklp, y=validnll] {\sorted};
\addlegendentry{IWAE 1};

\pgfplotstablesort[sort key=validklp]{\sorted}
                  {data/synthetic/l40x0-bs256-hs128-t40-iwae16-kl};
\addplot[bluehollow] table [x=validklp, y=validnll] {\sorted};
\addlegendentry{};

\pgfplotstablesort[sort key=validklp]{\sorted}
                  {data/synthetic/l40x0-bs256-hs128-t40-iwae16-renyi}
\addplot[bluesolid] table [x=validklp, y=validnll] {\sorted};
\addlegendentry{IWAE 16};

\pgfplotstablesort[sort key=validklp]{\sorted}
                  {data/synthetic/l40x0-bs256-hs128-t40-iwae64-kl};
\addplot[tealhollow] table [x=validklp, y=validnll] {\sorted};
\addlegendentry{};

\pgfplotstablesort[sort key=validklp]{\sorted}
                  {data/synthetic/l40x0-bs256-hs128-t40-iwae64-renyi};
\addplot[tealsolid] table [x=validklp, y=validnll] {\sorted};
\addlegendentry{IWAE 64};

\end{axis}
\end{tikzpicture}
\caption{\small KL and Rényi objectives (empty and full markers) on continuous synthetic data with base estimators ELBO and IWAE N, where N is the number of samples used for estimating both $p(x)$ and $\KL(p(z|x)\|p(z))$.
Here and in the following plots, without augmenting the objective with the mutual information term, models tend to fit the data perfectly with negligible latent usage.
These degenerate curves consisting of a single point are omitted from the plots.
Also, with base estimators ELBO and IWAE 1, our estimators are equivalent to a \betavae{} with analytical and estimated KL (see \S\ref{sec:connection-to-the-beta-vae}), respectively.}
\label{fig:synthetic-iwae}
\end{figure}

\begin{figure}
\begin{tikzpicture}
\begin{axis}[xmin=-0.01,xmax=9.22,ymin=9.20,ymax=9.45,
  xlabel=average KL, ylabel=Training NLL,
  height=0.33*\textwidth,
  width=\textwidth,
  grid=major,
  legend pos=north west,
  legend columns=2,
  legend style={
    font=\mystrut,
    legend cell align=left,
    at={(0.18,0.98)},
    anchor=north west,
  },
]

\addlegendimage{legend image with text=KL}
\addlegendentry{}
\addlegendimage{legend image with text=Rényi}
\addlegendentry{}

\pgfplotstablesort[sort key=validklp]{\sorted}
                  {data/synthetic/l40x0-bs256-hs128-t40-vae-kl}
\addplot[redhollow] table [x=validklp, y=validnll] {\sorted};
\addlegendentry{};

\pgfplotstablesort[sort key=validklp]{\sorted}
                  {data/synthetic/l40x0-bs256-hs128-t40-vae-renyi}
\addplot[redsolid] table [x=validklp, y=validnll] {\sorted};
\addlegendentry{ELBO};

\pgfplotstablesort[sort key=validklp]{\sorted}
                  {data/synthetic/l40x0-bs256-hs128-t40-dreg1-kl}
\addplot[violethollow] table [x=validklp, y=validnll] {\sorted};
\addlegendentry{};

\pgfplotstablesort[sort key=validklp]{\sorted}
                  {data/synthetic/l40x0-bs256-hs128-t40-dreg1-renyi}
\addplot[violetsolid] table [x=validklp, y=validnll] {\sorted};
\addlegendentry{DReG 1};

\pgfplotstablesort[sort key=validklp]{\sorted}
                  {data/synthetic/l40x0-bs256-hs128-t40-dreg16-kl};
\addplot[bluehollow] table [x=validklp, y=validnll] {\sorted};
\addlegendentry{};

\pgfplotstablesort[sort key=validklp]{\sorted}
                  {data/synthetic/l40x0-bs256-hs128-t40-dreg16-renyi}
\addplot[bluesolid] table [x=validklp, y=validnll] {\sorted};
\addlegendentry{DReG 16};

\pgfplotstablesort[sort key=validklp]{\sorted}
                  {data/synthetic/l40x0-bs256-hs128-t40-dreg64-kl};
\addplot[tealhollow] table [x=validklp, y=validnll] {\sorted};
\addlegendentry{};

\pgfplotstablesort[sort key=validklp]{\sorted}
                  {data/synthetic/l40x0-bs256-hs128-t40-dreg64-renyi};
\addplot[tealsolid] table [x=validklp, y=validnll] {\sorted};
\addlegendentry{DReG 64};

\end{axis}
\end{tikzpicture}
\caption{\small KL and Rényi objectives on continuous synthetic data with base estimators ELBO, DReG.}
\label{fig:synthetic-dreg}
\end{figure}
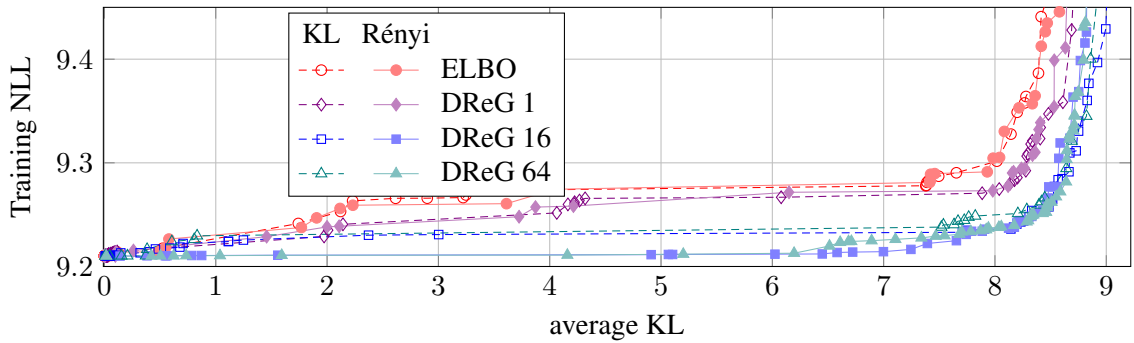

\subsubsection{Continuous Latents}
With 40 continuous latents and an isotropic standard normal prior, the base estimator is either the standard ELBO, IWAE or IWAE-DReG.
In the ELBO, the KL term \eqref{eq:elbo-prior-contrastive} is computed analytically, while in IWAE 1, the single sample from the latents is used to estimate it.
An improvement to IWAE 1 is the STL estimator from \citet{roeder2017sticking}, which removes a zero-expectation term from the objective and whose gradients have zero variance when the variational approximation is exact.
IWAE-DReG, whose objective is also identical to that of IWAE, is a generalization of STL to multiple samples, fixing the signal-to-noise problem of gradient estimates of IWAE \citep{rainforth2018tighter}, wherein the magnitude of the gradient decreases faster with more samples than the variance of the gradient estimates.
Although DReG can also be applied to Reweighted Wake-Sleep \citep{bornschein2014reweighted} and Jackknife Variational Inference \citep{nowozin2018debiasing}, we only use IWAE-DReG in our experiments and let DReG stand for IWAE-DReG without ambiguity.

Our results in \Cref{fig:synthetic-iwae,fig:synthetic-dreg} are in agreement with these previous works.
In the context of this work, our findings can be summarized as follows.
\begin{itemize}
\item Compared to the theoretical optimum, a horizontal line at $\ln 10000 \approx 9.210$, all Pareto curves slope increasingly upwards with more latent usage.
\item Higher-variance estimators have steeper curves than lower variance estimators, which is most apparent in the contrast between two of our baselines, IWAE 1 and DReG 1.
\item The single sample estimators ELBO, IWAE 1, DReG 1 - which are equivalent to a \betavae{} (\S\ref{sec:connection-to-the-beta-vae})\,- are less efficient in their latent usage than our proposed multi-sample estimators.
\item The Rényi objective performs slightly better than the KL objective with the same base estimator.
\end{itemize}

In more detail, \Cref{fig:synthetic-iwae} shows that both of our estimators perform much worse with IWAE 1 as the base estimator than with the ELBO.
As the number of samples grows, this is reversed, although with a high number of samples we do see the predicted degredation \citep{rainforth2018tighter}.

Next, \Cref{fig:synthetic-dreg} confirms DReG's advantage over the IWAE, performing more efficiently than the ELBO even with a single sample.
Our multi-sample objectives both improve on the stronger baseline DReG 1 represents.
The Rényi objective outperforms the KL objective by a small but consistent margin before all estimators start degenerating quickly nearing the maximal average KL possible.

In \Cref{fig:synthetic-dreg-samples}, we take a closer look at the best performing DReG 16 base estimator to determine whether the improvements are due to a better estimate of the marginal likelihood $\ln p(x)$, or the mutual information term $I_{p_D}^p(X,Z)$ in \eqref{eq:cross-mi-obj}.
DReG 16/1 and DReG 1/16 (which use multiple samples to estimate only the marginal likelihood or the mutual information, respectively) are less efficient than DReG 16, sometimes being outperformed even by DReG 1.
Out of the two, DReG 1/16 performs better, indicating that the variance of the base estimator DReG 1 is low and the problem lies with the mutual information estimate.
These results support not only our observations in \S\ref{sec:mi-objective} on the unsuitability of $I_{p,q}$ as an estimate of $I_p$ in the multi-sample case (DReG 16/1), when $q(z|x)$ is not a good approximation of $p(z|x)$, but also demonstrate that $I_{p,q}$ can be improved by taking multiple samples even when $\ln p(x)$ is estimated with a single sample and $q(z|x)$ better approximates the true posterior $p(z|x)$.

\begin{figure}
\begin{tikzpicture}
\begin{axis}[xmin=-0.01,xmax=9.22,ymin=9.20,ymax=9.45,
  xlabel=average KL, ylabel=Training NLL,
  height=0.33*\textwidth,
  width=\textwidth,
  grid=major,
  legend pos=north west,
  legend columns=2,
  legend style={
    font=\mystrut,
    legend cell align=left,
    at={(0.18,0.98)},
    anchor=north west,
  },
]

\addlegendimage{legend image with text=KL}
\addlegendentry{}
\addlegendimage{legend image with text=Rényi}
\addlegendentry{}

\pgfplotstablesort[sort key=validklp]{\sorted}
                  {data/synthetic/l40x0-bs256-hs128-t40-dreg1-kl}
\addplot[redhollow] table [x=validklp, y=validnll] {\sorted};
\addlegendentry{};

\pgfplotstablesort[sort key=validklp]{\sorted}
                  {data/synthetic/l40x0-bs256-hs128-t40-dreg1-renyi}
\addplot[redsolid] table [x=validklp, y=validnll] {\sorted};
\addlegendentry{DReG 1};

\pgfplotstablesort[sort key=validklp]{\sorted}
                  {data/synthetic/l40x0-bs256-hs128-t40-dreg16-kl1};
\addplot[violethollow] table [x=validklp, y=validnll] {\sorted};
\addlegendentry{};

\pgfplotstablesort[sort key=validklp]{\sorted}
                  {data/synthetic/l40x0-bs256-hs128-t40-dreg16-renyi1}
\addplot[violetsolid] table [x=validklp, y=validnll] {\sorted};
\addlegendentry{DReG 16/1};

\pgfplotstablesort[sort key=validklp]{\sorted}
                  {data/synthetic/l40x0-bs256-hs128-t40-dreg1-kl16};
\addplot[bluehollow] table [x=validklp, y=validnll] {\sorted};
\addlegendentry{};

\pgfplotstablesort[sort key=validklp]{\sorted}
                  {data/synthetic/l40x0-bs256-hs128-t40-dreg1-renyi16}
\addplot[bluesolid] table [x=validklp, y=validnll] {\sorted};
\addlegendentry{DReG 1/16};

\pgfplotstablesort[sort key=validklp]{\sorted}
                  {data/synthetic/l40x0-bs256-hs128-t40-dreg16-kl};
\addplot[tealhollow] table [x=validklp, y=validnll] {\sorted};
\addlegendentry{};

\pgfplotstablesort[sort key=validklp]{\sorted}
                  {data/synthetic/l40x0-bs256-hs128-t40-dreg16-renyi};
\addplot[tealsolid] table [x=validklp, y=validnll] {\sorted};
\addlegendentry{DReG 16};

\end{axis}
\end{tikzpicture}
\caption{\small KL and Rényi objectives on continuous synthetic data with base estimator DReG. DReG 1 uses a single sample to estimate both $p(x)$ and $\KL(p(z|x)\|p(z))$. DReG 16/1 uses 16 samples to estimate $p(x)$ and 1 sample for the KL. DReG 1/16 uses 1 sample to estimate $p(x)$ and 16 samples for the KL. Finally, DReG 16 uses the same 16 samples for both terms.}
\label{fig:synthetic-dreg-samples}
\end{figure}
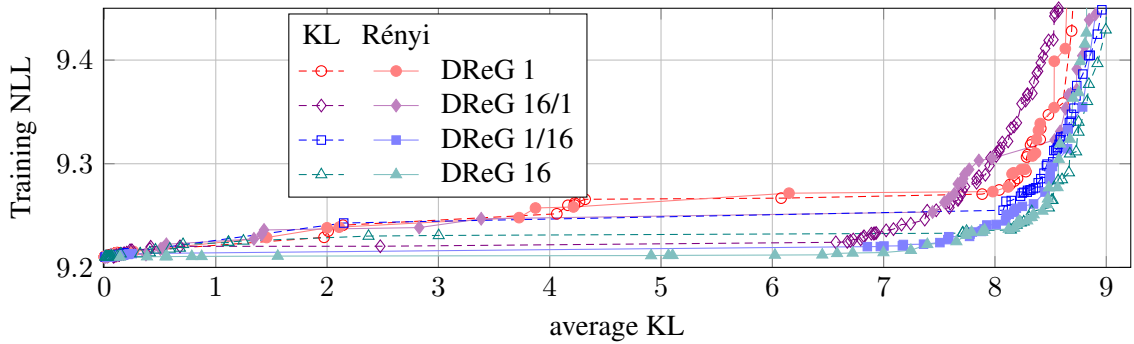

\subsubsection{Discrete Latents}

Next we performed experiments with 8 categorical latent variables, each with a uniform prior over 10 categories.
Using the high-variance REINFORCE base estimator (\Cref{fig:synthetic-reinforce}), we could only get a small improvement over the single-sample case with 16 samples and a similar degradation with 64.

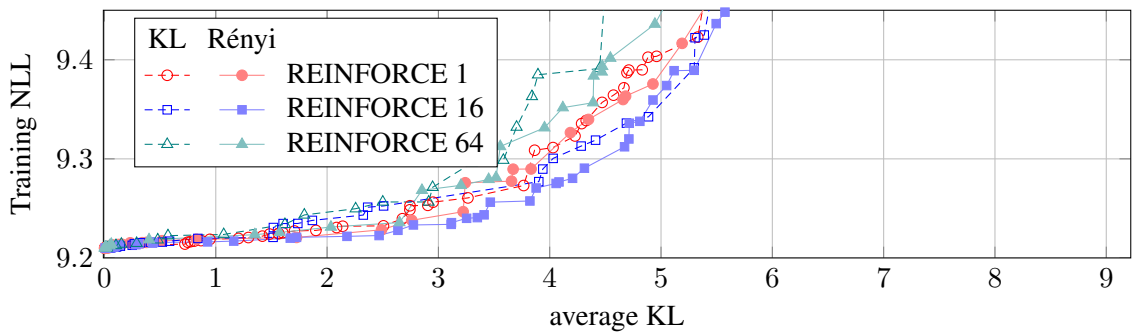
\begin{figure}
\begin{tikzpicture}
\begin{axis}[xmin=-0.01,xmax=9.22,ymin=9.20,ymax=9.45,
  xlabel=average KL, ylabel=Training NLL,
  height=0.32*\textwidth,
  width=\textwidth,
  grid=major,
  legend pos=north west,
  legend columns=2,
  legend style={
    font=\mystrut,
    legend cell align=left,
  },
]

\addlegendimage{legend image with text=KL}
\addlegendentry{}
\addlegendimage{legend image with text=Rényi}
\addlegendentry{}

\pgfplotstablesort[sort key=validklp]{\sorted}
                  {data/synthetic/l8x10-bs256-hs128-t40-vimco1-kl}
\addplot[redhollow] table [x=validklp, y=validnll] {\sorted};
\addlegendentry{};

\pgfplotstablesort[sort key=validklp]{\sorted}
                  {data/synthetic/l8x10-bs256-hs128-t40-vimco1-renyi}
\addplot[redsolid] table [x=validklp, y=validnll] {\sorted};
\addlegendentry{REINFORCE 1};

\pgfplotstablesort[sort key=validklp]{\sorted}
                  {data/synthetic/l8x10-bs256-hs128-t40-reinforce16-kl};
\addplot[bluehollow] table [x=validklp, y=validnll] {\sorted};
\addlegendentry{};

\pgfplotstablesort[sort key=validklp]{\sorted}
                  {data/synthetic/l8x10-bs256-hs128-t40-reinforce16-renyi}
\addplot[bluesolid] table [x=validklp, y=validnll] {\sorted};
\addlegendentry{REINFORCE 16};

\pgfplotstablesort[sort key=validklp]{\sorted}
                  {data/synthetic/l8x10-bs256-hs128-t40-reinforce64-kl};
\addplot[tealhollow] table [x=validklp, y=validnll] {\sorted};
\addlegendentry{};

\pgfplotstablesort[sort key=validklp]{\sorted}
                  {data/synthetic/l8x10-bs256-hs128-t40-reinforce64-renyi};
\addplot[tealsolid] table [x=validklp, y=validnll] {\sorted};
\addlegendentry{REINFORCE 64};

\end{axis}
\end{tikzpicture}
\caption{\small KL and Rényi objectives on discrete synthetic data with base estimator REINFORCE.}
\label{fig:synthetic-reinforce}
\end{figure}

However, with the much lower variance VIMCO estimator, we achieved almost perfect results (see \Cref{fig:synthetic-vimco}) with 16 samples.
For our purposes, it suffices to think of VIMCO as IWAE applied to discrete latents with lower variance gradients.
It may be surprising that these results are even better than with continuous latents, especially at near-maximal latent usage.
To intuit why, consider the minimum variance posterior achievable for a given level of average KL.
For discrete latents, a hard posterior (i.e. of zero variance) is possible depending on the number of latents and categories.
For continuous latents, the posterior can never be hard: the mutual information determines a lower bound on the average variance of the posterior.

Increasing the number of samples further to 64 made results much worse, indicating a potential issue with VIMCO, which may be similar to the signal-to-noise issue that DReG addresses, but this is beyond the scope of this paper.
Note that there is no single-sample VIMCO since its baseline for the contribution of a sample is computed as the average over the rest of the samples, which is undefined.
Assuming a zero baseline, we recover REINFORCE 1, which we use for comparison wherever VIMCO 1 would be needed.

\begin{figure}
\begin{tikzpicture}
\begin{axis}[xmin=-0.01,xmax=9.22,ymin=9.20,ymax=9.45,
  xlabel=average KL, ylabel=Training NLL,
  height=0.32*\textwidth,
  width=\textwidth,
  grid=major,
  legend columns=2,
  legend style={
    font=\mystrut,
    at={(0.6,0.98)},
    anchor=north west,
    legend cell align=left,
  },
]

\addlegendimage{legend image with text=KL}
\addlegendentry{}
\addlegendimage{legend image with text=Rényi}
\addlegendentry{}

\pgfplotstablesort[sort key=validklp]{\sorted}
                  {data/synthetic/l8x10-bs256-hs128-t40-vimco1-kl}
\addplot[redhollow] table [x=validklp, y=validnll] {\sorted};
\addlegendentry{};

\pgfplotstablesort[sort key=validklp]{\sorted}
                  {data/synthetic/l8x10-bs256-hs128-t40-vimco1-renyi}
\addplot[redsolid] table [x=validklp, y=validnll] {\sorted};
\addlegendentry{REINFORCE 1};

\pgfplotstablesort[sort key=validklp]{\sorted}
                  {data/synthetic/l8x10-bs256-hs128-t40-vimco16-kl};
\addplot[bluehollow] table [x=validklp, y=validnll] {\sorted};
\addlegendentry{};

\pgfplotstablesort[sort key=validklp]{\sorted}
                  {data/synthetic/l8x10-bs256-hs128-t40-vimco16-renyi}
\addplot[bluesolid] table [x=validklp, y=validnll] {\sorted};
\addlegendentry{VIMCO 16};

\pgfplotstablesort[sort key=validklp]{\sorted}
                  {data/synthetic/l8x10-bs256-hs128-t40-vimco64-kl};
\addplot[tealhollow] table [x=validklp, y=validnll] {\sorted};
\addlegendentry{};

\pgfplotstablesort[sort key=validklp]{\sorted}
                  {data/synthetic/l8x10-bs256-hs128-t40-vimco64-renyi};
\addplot[tealsolid] table [x=validklp, y=validnll] {\sorted};
\addlegendentry{VIMCO 64};

\end{axis}
\end{tikzpicture}
\caption{\small KL and Rényi objectives on discrete synthetic data with base estimator VIMCO. The single-sample VIMCO is equivalent to REINFORCE.}
\label{fig:synthetic-vimco}
\end{figure}
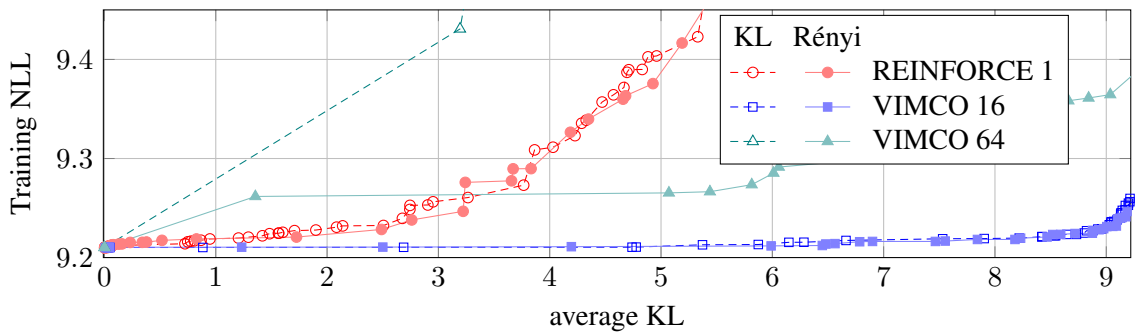

Similarly to \Cref{fig:synthetic-dreg-samples} in the continuous case, in \Cref{fig:synthetic-vimco-samples}, we try to tease apart the contributions of using multiple samples for estimating the marginal likelihood and mutual information terms of \eqref{eq:cross-mi-obj}.
The results are much more pronounced here.
Both VIMCO 16/1 and VIMCO 1/16 improve significantly on REINFORCE 1 but only with the Rényi objective.
Still, both fall way short of VIMCO 16, which employs multiple samples for both terms.
Again, these results support not only our observations in \S\ref{sec:mi-objective} on the unsuitability of $I_{p,q}$ as an estimate of $I_p$ in the multi-sample case (DReG 16/1), when $q(z|x)$ is not a good approximation of $p(z|x)$, but also demonstrate that $I_{p,q}$ can be improved by taking multiple samples even when $\ln p(x)$ is estimated with a single sample and $q(z|x)$ better approximates the true posterior $p(z|x)$.

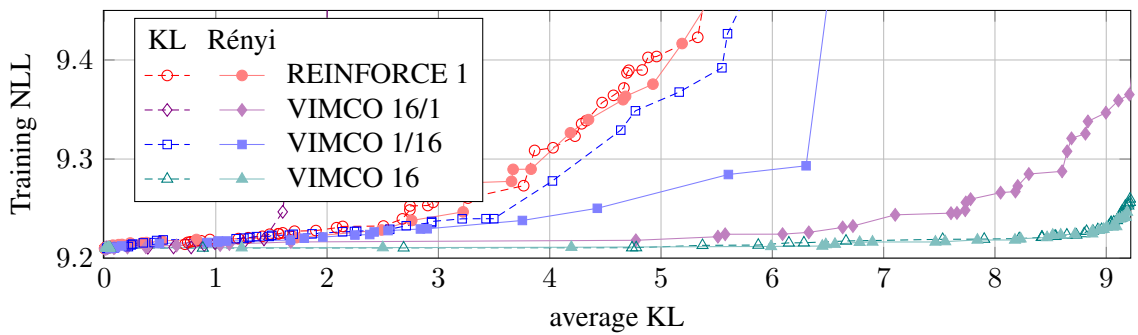
\begin{figure}
\begin{tikzpicture}
\begin{axis}[xmin=-0.01,xmax=9.22,ymin=9.20,ymax=9.45,
  xlabel=average KL, ylabel=Training NLL,
  height=0.32*\textwidth,
  width=\textwidth,
  grid=major,
  legend pos=north west,
  legend columns=2,
  legend style={
    font=\mystrut,
    legend cell align=left,
    at={(0.03,0.97)},
    anchor=north west,
  },
]

\addlegendimage{legend image with text=KL}
\addlegendentry{}
\addlegendimage{legend image with text=Rényi}
\addlegendentry{}

\pgfplotstablesort[sort key=validklp]{\sorted}
                  {data/synthetic/l8x10-bs256-hs128-t40-vimco1-kl}
\addplot[redhollow] table [x=validklp, y=validnll] {\sorted};
\addlegendentry{};

\pgfplotstablesort[sort key=validklp]{\sorted}
                  {data/synthetic/l8x10-bs256-hs128-t40-vimco1-renyi}
\addplot[redsolid] table [x=validklp, y=validnll] {\sorted};
\addlegendentry{REINFORCE 1};

\pgfplotstablesort[sort key=validklp]{\sorted}
                  {data/synthetic/l8x10-bs256-hs128-t40-vimco16-kl1};
\addplot[violethollow] table [x=validklp, y=validnll] {\sorted};
\addlegendentry{};

\pgfplotstablesort[sort key=validklp]{\sorted}
                  {data/synthetic/l8x10-bs256-hs128-t40-vimco16-renyi1}
\addplot[violetsolid] table [x=validklp, y=validnll] {\sorted};
\addlegendentry{VIMCO 16/1};

\pgfplotstablesort[sort key=validklp]{\sorted}
                  {data/synthetic/l8x10-bs256-hs128-t40-vimco1-kl16};
\addplot[bluehollow] table [x=validklp, y=validnll] {\sorted};
\addlegendentry{};

\pgfplotstablesort[sort key=validklp]{\sorted}
                  {data/synthetic/l8x10-bs256-hs128-t40-vimco1-renyi16}
\addplot[bluesolid] table [x=validklp, y=validnll] {\sorted};
\addlegendentry{VIMCO 1/16};

\pgfplotstablesort[sort key=validklp]{\sorted}
                  {data/synthetic/l8x10-bs256-hs128-t40-vimco16-kl};
\addplot[tealhollow] table [x=validklp, y=validnll] {\sorted};
\addlegendentry{};

\pgfplotstablesort[sort key=validklp]{\sorted}
                  {data/synthetic/l8x10-bs256-hs128-t40-vimco16-renyi};
\addplot[tealsolid] table [x=validklp, y=validnll] {\sorted};
\addlegendentry{VIMCO 16};

\end{axis}
\end{tikzpicture}
\caption{\small KL and Rényi objectives on discrete synthetic data with base estimator VIMCO. REINFORCE 1 (standing in for VIMCO 1) uses a single sample to estimate both $p(x)$ and $\KL(p(z|x)\|p(z))$. VIMCO 16/1 uses 16 samples to estimate $p(x)$ and 1 samples for the KL. VIMCO 1/16 uses 1 sample to estimate $p(x)$ and 16 samples for the KL. Finally, VIMCO 16 uses the same 16 samples for both terms.}
\label{fig:synthetic-vimco-samples}
\end{figure}

In Appendix~\ref{sec:additional-experiments-on-synthetic-data}, we also present results for the VQ-VAE \citep{van2017neural} \emph{without} augmenting its objective with a mutual information term.
Since the KL cost in VQ-VAEs is determined by the latent space and is fixed during training, we tune the number of latent variables and the number of categories to control the mutual information.
Results of the VQ-VAE are better than REINFORCE but much worse than VIMCO 16 with either the KL or the Rényi objective.

\subsection{Language Modelling Experiments}
\label{sec:language-modelling-experiments}

One of the most challenging applications of variational autoencoders is language modelling with per-sentence latents, as found by \citet{bowman2015generating}.
They recognize the generality of the underspecification issue and attribute the increased difficulty to ``the sensitivity of the LSTM to subtle variations in its hidden state as introduced by the sampling process''.
In this section, we first show that the data fit vs latent usage tradeoff is even more pronounced in the language modelling case than on the synthetic task, then confirm that the proposed estimators improve validation set results in terms of the Pareto frontiers.
Once again, the improvement is strongest with discrete latents.

\subsubsection{Data Set}

We do sentence-level language modelling on the Penn Treebank (PTB) corpus by \citet{marcus1993building} with preprocessing from \citet{mikolov2010recurrent}.
Our goal here is to compare inference methods, not to establish a new state of the art, so to reduce the computational burden brought about by hyperparemeter tuning, we truncated sentences to 35 tokens in both the training and validation sets, with a reduction of 3\% in the total number of tokens.
This truncation is non-standard.

\subsubsection{Model Architecture}

The model architecture is like in the synthetic case except the encoder embeds the input tokens and feeds them to a one-layer, bidirectional LSTM \citep{hochreiter1997long} and the output $o$ (from which the parameters of the variational posterior $q(z|x)$ are computed) is the average of the last states of LSTMs corresponding to the two directions.
There is a fixed number of latents for all sentences, regardless of their length.
The decoder is a unidirectional LSTM whose input is the embedding of the previous token plus the values of the latent variables.
Unless stated otherwise, the embedding and hidden sizes are all 128.
Similarly to the synthetic case, we use either 40 real-valued latents with an isotropic standard normal prior or 8 categorical latent variables, each with a uniform prior over 10 categories.

\subsubsection{Evaluation Methodology}

Following previous works, both the reported NLL and the average KL values are averages over tokens in the data set.
Since there is only a single set of latents per sentence, this means that the average sentence-level KL is about 21 (the average number of tokens per sentence) times larger.
For expediency, only the base estimators that performed best on the synthetic data set are considered, leaving us with DReG for continuous, and VIMCO for discrete latents.
Training and validation NLLs are estimated using IWAE with 100 samples.
Test NLLs in \Cref{tab:ptb-best-nlls} are estimated using IWAE with 500 samples.

\begin{figure}
\begin{tikzpicture}
\begin{axis}[xmin=-0.01,xmax=2.01,ymin=2.2,ymax=3.2,
  xlabel=per-token KL, ylabel=Training NLL,
  height=0.32*\textwidth,
  width=\textwidth,
  grid=major,
  legend pos=south east,
  legend columns=2,
  legend style={
    font=\mystrut,
    legend cell align=left,
  },
]

\addlegendimage{legend image with text=KL}
\addlegendentry{}
\addlegendimage{legend image with text=Rényi}
\addlegendentry{}

\pgfplotstablesort[sort key=validklp]{\sorted}
                  {data/l40x0-bs256-hs128-t40-dreg1-kl}
\addplot[redhollow] table [x=validklp, y=validnll] {\sorted};
\addlegendentry{};

\pgfplotstablesort[sort key=validklp]{\sorted}
                  {data/l40x0-bs256-hs128-t40-dreg1-renyi}
\addplot[redsolid] table [x=validklp, y=validnll] {\sorted};
\addlegendentry{DReG 1};

\pgfplotstablesort[sort key=validklp]{\sorted}
                  {data/l40x0-bs256-hs128-t40-dreg16-kl};
\addplot[bluehollow] table [x=validklp, y=validnll] {\sorted};
\addlegendentry{};

\pgfplotstablesort[sort key=validklp]{\sorted}
                  {data/l40x0-bs256-hs128-t40-dreg16-renyi}
\addplot[bluesolid] table [x=validklp, y=validnll] {\sorted};
\addlegendentry{DReG 16};

\pgfplotstablesort[sort key=validklp]{\sorted}
                  {data/l40x0-bs256-hs128-t40-dreg64-kl};
\addplot[tealhollow] table [x=validklp, y=validnll] {\sorted};
\addlegendentry{};

\pgfplotstablesort[sort key=validklp]{\sorted}
                  {data/l40x0-bs256-hs128-t40-dreg64-renyi};
\addplot[tealsolid] table [x=validklp, y=validnll] {\sorted};
\addlegendentry{DReG 64};

\end{axis}
\end{tikzpicture}
\caption{\small KL and Rényi objectives (empty and full markers, respectively) on Penn Treebank with base estimators DReG N, where N is the number of samples used for estimating both $p(x)$ and $\KL(p(z|x)\|p(z))$.}
\label{fig:ptb-dreg-train}
\end{figure}
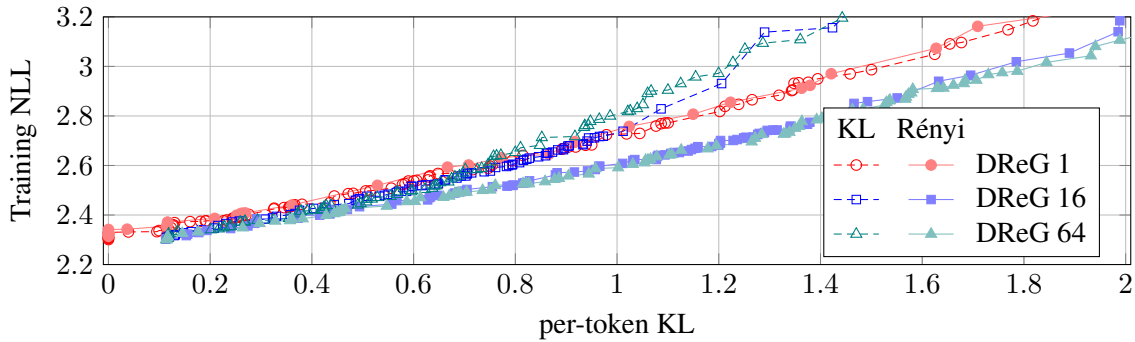

\begin{figure}
\begin{tikzpicture}
\begin{axis}[xmin=-0.01,xmax=2.01,ymin=2.2,ymax=3.2,
  xlabel=per-token KL, ylabel=Training NLL,
  height=0.32*\textwidth,
  width=\textwidth,
  grid=major,
  legend pos=south east,
  legend columns=2,
  legend style={
    font=\mystrut,
    legend cell align=left,
  },
]

\addlegendimage{legend image with text=KL}
\addlegendentry{}
\addlegendimage{legend image with text=Rényi}
\addlegendentry{}

\pgfplotstablesort[sort key=validklp]{\sorted}
                  {data/l8x10-bs256-hs128-t40-vimco1-kl}
\addplot[redhollow] table [x=validklp, y=validnll] {\sorted};
\addlegendentry{};

\pgfplotstablesort[sort key=validklp]{\sorted}
                  {data/l8x10-bs256-hs128-t40-vimco1-renyi}
\addplot[redsolid] table [x=validklp, y=validnll] {\sorted};
\addlegendentry{REINFORCE 1};

\pgfplotstablesort[sort key=validklp]{\sorted}
                  {data/l8x10-bs256-hs128-t40-vimco16-kl};
\addplot[bluehollow] table [x=validklp, y=validnll] {\sorted};
\addlegendentry{};

\pgfplotstablesort[sort key=validklp]{\sorted}
                  {data/l8x10-bs256-hs128-t40-vimco16-renyi}
\addplot[bluesolid] table [x=validklp, y=validnll] {\sorted};
\addlegendentry{VIMCO 16};

\pgfplotstablesort[sort key=validklp]{\sorted}
                  {data/l8x10-bs256-hs128-t40-vimco64-kl};
\addplot[tealhollow] table [x=validklp, y=validnll] {\sorted};
\addlegendentry{};

\pgfplotstablesort[sort key=validklp]{\sorted}
                  {data/l8x10-bs256-hs128-t40-vimco64-renyi};
\addplot[tealsolid] table [x=validklp, y=validnll] {\sorted};
\addlegendentry{VIMCO 64};

\end{axis}
\end{tikzpicture}
\caption{\small KL and Rényi objectives on PTB with base estimator VIMCO.}
\label{fig:ptb-vimco-train}
\end{figure}
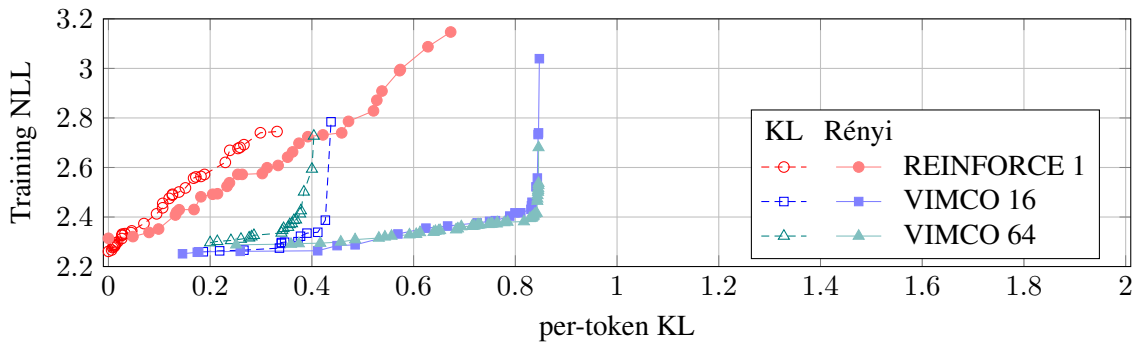

\subsubsection{Training Fit}
In terms of training fit, the language modelling results are similar to those of the synthetic case.
In \Cref{fig:ptb-dreg-train}, there is improvement in the continuous case with multiple samples, initially with both the KL and the Rényi objectives but only with the Rényi at high latent usage.

\Cref{fig:ptb-vimco-train} shows the discrete latents case.
Once again, with either the KL or the Rényi objective, the results improve markedly, outperforming the continuous models.
Note that the KL range is limited by the entropy of the latent space at around $\ln(10^8) / 21 \approx 0.87$.

Furthermore, as Appendix~\ref{sec:additional-language-modelling-experiments-asymmetric-samples} shows, improvements in training fit require multiple samples in both terms, even more so than in the experiments on the synthetic data earlier.

\begin{figure}
\begin{tikzpicture}
\begin{axis}[xmin=-0.01,xmax=2.01,ymin=4.5,ymax=5.5,
  xlabel=per-token KL, ylabel=Validation NLL,
  height=0.33*\textwidth,
  width=\textwidth,
  grid=major,
  legend pos=south east,
  legend columns=2,
  legend style={
    font=\mystrut,
    legend cell align=left,
  },
]

\addlegendimage{legend image with text=DReG}
\addlegendentry{}
\addlegendimage{legend image with text=VIMCO}
\addlegendentry{}

\pgfplotstablesort[sort key=validklp]{\sorted}
                  {data/l40x0-bs256-hs128-t40-dreg1-kl-valid}
\addplot[redhollow] table [x=validklp, y=validnll] {\sorted};
\addlegendentry{}

\pgfplotstablesort[sort key=validklp]{\sorted}
                  {data/l8x10-bs256-hs128-t40-vimco1-kl-valid}
\addplot[redsolid] table [x=validklp, y=validnll] {\sorted};
\addlegendentry{1 K}

\pgfplotstablesort[sort key=validklp]{\sorted}
                  {data/l8x10-bs256-hs128-t40-vimco16-kl-valid}
\addplot[bluehollow] table [x=validklp, y=validnll] {\sorted};
\addlegendentry{}

\pgfplotstablesort[sort key=validklp]{\sorted}
                  {data/l40x0-bs256-hs128-t40-dreg16-kl-valid}
\addplot[bluesolid] table [x=validklp, y=validnll] {\sorted};
\addlegendentry{16 K}

\pgfplotstablesort[sort key=validklp]{\sorted}
                  {data/l40x0-bs256-hs128-t40-dreg16-renyi-valid}
\addplot[tealhollow] table [x=validklp, y=validnll] {\sorted};
\addlegendentry{}

\pgfplotstablesort[sort key=validklp]{\sorted}
                  {data/l8x10-bs256-hs128-t40-vimco16-renyi-valid}
\addplot[tealsolid] table [x=validklp, y=validnll] {\sorted};
\addlegendentry{16 R}

\end{axis}
\end{tikzpicture}
\caption{\small Validation NLL with naive dropout using DReG and VIMCO on PTB. The row with \emph{16 K} refers to 16-sample DReG or VIMCO with the KL objective, while \emph{R} stands for the Rényi objective.}
\label{fig:ptb-naive-dropout-valid}
\end{figure}
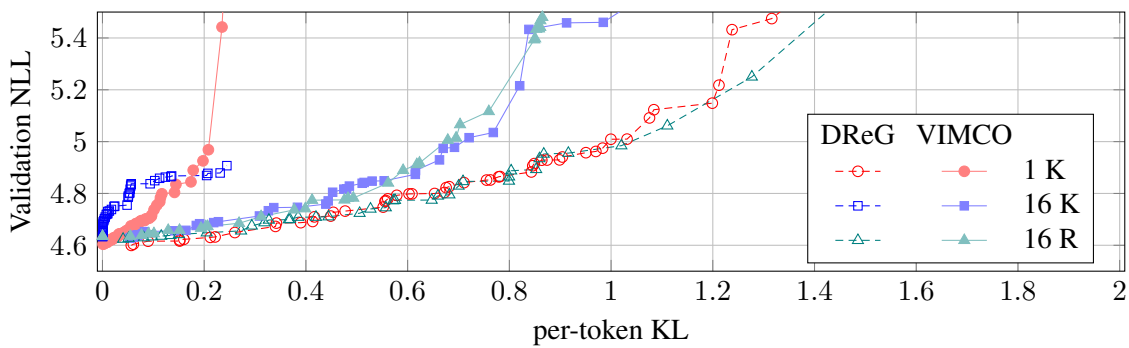

\subsubsection{Validation and Test Results}

Our initial results on the validation set were not very positive.
We tuned hyperparameters on the validation set, where following \cite{melis2017state}, we introduced three additional hyperparameters in the decoder: the rate of dropout applied to the input embedding, the recurrent state, and the output embedding.
As \Cref{fig:ptb-naive-dropout-valid} shows, with this strong form of regularization, the multi-sample results brought no improvement with continuous latents.
With discrete latents, the results improved but not nearly to the extent observed on the training set, and discrete latents performed considerably worse than their continuous counterparts.

\begin{figure}
\begin{tikzpicture}
\begin{axis}[xmin=-0.01,xmax=2.01,ymin=4.5,ymax=5.5,
  xlabel=per-token KL, ylabel=Validation NLL,
  height=0.33*\textwidth,
  width=\textwidth,
  grid=major,
  legend pos=south east,
  legend columns=2,
  legend style={
    font=\mystrut,
    legend cell align=left,
  },
]

\addlegendimage{legend image with text=DReG}
\addlegendentry{}
\addlegendimage{legend image with text=VIMCO}
\addlegendentry{}

\pgfplotstablesort[sort key=validklp]{\sorted}
                  {data/l40x0-bs256-hs128-t40-dreg1-kl-valid-l2}
\addplot[redhollow] table [x=validklp, y=validnll] {\sorted};
\addlegendentry{}

\pgfplotstablesort[sort key=validklp]{\sorted}
                  {data/l8x10-bs256-hs128-t40-vimco1-kl-valid-l2}
\addplot[redsolid] table [x=validklp, y=validnll] {\sorted};
\addlegendentry{1 K}

\pgfplotstablesort[sort key=validklp]{\sorted}
                  {data/l40x0-bs256-hs128-t40-dreg16-kl-valid-l2}
\addplot[bluehollow] table [x=validklp, y=validnll] {\sorted};
\addlegendentry{}

\pgfplotstablesort[sort key=validklp]{\sorted}
                  {data/l8x10-bs256-hs128-t40-vimco16-kl-valid-l2}
\addplot[bluesolid] table [x=validklp, y=validnll] {\sorted};
\addlegendentry{16 K}

\pgfplotstablesort[sort key=validklp]{\sorted}
                  {data/l40x0-bs256-hs128-t40-dreg16-renyi-valid-l2}
\addplot[tealhollow] table [x=validklp, y=validnll] {\sorted};
\addlegendentry{}

\pgfplotstablesort[sort key=validklp]{\sorted}
                  {data/l8x10-bs256-hs128-t40-vimco16-renyi-valid-l2}
\addplot[tealsolid] table [x=validklp, y=validnll] {\sorted};
\addlegendentry{16 R}

\end{axis}
\end{tikzpicture}
\caption{\small Validation NLL with L2 regularization using DReG and VIMCO on PTB.}
\label{fig:ptb-l2-penalty-valid}
\end{figure}
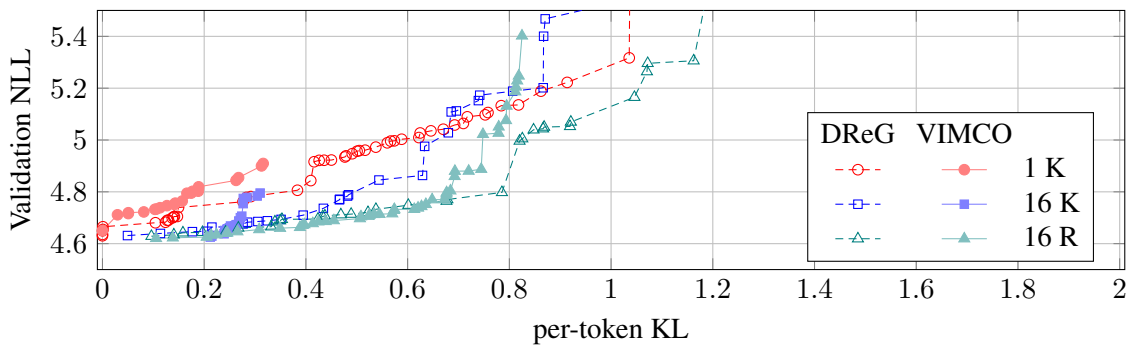

We suspected that the stochasticity from dropout may accentuate the problems of variational inference.
To verify, we repeated the validation experiment with L2 regularization instead of dropout (see \Cref{fig:ptb-l2-penalty-valid}).
In this setting, the validation results are more consistent with training fit. As before, multiple samples significantly improve efficiency.
Unlike the training fit though, validation NLL with discrete latents degrades faster than with continuous ones.
These results suggest that the additional stochasticity of dropout indeed poses a challenge.
On the other hand, with only L2 regularization, the best NLL is higher than with dropout.
This translates to a few perplexity points, which may seem small on the graphs presented here, but in language modelling, kingdoms have been won or lost on such differences \citep{merity2017regularizing}.

To have the best of both worlds, the best NLL of dropout and the Pareto curves of L2 regularization, we went back to dropout, but this time we tried using the same dropout mask for all latent samples belonging to the same sentence in a minibatch.
As \Cref{fig:ptb-same-sample-dropout-valid} shows, this change was successful, and we observed that our proposed estimators improve latent usage for both continuous and discrete latents, and discrete and continuous latents are on par up to an average KL of about 0.5.
This constitutes a significant advance in modelling with discrete latents.

\begin{figure}
\begin{tikzpicture}
\begin{axis}[xmin=-0.01,xmax=2.01,ymin=4.5,ymax=5.5,
  xlabel=per-token KL, ylabel=Validation NLL,
  height=0.33*\textwidth,
  width=\textwidth,
  grid=major,
  legend pos=south east,
  legend columns=2,
  legend style={
    font=\mystrut,
    legend cell align=left,
  },
]

\addlegendimage{legend image with text=DReG}
\addlegendentry{}
\addlegendimage{legend image with text=VIMCO}
\addlegendentry{}

\pgfplotstablesort[sort key=validklp]{\sorted}
                  {data/l40x0-bs256-hs128-t40-dreg1-kl-valid}
\addplot[redhollow] table [x=validklp, y=validnll] {\sorted};
\addlegendentry{}

\pgfplotstablesort[sort key=validklp]{\sorted}
                  {data/l8x10-bs256-hs128-t40-vimco1-kl-valid}
\addplot[redsolid] table [x=validklp, y=validnll] {\sorted};
\addlegendentry{1 K}

\pgfplotstablesort[sort key=validklp]{\sorted}
                  {data/l40x0-bs256-hs128-t40-dreg16-kl-valid-nsd}
\addplot[bluehollow] table [x=validklp, y=validnll] {\sorted};
\addlegendentry{}

\pgfplotstablesort[sort key=validklp]{\sorted}
                  {data/l8x10-bs256-hs128-t40-vimco16-kl-valid-nsd}
\addplot[bluesolid] table [x=validklp, y=validnll] {\sorted};
\addlegendentry{16 K}

\pgfplotstablesort[sort key=validklp]{\sorted}
                  {data/l40x0-bs256-hs128-t40-dreg16-renyi-valid-nsd}
\addplot[tealhollow] table [x=validklp, y=validnll] {\sorted};
\addlegendentry{}

\pgfplotstablesort[sort key=validklp]{\sorted}
                  {data/l8x10-bs256-hs128-t40-vimco16-renyi-valid-nsd}
\addplot[tealsolid] table [x=validklp, y=validnll] {\sorted};
\addlegendentry{16 R}

\end{axis}
\end{tikzpicture}
\caption{\small Validation NLL with DReG and VIMCO on PTB, using the same dropout mask for all latent samples belonging to the same sentence in a minibatch.}
\label{fig:ptb-same-sample-dropout-valid}
\end{figure}
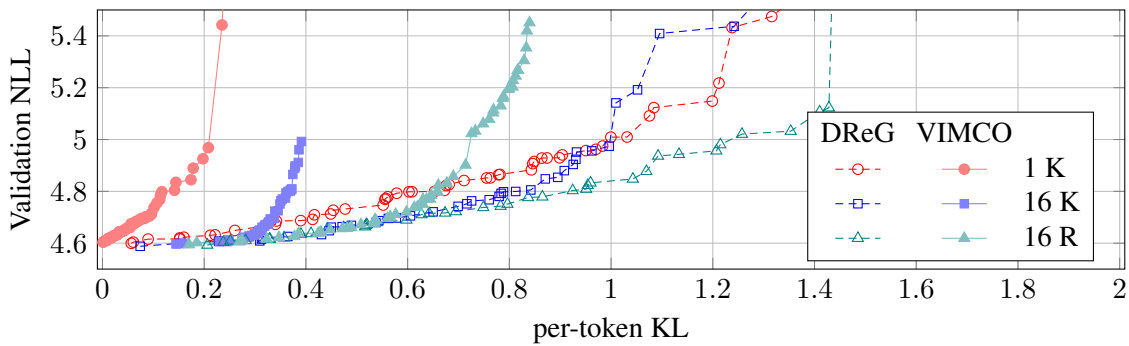

Contrary to results of \citet{pelsmaeker2019effective}, we did not find that introducing latent variables significantly improves outright perplexity.
As \Cref{tab:ptb-best-nlls} shows, the best test perplexity improves only slightly with more samples for this combination of a small model and strong regularization.

\begin{table}
  \centering
  \begin{tabular}{@{}llrrrrrr@{}}
    \toprule
    & Base & \multicolumn{3}{c}{DReG} & \multicolumn{3}{c}{VIMCO} \\
    \cmidrule(lr){3-5} \cmidrule(l){6-8}
    & \#Samples & \multicolumn{1}{c}{1} & \multicolumn{2}{c}{16}
      & \multicolumn{1}{c}{1} & \multicolumn{2}{c}{16} \\
    \cmidrule(lr){3-3} \cmidrule(l){4-5} \cmidrule(l){6-6} \cmidrule(l){7-8}
    & & K / R & \multicolumn{1}{c}{\hspace{0.5em}K}
              & \multicolumn{1}{c}{\hspace{0.5em}R}
      & K / R & \multicolumn{1}{c}{\hspace{0.5em}K}
              & \multicolumn{1}{c}{\hspace{0.em}R} \\
    \midrule
    \parbox[t]{5mm}{\multirow{3}{*}{\rotatebox[origin=c]{90}{\parbox{1.4cm}{\centering hs=128}}}}
    & Perplexity   & \nlltoppl{4.52728} & \nlltoppl{4.51893} & \nlltoppl{4.51719} & \nlltoppl{4.54946} & \nlltoppl{4.52252} & \nlltoppl{4.52724} \\
    & NLL          & 4.527              & 4.518              & 4.517              & 4.549              & 4.523              & 4.527              \\
    & Per-token KL & 0.047              & 0.131              & 0.126              & 0.000              & 0.164              & 0.120              \\
    \midrule
    \parbox[t]{5mm}{\multirow{3}{*}{\rotatebox[origin=c]{90}{\parbox{1.4cm}{\centering hs=256}}}}
    & Perplexity   & \nlltoppl{4.42942} & \nlltoppl{4.41602} & \nlltoppl{4.40914} & \nlltoppl{4.43953} & \nlltoppl{4.42395} & \nlltoppl{4.41962} \\
    & NLL          & 4.429              & 4.416              & 4.409              & 4.439              & 4.424              & 4.420              \\
    & Per-token KL & 0.000              & 0.140              & 0.138              & 0.001              & 0.167              & 0.094              \\
    \bottomrule
  \end{tabular}
  \caption{Best test results for DReG and VIMCO in the continuous and discrete cases, estimated with IWAE and 500 samples. These optima are at low (or, in the single-sample case, mostly negligible) latent usage.}
  \label{tab:ptb-best-nlls}
\end{table}

We also performed experiments with the power objective (\Cref{sec:experiments-with-the-power-objective}) to better understand the tradeoff it represents.
We found that its trivial implementation cost comes at the price of decreased efficiency relative to the general Rényi objective, of which it is a special case.

In closing, we would like to emphasize the importance of the results in \Cref{fig:ptb-same-sample-dropout-valid}.
It is not only that small and large improvements have been made, but that the evaluation throughout focussed on the apparent tradeoff between data fit and latent usage.
Every point on the Pareto curves is the result of tuning several hyperparemeters: the learning rate, $\lambda$, $\alpha$ for the Rényi objective, and three different dropout rates.
These curves capture and communicate what most published experiments do not and what single numbers (e.g. in \Cref{tab:ptb-best-nlls}) cannot: a reliable comparison of a latent variable model to a strongly regularized baseline over a wide range of latent usage.

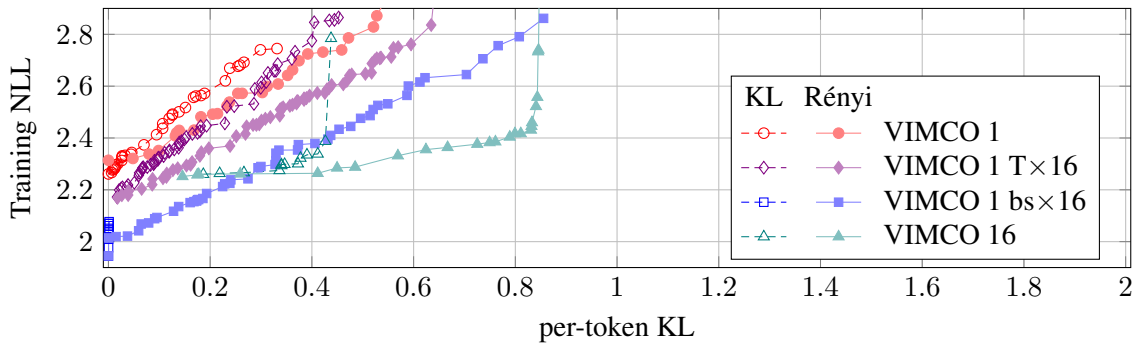
\begin{figure}
\begin{tikzpicture}
\begin{axis}[xmin=-0.01,xmax=2.01,ymin=1.9,ymax=2.9,
  xlabel=per-token KL, ylabel=Training NLL,
  height=0.33*\textwidth,
  width=\textwidth,
  grid=major,
  legend pos=south east,
  legend columns=2,
  legend style={
    font=\mystrut,
    legend cell align=left,
  },
]

\addlegendimage{legend image with text=KL}
\addlegendentry{}
\addlegendimage{legend image with text=Rényi}
\addlegendentry{}

\pgfplotstablesort[sort key=validklp]{\sorted}
                  {data/l8x10-bs256-hs128-t40-vimco1-kl}
\addplot[redhollow] table [x=validklp, y=validnll] {\sorted};
\addlegendentry{};

\pgfplotstablesort[sort key=validklp]{\sorted}
                  {data/l8x10-bs256-hs128-t40-vimco1-renyi}
\addplot[redsolid] table [x=validklp, y=validnll] {\sorted};
\addlegendentry{VIMCO 1};

\pgfplotstablesort[sort key=validklp]{\sorted}
                  {data/l8x10-bs256-hs128-t40x16-vimco1-kl}
\addplot[violethollow] table [x=validklp, y=validnll] {\sorted};
\addlegendentry{};

\pgfplotstablesort[sort key=validklp]{\sorted}
                  {data/l8x10-bs256-hs128-t40x16-vimco1-renyi}
\addplot[violetsolid] table [x=validklp, y=validnll] {\sorted};
\addlegendentry{VIMCO 1 T$\times$16};

\pgfplotstablesort[sort key=validklp]{\sorted}
                  {data/l8x10-bs4096-hs128-t40-vimco1-kl};
\addplot[bluehollow] table [x=validklp, y=validnll] {\sorted};
\addlegendentry{};

\pgfplotstablesort[sort key=validklp]{\sorted}
                  {data/l8x10-bs4096-hs128-t40-vimco1-renyi}
\addplot[bluesolid] table [x=validklp, y=validnll] {\sorted};
\addlegendentry{VIMCO 1 bs$\times$16};

\pgfplotstablesort[sort key=validklp]{\sorted}
                  {data/l8x10-bs256-hs128-t40-vimco16-kl};
\addplot[tealhollow] table [x=validklp, y=validnll] {\sorted};
\addlegendentry{};

\pgfplotstablesort[sort key=validklp]{\sorted}
                  {data/l8x10-bs256-hs128-t40-vimco16-renyi};
\addplot[tealsolid] table [x=validklp, y=validnll] {\sorted};
\addlegendentry{VIMCO 16};

\end{axis}
\end{tikzpicture}
\caption{\small Training NLL on PTB with KL and Rényi objectives and base estimator VIMCO. VIMCO 1 T$\times$16 is trained 16 times longer. VIMCO 1 bs$\times$16 has a 16 times larger batch size. While their best NLL at very low latent usage is lower than that of VIMCO 16, they lose this advantage at higher latent usage.}
\label{fig:ptb-vimco-train-x16}
\end{figure}

\begin{figure}
\begin{tikzpicture}
\begin{axis}[xmin=-0.01,xmax=2.01,ymin=4.5,ymax=5.5,
  xlabel=per-token KL, ylabel=Validation NLL,
  height=0.33*\textwidth,
  width=\textwidth,
  grid=major,
  legend pos=south east,
  legend columns=2,
  legend style={
    font=\mystrut,
    legend cell align=left,
  },
]

\addlegendimage{legend image with text=KL}
\addlegendentry{}
\addlegendimage{legend image with text=Rényi}
\addlegendentry{}

\pgfplotstablesort[sort key=validklp]{\sorted}
                  {data/l8x10-bs256-hs128-t40-vimco1-kl-valid}
\addplot[redhollow] table [x=validklp, y=validnll] {\sorted};
\addlegendentry{};

\pgfplotstablesort[sort key=validklp]{\sorted}
                  {data/l8x10-bs256-hs128-t40-vimco1-renyi-valid}
\addplot[redsolid] table [x=validklp, y=validnll] {\sorted};
\addlegendentry{VIMCO 1};

\pgfplotstablesort[sort key=validklp]{\sorted}
                  {data/l8x10-bs256-hs128-t40x16-vimco1-kl-valid}
\addplot[violethollow] table [x=validklp, y=validnll] {\sorted};
\addlegendentry{};

\pgfplotstablesort[sort key=validklp]{\sorted}
                  {data/l8x10-bs256-hs128-t40x16-vimco1-renyi-valid}
\addplot[violetsolid] table [x=validklp, y=validnll] {\sorted};
\addlegendentry{VIMCO 1 T$\times$16};

\pgfplotstablesort[sort key=validklp]{\sorted}
                  {data/l8x10-bs4096-hs128-t40-vimco1-kl-valid};
\addplot[bluehollow] table [x=validklp, y=validnll] {\sorted};
\addlegendentry{};

\pgfplotstablesort[sort key=validklp]{\sorted}
                  {data/l8x10-bs4096-hs128-t40-vimco1-renyi-valid}
\addplot[bluesolid] table [x=validklp, y=validnll] {\sorted};
\addlegendentry{VIMCO 1 bs$\times$16};

\pgfplotstablesort[sort key=validklp]{\sorted}
                  {data/l8x10-bs256-hs128-t40-vimco16-kl-valid-nsd};
\addplot[tealhollow] table [x=validklp, y=validnll] {\sorted};
\addlegendentry{};

\pgfplotstablesort[sort key=validklp]{\sorted}
                  {data/l8x10-bs256-hs128-t40-vimco16-renyi-valid-nsd};
\addplot[tealsolid] table [x=validklp, y=validnll] {\sorted};
\addlegendentry{VIMCO 16};

\end{axis}
\end{tikzpicture}
\caption{\small Validation NLL on PTB with KL and Rényi objectives and base estimator VIMCO. VIMCO 1 T$\times$16 is trained 16 times longer. VIMCO 1 bs$\times$16 has a 16 times larger batch size. VIMCO 16 performs better than either.}
\label{fig:ptb-vimco-valid-x16}
\end{figure}
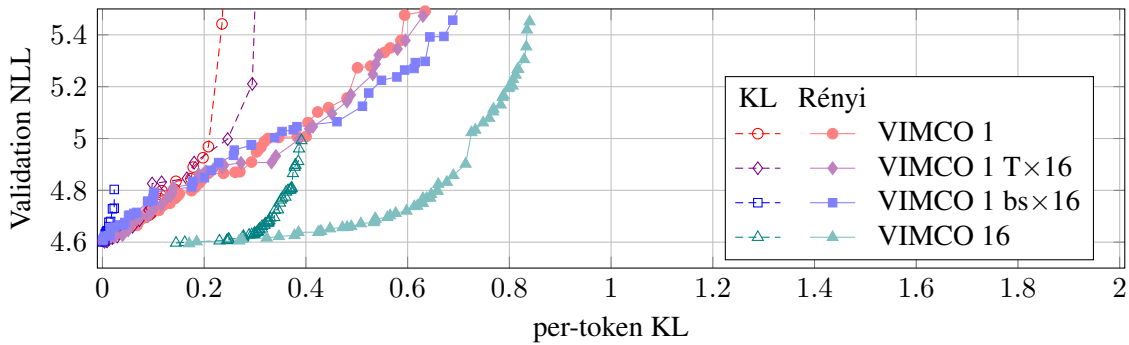

\subsubsection{Single- vs multi-sample at the same computational budget}
\label{sec:single-vs-multi-sample}

The argument for using more samples in Monte-Carlo objectives is that their lower bound is tighter, resulting in a better data fit vs latent usage tradeoff.
Having equipped multi-sample Monte-Carlo objectives with mutual information constraints, we have indeed demonstrated improvements in this tradeoff.
Here, we present additional experiments to answer whether performing more computation with a single-sample estimator can compensate the advantages of multiple samples.
To this end, we compare 16-sample to single-sample estimators, where the latter are trained for 16 times more optimization steps (\emph{T$\times$16}) or with a batch size that is 16 times larger (\emph{bs$\times$16}) to make the computational cost more equal.
In fact, these perform an equal number of gradient calculations but more forward computation as they evaluate $q(z|x)$ 16 times more than the 16-sample Monte-Carlo estimator.

In \Cref{fig:ptb-vimco-train-x16}, we observe that with more computation, the best training fit of VIMCO 1 improves beyond that of VIMCO 16 , but the tradeoff remains severe, as evidenced by that the shapes of the Pareto curves hardly change from the single-sample baseline.

With regards to validation results, we expected the strong regularization provided by tuning three different dropout rates to compensate for possible regularization effects of fewer optimization steps and smaller batch sizes.
Consequently, all single-sample estimators shall exhibit similar validation results at zero latent usage.
\Cref{fig:ptb-vimco-valid-x16} confirms this, and shows that the steeper training curves translate to steeper validation curves, indicating that increasing the computation does not address the ineffeciency of the inference method, and the apparent improvement in training comes at the cost of more overfitting.
In \Cref{sec:additional-language-modelling-experiments-robustness}, \Cref{fig:ptb-dreg-train-x16} and \Cref{fig:ptb-dreg-valid-x16} tell a similar story for DReG, but the results are less conclusive there since the curves are much closer.

\section{Conclusions}
\label{sec:conclusions}

We identified the underspecification of the model and the looseness of the lower bound as two important issues that cause posterior collapse and put forward a natural combination of Monte-Carlo objectives and mutual information constraints to address both at the same time.
The proposed estimators of the mutual information reuse samples from the Monte-Carlo objective of the marginal likelihood to estimate the KL of the true posterior from the prior.
We showed that the representational KL, often used in mutual information constraints, corresponds to the single-sample version of our estimators.
Taking more samples both tightens the lower bound and reduces the variance of the estimate of the true KL.
Our experimental results support these theoretical predictions and underline the need to use multiple samples for both terms of the objective.

Recognizing that the problem of underfitting becomes more acute with increased latent usage, we emphasized evaluating estimators on the training set, where regularization does not cloud the picture, instead of going outright for improvements in held-out performance.
In addition, evaluation was performed in terms of the Pareto frontier of negative log-likelihood vs latent usage curves since reporting a single number cannot capture the tradeoff between the two quantities.

The results demonstrated increased efficiency in latent usage on both the synthetic and language modelling tasks.
For discrete latent spaces in particular, the improvements have been dramatic: from a very weak baseline, data fit improved beyond that of models with continuous latents on both data sets.
In terms of validation results on Penn Treebank, the best continuous and discrete models and estimators are closely matched up until a significant, per-token KL of 0.5 (about 10.5 as a per-sentence KL).

This work is towards opening the door to making the latents truly useful.
We believe that substantial gains in genaralization and utility in down-stream tasks can be achieved by shaping the latent space.
It also remains to be explored how to best encode the true posterior as the representation for use in down-stream models since the approximate posterior is no longer a suitable choice in the context of Monte-Carlo objectives.

In summary, our Mutual Information constrained Monte-Carlo Objectives (MICMCOs) help achieve a better tradeoff between modelling the data and using the latent variables to drive the generative process: a prerequisite for fulfilling the promise of generative modelling.
This tradeoff is still quite severe though, and there is a lot of room for improvement.


\appendix

\section{Additional Experiments on Synthetic Data}
\label{sec:additional-experiments-on-synthetic-data}

Since the KL cost in VQ-VAEs is determined by the latent space and is fixed during training, we tune the number of latent variables and the number of categories to control the mutual information.
As \Cref{fig:synthetic-vq-vae} shows, the VQ-VAE is generally better than REINFORCE 1 but quite far from the optimum and much worse than VIMCO 16 with either the KL or the Rényi objective.

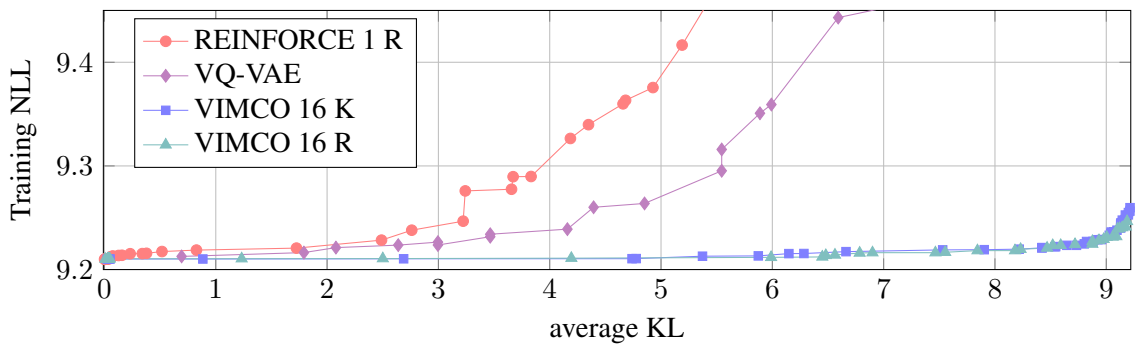
\begin{figure}
\begin{tikzpicture}
\begin{axis}[xmin=-0.01,xmax=9.22,ymin=9.20,ymax=9.45,
  xlabel=average KL, ylabel=Training NLL,
  height=0.33*\textwidth,
  width=\textwidth,
  grid=major,
  legend pos=north west,
  legend columns=1,
  legend style={
    font=\mystrut,
    legend cell align=left,
  },
]

\pgfplotstablesort[sort key=validklp]{\sorted}
                  {data/synthetic/l8x10-bs256-hs128-t40-vimco1-renyi}
\addplot[redsolid] table [x=validklp, y=validnll] {\sorted};
\addlegendentry{REINFORCE 1 R};

\pgfplotstablesort[sort key=validklp]{\sorted}
                  {data/synthetic/bs256-hs128-t40-vq-vae}
\addplot[violetsolid] table [x=validklp, y=validnll] {\sorted};
\addlegendentry{VQ-VAE};

\pgfplotstablesort[sort key=validklp]{\sorted}
                  {data/synthetic/l8x10-bs256-hs128-t40-vimco16-kl};
\addplot[bluesolid] table [x=validklp, y=validnll] {\sorted};
\addlegendentry{VIMCO 16 K}

\pgfplotstablesort[sort key=validklp]{\sorted}
                  {data/synthetic/l8x10-bs256-hs128-t40-vimco16-renyi}
\addplot[tealsolid] table [x=validklp, y=validnll] {\sorted};
\addlegendentry{VIMCO 16 R};

\end{axis}
\end{tikzpicture}
\caption{\small VQ-VAE on synthetic data with tuned latent sizes compared to REINFORCE 1 and VIMCO 16 with the KL and Rényi objectives.}
\label{fig:synthetic-vq-vae}
\end{figure}

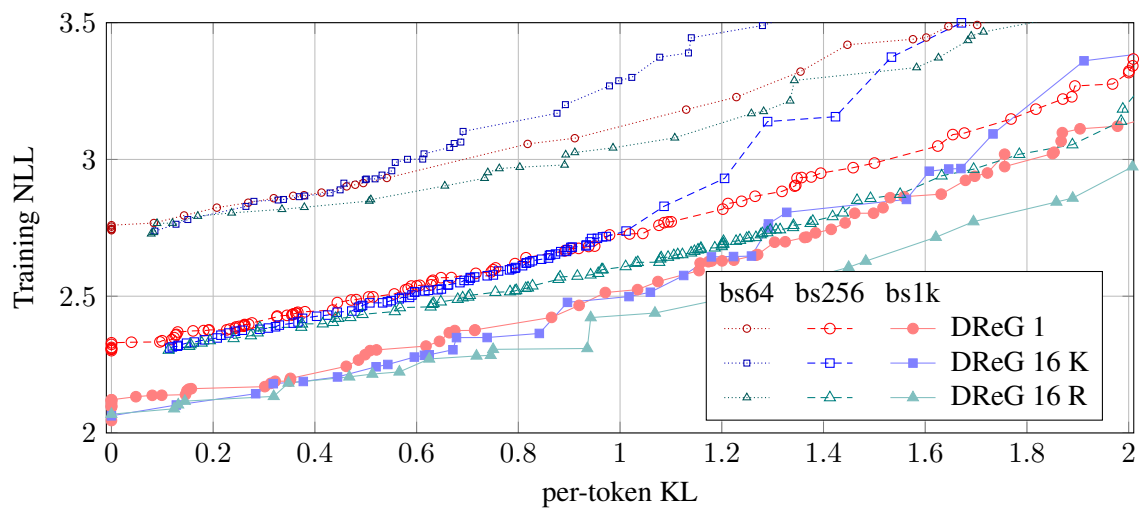
\begin{figure}
\begin{tikzpicture}
\begin{axis}[xmin=-0.01,xmax=2.01,ymin=2.0,ymax=3.5,
  xlabel=per-token KL, ylabel=Training NLL,
  height=1.4*0.33*\textwidth,
  width=\textwidth,
  grid=major,
  legend pos=south east,
  legend columns=3,
  legend style={
    font=\mystrut,
    legend cell align=left,
  },
]

\addlegendimage{legend image with text=bs64}
\addlegendentry{}
\addlegendimage{legend image with text=bs256}
\addlegendentry{}
\addlegendimage{legend image with text=bs1k}
\addlegendentry{}

\pgfplotstablesort[sort key=validklp]{\sorted}
                  {data/l40x0-bs64-hs128-t40-dreg1-kl}
\addplot[redweak] table [x=validklp, y=validnll] {\sorted};
\addlegendentry{}

\pgfplotstablesort[sort key=validklp]{\sorted}
                  {data/l40x0-bs256-hs128-t40-dreg1-kl}
\addplot[redhollow]
        table [x=validklp, y=validnll] {\sorted};
\addlegendentry{}

\pgfplotstablesort[sort key=validklp]{\sorted}
                  {data/l40x0-bs1024-hs128-t40-dreg1-kl}
\addplot[redsolid] table [x=validklp, y=validnll] {\sorted};
\addlegendentry{DReG 1}

\pgfplotstablesort[sort key=validklp]{\sorted}
                  {data/l40x0-bs64-hs128-t40-dreg16-kl};
\addplot[blueweak] table [x=validklp, y=validnll] {\sorted};
\addlegendentry{}

\pgfplotstablesort[sort key=validklp]{\sorted}
                  {data/l40x0-bs256-hs128-t40-dreg16-kl};
\addplot[bluehollow] table [x=validklp, y=validnll] {\sorted};
\addlegendentry{}

\pgfplotstablesort[sort key=validklp]{\sorted}
                  {data/l40x0-bs1024-hs128-t40-dreg16-kl};
\addplot[bluesolid] table [x=validklp, y=validnll] {\sorted};
\addlegendentry{DReG 16 K}

\pgfplotstablesort[sort key=validklp]{\sorted}
                  {data/l40x0-bs64-hs128-t40-dreg16-renyi}
\addplot[tealweak] table [x=validklp, y=validnll] {\sorted};
\addlegendentry{}

\pgfplotstablesort[sort key=validklp]{\sorted}
                  {data/l40x0-bs256-hs128-t40-dreg16-renyi}
\addplot[tealhollow] table [x=validklp, y=validnll] {\sorted};
\addlegendentry{}

\pgfplotstablesort[sort key=validklp]{\sorted}
                  {data/l40x0-bs1024-hs128-t40-dreg16-renyi}
\addplot[tealsolid] table [x=validklp, y=validnll] {\sorted};
\addlegendentry{DReG 16 R}

\end{axis}
\end{tikzpicture}
\caption{\small The effect of batch size with DReG and 128 hidden units on PTB.}
\label{fig:ptb-dreg-bs}
\end{figure}

\begin{figure}
\begin{tikzpicture}
\begin{axis}[xmin=-0.01,xmax=2.01,ymin=2.0,ymax=3.0,
  xlabel=per-token KL, ylabel=Training NLL,
  height=0.33*\textwidth,
  width=\textwidth,
  grid=major,
  legend pos=south east,
  legend columns=2,
  legend style={
    font=\mystrut,
    legend cell align=left,
  },
]

\addlegendimage{legend image with text=T40}
\addlegendentry{}
\addlegendimage{legend image with text=T80}
\addlegendentry{}

\pgfplotstablesort[sort key=validklp]{\sorted}
                  {data/l40x0-bs256-hs128-t40-dreg1-kl}
\addplot[redhollow] table [x=validklp, y=validnll] {\sorted};
\addlegendentry{}

\pgfplotstablesort[sort key=validklp]{\sorted}
                  {data/l40x0-bs256-hs128-t80-dreg1-kl}
\addplot[redsolid] table [x=validklp, y=validnll] {\sorted};
\addlegendentry{DReG 1}

\pgfplotstablesort[sort key=validklp]{\sorted}
                  {data/l40x0-bs256-hs128-t40-dreg16-kl};
\addplot[bluehollow] table [x=validklp, y=validnll] {\sorted};
\addlegendentry{}

\pgfplotstablesort[sort key=validklp]{\sorted}
                  {data/l40x0-bs256-hs128-t80-dreg16-kl};
\addplot[bluesolid] table [x=validklp, y=validnll] {\sorted};
\addlegendentry{DReG 16 K}

\pgfplotstablesort[sort key=validklp]{\sorted}
                  {data/l40x0-bs256-hs128-t40-dreg16-renyi}
\addplot[tealhollow] table [x=validklp, y=validnll] {\sorted};
\addlegendentry{}

\pgfplotstablesort[sort key=validklp]{\sorted}
                  {data/l40x0-bs256-hs128-t80-dreg16-renyi}
\addplot[tealsolid] table [x=validklp, y=validnll] {\sorted};
\addlegendentry{DReG 16 R}

\end{axis}
\end{tikzpicture}
\caption{\small The effect of optimization length (40 or 80 thousand optimization steps) with DReG and 128 hidden units on PTB.}
\label{fig:ptb-dreg-ol-hs128}
\end{figure}

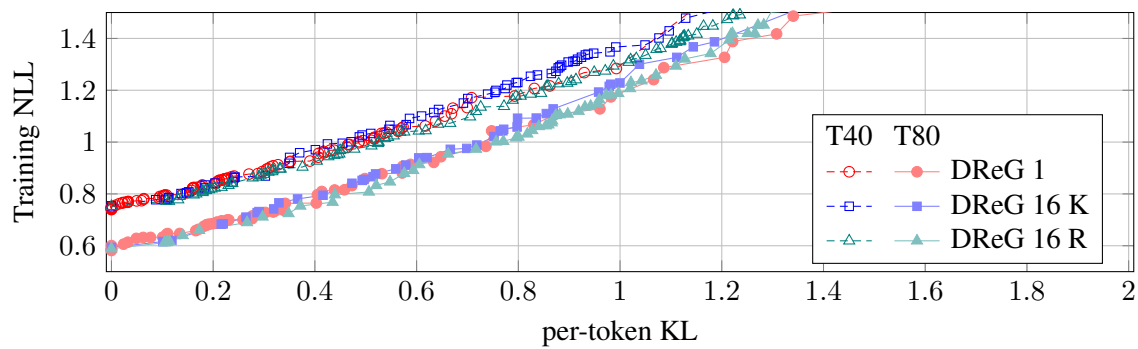
\begin{figure}
\begin{tikzpicture}
\begin{axis}[xmin=-0.01,xmax=2.01,ymin=0.5,ymax=1.5,
  xlabel=per-token KL, ylabel=Training NLL,
  height=0.33*\textwidth,
  width=\textwidth,
  grid=major,
  legend pos=south east,
  legend columns=2,
  legend style={
    font=\mystrut,
    legend cell align=left,
  },
]

\addlegendimage{legend image with text=T40}
\addlegendentry{}
\addlegendimage{legend image with text=T80}
\addlegendentry{}

\pgfplotstablesort[sort key=validklp]{\sorted}
                  {data/l40x0-bs256-hs256-t40-dreg1-kl}
\addplot[redhollow] table [x=validklp, y=validnll] {\sorted};
\addlegendentry{}

\pgfplotstablesort[sort key=validklp]{\sorted}
                  {data/l40x0-bs256-hs256-t80-dreg1-kl}
\addplot[redsolid] table [x=validklp, y=validnll] {\sorted};
\addlegendentry{DReG 1}

\pgfplotstablesort[sort key=validklp]{\sorted}
                  {data/l40x0-bs256-hs256-t40-dreg16-kl};
\addplot[bluehollow] table [x=validklp, y=validnll] {\sorted};
\addlegendentry{}

\pgfplotstablesort[sort key=validklp]{\sorted}
                  {data/l40x0-bs256-hs256-t80-dreg16-kl};
\addplot[bluesolid] table [x=validklp, y=validnll] {\sorted};
\addlegendentry{DReG 16 K}

\pgfplotstablesort[sort key=validklp]{\sorted}
                  {data/l40x0-bs256-hs256-t40-dreg16-renyi}
\addplot[tealhollow] table [x=validklp, y=validnll] {\sorted};
\addlegendentry{}

\pgfplotstablesort[sort key=validklp]{\sorted}
                  {data/l40x0-bs256-hs256-t80-dreg16-renyi}
\addplot[tealsolid] table [x=validklp, y=validnll] {\sorted};
\addlegendentry{DReG 16 R}

\end{axis}
\end{tikzpicture}
\caption{\small The effect of optimization length (40 or 80 thousand optimization steps)
  with DReG with 256 hidden units on PTB.}
\label{fig:ptb-dreg-ol-hs256}
\end{figure}

\begin{figure}
\begin{tikzpicture}
\begin{axis}[xmin=-0.01,xmax=2.01,ymin=2.0,ymax=3.5,
  xlabel=per-token KL, ylabel=Training NLL,
  height=1.4*0.33*\textwidth,
  width=\textwidth,
  grid=major,
  legend pos=south east,
  legend columns=3,
  legend style={
    font=\mystrut,
    legend cell align=left,
  },
]

\addlegendimage{legend image with text=bs64}
\addlegendentry{}
\addlegendimage{legend image with text=bs256}
\addlegendentry{}
\addlegendimage{legend image with text=bs1k}
\addlegendentry{}

\pgfplotstablesort[sort key=validklp]{\sorted}
                  {data/l8x10-bs64-hs128-t40-vimco1-kl}
\addplot[redweak] table [x=validklp, y=validnll] {\sorted};
\addlegendentry{}

\pgfplotstablesort[sort key=validklp]{\sorted}
                  {data/l8x10-bs256-hs128-t40-vimco1-kl}
\addplot[redhollow] table [x=validklp, y=validnll] {\sorted};
\addlegendentry{}

\pgfplotstablesort[sort key=validklp]{\sorted}
                  {data/l8x10-bs1024-hs128-t40-vimco1-kl}
\addplot[redsolid]
        table [x=validklp, y=validnll] {\sorted};
\addlegendentry{VIMCO 1}

\pgfplotstablesort[sort key=validklp]{\sorted}
                  {data/l8x10-bs64-hs128-t40-vimco16-kl};
\addplot[blueweak] table [x=validklp, y=validnll] {\sorted};
\addlegendentry{}

\pgfplotstablesort[sort key=validklp]{\sorted}
                  {data/l8x10-bs256-hs128-t40-vimco16-kl};
\addplot[bluehollow] table [x=validklp, y=validnll] {\sorted};
\addlegendentry{}

\pgfplotstablesort[sort key=validklp]{\sorted}
                  {data/l8x10-bs1024-hs128-t40-vimco16-kl};
\addplot[bluesolid] table [x=validklp, y=validnll] {\sorted};
\addlegendentry{VIMCO 16 K}

\pgfplotstablesort[sort key=validklp]{\sorted}
                  {data/l8x10-bs64-hs128-t40-vimco16-renyi}
\addplot[tealweak] table [x=validklp, y=validnll] {\sorted};
\addlegendentry{}

\pgfplotstablesort[sort key=validklp]{\sorted}
                  {data/l8x10-bs256-hs128-t40-vimco16-renyi}
\addplot[tealhollow] table [x=validklp, y=validnll] {\sorted};
\addlegendentry{}

\pgfplotstablesort[sort key=validklp]{\sorted}
                  {data/l8x10-bs1024-hs128-t40-vimco16-renyi}
\addplot[tealsolid] table [x=validklp, y=validnll] {\sorted};
\addlegendentry{VIMCO 16 R}

\end{axis}
\end{tikzpicture}
\caption{\small The effect of batch size with VIMCO and 128 hidden units on PTB.}
\label{fig:ptb-vimco-bs}
\end{figure}
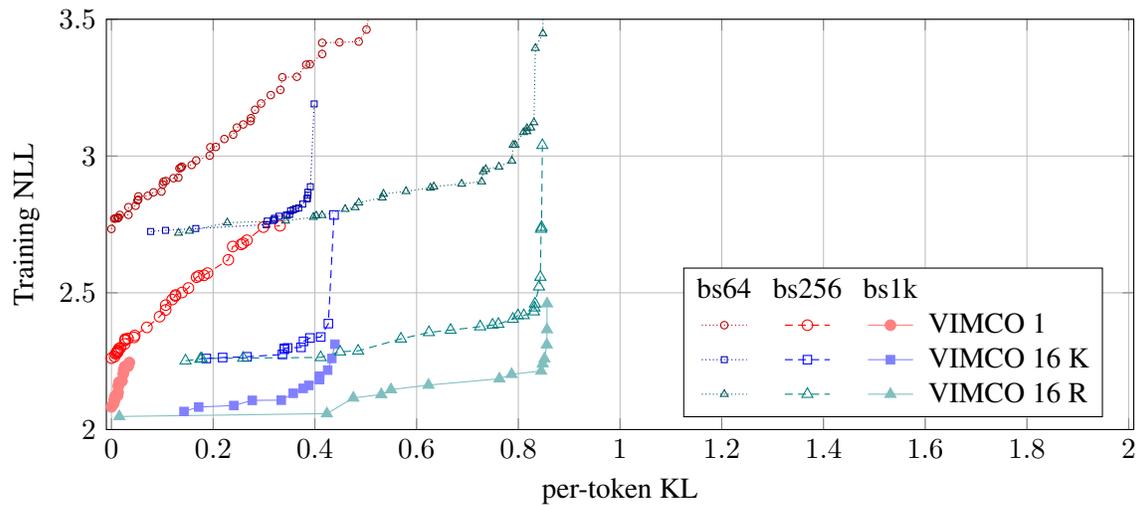

\begin{figure}
\begin{tikzpicture}
\begin{axis}[xmin=-0.01,xmax=2.01,ymin=2.0,ymax=3.0,
  xlabel=per-token KL, ylabel=Training NLL,
  height=0.33*\textwidth,
  width=\textwidth,
  grid=major,
  legend pos=south east,
  legend columns=2,
  legend style={
    font=\mystrut,
    legend cell align=left,
  },
]

\addlegendimage{legend image with text=T40}
\addlegendentry{}
\addlegendimage{legend image with text=T80}
\addlegendentry{}

\pgfplotstablesort[sort key=validklp]{\sorted}
                  {data/l8x10-bs256-hs128-t40-vimco1-kl}
\addplot[redhollow] table [x=validklp, y=validnll] {\sorted};
\addlegendentry{}

\pgfplotstablesort[sort key=validklp]{\sorted}
                  {data/l8x10-bs256-hs128-t80-vimco1-kl}
\addplot[redsolid] table [x=validklp, y=validnll] {\sorted};
\addlegendentry{VIMCO 1}

\pgfplotstablesort[sort key=validklp]{\sorted}
                  {data/l8x10-bs256-hs128-t40-vimco16-kl};
\addplot[bluehollow] table [x=validklp, y=validnll] {\sorted};
\addlegendentry{}

\pgfplotstablesort[sort key=validklp]{\sorted}
                  {data/l8x10-bs256-hs128-t80-vimco16-kl};
\addplot[bluesolid] table [x=validklp, y=validnll] {\sorted};
\addlegendentry{VIMCO 16 K}

\pgfplotstablesort[sort key=validklp]{\sorted}
                  {data/l8x10-bs256-hs128-t40-vimco16-renyi}
\addplot[tealhollow] table [x=validklp, y=validnll] {\sorted};
\addlegendentry{}

\pgfplotstablesort[sort key=validklp]{\sorted}
                  {data/l8x10-bs256-hs128-t80-vimco16-renyi}
\addplot[tealsolid] table [x=validklp, y=validnll] {\sorted};
\addlegendentry{VIMCO 16 R}

\end{axis}
\end{tikzpicture}
\caption{\small The effect of optimization length (40 or 80 thousand optimization steps)
  with VIMCO and 128 hidden units on PTB.}
\label{fig:ptb-vimco-ol-hs128}
\end{figure}

\begin{figure}
\begin{tikzpicture}
\begin{axis}[xmin=-0.01,xmax=2.01,ymin=0.5,ymax=1.5,
  xlabel=per-token KL, ylabel=Training NLL,
  height=0.33*\textwidth,
  width=\textwidth,
  grid=major,
  legend pos=south east,
  legend columns=2,
  legend style={
    font=\mystrut,
    legend cell align=left,
  },
]

\addlegendimage{legend image with text=T40}
\addlegendentry{}
\addlegendimage{legend image with text=T80}
\addlegendentry{}

\pgfplotstablesort[sort key=validklp]{\sorted}
                  {data/l8x10-bs256-hs256-t40-vimco1-kl}
\addplot[redhollow] table [x=validklp, y=validnll] {\sorted};
\addlegendentry{}

\pgfplotstablesort[sort key=validklp]{\sorted}
                  {data/l8x10-bs256-hs256-t80-vimco1-kl}
\addplot[redsolid] table [x=validklp, y=validnll] {\sorted};
\addlegendentry{VIMCO 1}

\pgfplotstablesort[sort key=validklp]{\sorted}
                  {data/l8x10-bs256-hs256-t40-vimco16-kl};
\addplot[bluehollow] table [x=validklp, y=validnll] {\sorted};
\addlegendentry{}

\pgfplotstablesort[sort key=validklp]{\sorted}
                  {data/l8x10-bs256-hs256-t80-vimco16-kl};
\addplot[bluesolid] table [x=validklp, y=validnll] {\sorted};
\addlegendentry{VIMCO 16 K}

\pgfplotstablesort[sort key=validklp]{\sorted}
                  {data/l8x10-bs256-hs256-t40-vimco16-renyi}
\addplot[tealhollow] table [x=validklp, y=validnll] {\sorted};
\addlegendentry{}

\pgfplotstablesort[sort key=validklp]{\sorted}
                  {data/l8x10-bs256-hs256-t80-vimco16-renyi}
\addplot[tealsolid] table [x=validklp, y=validnll] {\sorted};
\addlegendentry{VIMCO 16 R}

\end{axis}
\end{tikzpicture}
\caption{\small The effect of optimization length (40 or 80 thousand optimization steps) with VIMCO and 256 hidden units on PTB.}
\label{fig:ptb-vimco-ol-hs256}
\end{figure}

\begin{figure}
\begin{tikzpicture}
\begin{axis}[xmin=-0.01,xmax=2.01,ymin=2.2,ymax=3.2,
  xlabel=per-token KL, ylabel=Training NLL,
  height=0.33*\textwidth,
  width=\textwidth,
  grid=major,
  legend pos=north west,
  legend columns=2,
  legend style={
    font=\mystrut,
    legend cell align=left,
  },
]

\addlegendimage{legend image with text=KL}
\addlegendentry{}
\addlegendimage{legend image with text=Rényi}
\addlegendentry{}

\pgfplotstablesort[sort key=validklp]{\sorted}
                  {data/l40x0-bs256-hs128-t40-dreg1-kl}
\addplot[redhollow] table [x=validklp, y=validnll] {\sorted};
\addlegendentry{};

\pgfplotstablesort[sort key=validklp]{\sorted}
                  {data/l40x0-bs256-hs128-t40-dreg1-renyi}
\addplot[redsolid] table [x=validklp, y=validnll] {\sorted};
\addlegendentry{DReG 1};

\pgfplotstablesort[sort key=validklp]{\sorted}
                  {data/l40x0-bs256-hs128-t40-dreg1-kl16};
\addplot[violethollow] table [x=validklp, y=validnll] {\sorted};
\addlegendentry{};

\pgfplotstablesort[sort key=validklp]{\sorted}
                  {data/l40x0-bs256-hs128-t40-dreg1-renyi16}
\addplot[violetsolid] table [x=validklp, y=validnll] {\sorted};
\addlegendentry{DReG 1/16};

\pgfplotstablesort[sort key=validklp]{\sorted}
                  {data/l40x0-bs256-hs128-t40-dreg16-kl1};
\addplot[bluehollow] table [x=validklp, y=validnll] {\sorted};
\addlegendentry{};

\pgfplotstablesort[sort key=validklp]{\sorted}
                  {data/l40x0-bs256-hs128-t40-dreg16-renyi1}
\addplot[bluesolid] table [x=validklp, y=validnll] {\sorted};
\addlegendentry{DReG 16/1};

\pgfplotstablesort[sort key=validklp]{\sorted}
                  {data/l40x0-bs256-hs128-t40-dreg16-kl};
\addplot[tealhollow] table [x=validklp, y=validnll] {\sorted};
\addlegendentry{};

\pgfplotstablesort[sort key=validklp]{\sorted}
                  {data/l40x0-bs256-hs128-t40-dreg16-renyi};
\addplot[tealsolid] table [x=validklp, y=validnll] {\sorted};
\addlegendentry{DReG 16};

\end{axis}
\end{tikzpicture}
\caption{\small KL and Rényi objectives on PTB with base estimator DReG. DReG 1 uses a single sample to estimate both $p(x)$ and $\KL(p(z|x)\|p(z))$. DReG 16/1 uses 16 samples to estimate $p(x)$ and 1 samples for the KL. DReG 1/16 uses 1 sample to estimate $p(x)$ and 16 samples for the KL. Finally, DReG 16 uses the same 16 samples for both terms.}
\label{fig:ptb-dreg-train-samples}
\end{figure}

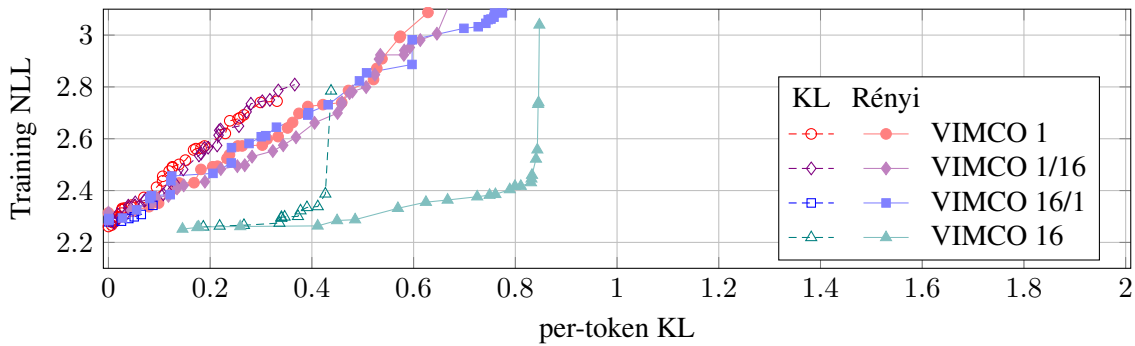
\begin{figure}
\begin{tikzpicture}
\begin{axis}[xmin=-0.01,xmax=2.01,ymin=2.1,ymax=3.1,
  xlabel=per-token KL, ylabel=Training NLL,
  height=0.33*\textwidth,
  width=\textwidth,
  grid=major,
  legend pos=south east,
  legend columns=2,
  legend style={
    font=\mystrut,
    legend cell align=left,
  },
]

\addlegendimage{legend image with text=KL}
\addlegendentry{}
\addlegendimage{legend image with text=Rényi}
\addlegendentry{}

\pgfplotstablesort[sort key=validklp]{\sorted}
                  {data/l8x10-bs256-hs128-t40-vimco1-kl}
\addplot[redhollow] table [x=validklp, y=validnll] {\sorted};
\addlegendentry{};

\pgfplotstablesort[sort key=validklp]{\sorted}
                  {data/l8x10-bs256-hs128-t40-vimco1-renyi}
\addplot[redsolid] table [x=validklp, y=validnll] {\sorted};
\addlegendentry{VIMCO 1};

\pgfplotstablesort[sort key=validklp]{\sorted}
                  {data/l8x10-bs256-hs128-t40-vimco1-kl16};
\addplot[violethollow] table [x=validklp, y=validnll] {\sorted};
\addlegendentry{};

\pgfplotstablesort[sort key=validklp]{\sorted}
                  {data/l8x10-bs256-hs128-t40-vimco1-renyi16}
\addplot[violetsolid] table [x=validklp, y=validnll] {\sorted};
\addlegendentry{VIMCO 1/16};

\pgfplotstablesort[sort key=validklp]{\sorted}
                  {data/l8x10-bs256-hs128-t40-vimco16-kl1};
\addplot[bluehollow] table [x=validklp, y=validnll] {\sorted};
\addlegendentry{};

\pgfplotstablesort[sort key=validklp]{\sorted}
                  {data/l8x10-bs256-hs128-t40-vimco16-renyi1}
\addplot[bluesolid] table [x=validklp, y=validnll] {\sorted};
\addlegendentry{VIMCO 16/1};

\pgfplotstablesort[sort key=validklp]{\sorted}
                  {data/l8x10-bs256-hs128-t40-vimco16-kl};
\addplot[tealhollow] table [x=validklp, y=validnll] {\sorted};
\addlegendentry{};

\pgfplotstablesort[sort key=validklp]{\sorted}
                  {data/l8x10-bs256-hs128-t40-vimco16-renyi};
\addplot[tealsolid] table [x=validklp, y=validnll] {\sorted};
\addlegendentry{VIMCO 16};

\end{axis}
\end{tikzpicture}
\caption{\small KL and Rényi objectives on PTB with base estimator VIMCO. VIMCO 1 uses a single sample to estimate both $p(x)$ and $\KL(p(z|x)\|p(z))$. VIMCO 16/1 uses 16 samples to estimate $p(x)$ and 1 samples for the KL. VIMCO 1/16 uses 1 sample to estimate $p(x)$ and 16 samples for the KL. Finally, VIMCO 16 uses the same 16 samples for both terms.}
\label{fig:ptb-vimco-train-samples}
\end{figure}

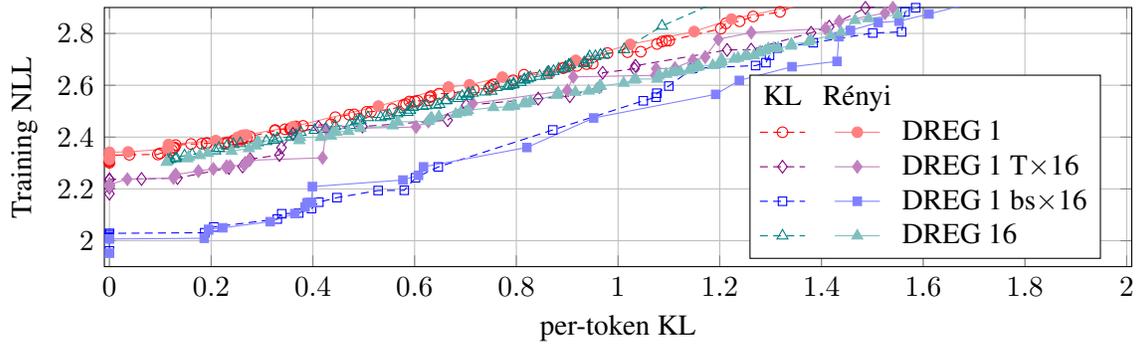
\begin{figure}
\begin{tikzpicture}
\begin{axis}[xmin=-0.01,xmax=2.01,ymin=1.9,ymax=2.9,
  xlabel=per-token KL, ylabel=Training NLL,
  height=0.33*\textwidth,
  width=\textwidth,
  grid=major,
  legend pos=south east,
  legend columns=2,
  legend style={
    font=\mystrut,
    legend cell align=left,
  },
]

\addlegendimage{legend image with text=KL}
\addlegendentry{}
\addlegendimage{legend image with text=Rényi}
\addlegendentry{}

\pgfplotstablesort[sort key=validklp]{\sorted}
                  {data/l40x0-bs256-hs128-t40-dreg1-kl}
\addplot[redhollow] table [x=validklp, y=validnll] {\sorted};
\addlegendentry{};

\pgfplotstablesort[sort key=validklp]{\sorted}
                  {data/l40x0-bs256-hs128-t40-dreg1-renyi}
\addplot[redsolid] table [x=validklp, y=validnll] {\sorted};
\addlegendentry{DREG 1};

\pgfplotstablesort[sort key=validklp]{\sorted}
                  {data/l40x0-bs256-hs128-t40x16-dreg1-kl}
\addplot[violethollow] table [x=validklp, y=validnll] {\sorted};
\addlegendentry{};

\pgfplotstablesort[sort key=validklp]{\sorted}
                  {data/l40x0-bs256-hs128-t40x16-dreg1-renyi}
\addplot[violetsolid] table [x=validklp, y=validnll] {\sorted};
\addlegendentry{DREG 1 T$\times$16};

\pgfplotstablesort[sort key=validklp]{\sorted}
                  {data/l40x0-bs4096-hs128-t40-dreg1-kl};
\addplot[bluehollow] table [x=validklp, y=validnll] {\sorted};
\addlegendentry{};

\pgfplotstablesort[sort key=validklp]{\sorted}
                  {data/l40x0-bs4096-hs128-t40-dreg1-renyi}
\addplot[bluesolid] table [x=validklp, y=validnll] {\sorted};
\addlegendentry{DREG 1 bs$\times$16};

\pgfplotstablesort[sort key=validklp]{\sorted}
                  {data/l40x0-bs256-hs128-t40-dreg16-kl};
\addplot[tealhollow] table [x=validklp, y=validnll] {\sorted};
\addlegendentry{};

\pgfplotstablesort[sort key=validklp]{\sorted}
                  {data/l40x0-bs256-hs128-t40-dreg16-renyi};
\addplot[tealsolid] table [x=validklp, y=validnll] {\sorted};
\addlegendentry{DREG 16};

\end{axis}
\end{tikzpicture}
\caption{\small Training NLL on PTB with KL and Rényi objectives and base estimator DREG. DREG 1 T$\times$16 is trained 16 times longer. DREG 1 bs$\times$16 has a 16 times larger batch size. While their best NLL at very low latent usage is lower than that of DREG 16, they lose this advantage at higher latent usage.}
\label{fig:ptb-dreg-train-x16}
\end{figure}

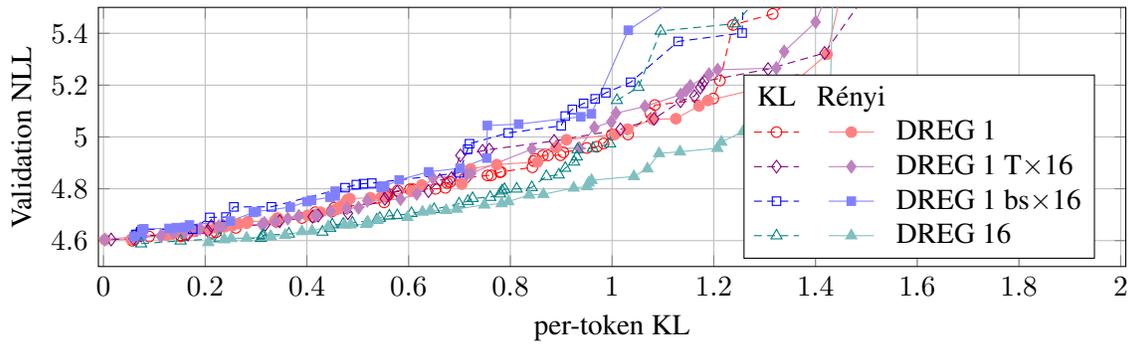
\begin{figure}
\begin{tikzpicture}
\begin{axis}[xmin=-0.01,xmax=2.01,ymin=4.5,ymax=5.5,
  xlabel=per-token KL, ylabel=Validation NLL,
  height=0.33*\textwidth,
  width=\textwidth,
  grid=major,
  legend pos=south east,
  legend columns=2,
  legend style={
    font=\mystrut,
    legend cell align=left,
  },
]

\addlegendimage{legend image with text=KL}
\addlegendentry{}
\addlegendimage{legend image with text=Rényi}
\addlegendentry{}

\pgfplotstablesort[sort key=validklp]{\sorted}
                  {data/l40x0-bs256-hs128-t40-dreg1-kl-valid}
\addplot[redhollow] table [x=validklp, y=validnll] {\sorted};
\addlegendentry{};

\pgfplotstablesort[sort key=validklp]{\sorted}
                  {data/l40x0-bs256-hs128-t40-dreg1-renyi-valid}
\addplot[redsolid] table [x=validklp, y=validnll] {\sorted};
\addlegendentry{DREG 1};

\pgfplotstablesort[sort key=validklp]{\sorted}
                  {data/l40x0-bs256-hs128-t40x16-dreg1-kl-valid}
\addplot[violethollow] table [x=validklp, y=validnll] {\sorted};
\addlegendentry{};

\pgfplotstablesort[sort key=validklp]{\sorted}
                  {data/l40x0-bs256-hs128-t40x16-dreg1-renyi-valid}
\addplot[violetsolid] table [x=validklp, y=validnll] {\sorted};
\addlegendentry{DREG 1 T$\times$16};

\pgfplotstablesort[sort key=validklp]{\sorted}
                  {data/l40x0-bs4096-hs128-t40-dreg1-kl-valid};
\addplot[bluehollow] table [x=validklp, y=validnll] {\sorted};
\addlegendentry{};

\pgfplotstablesort[sort key=validklp]{\sorted}
                  {data/l40x0-bs4096-hs128-t40-dreg1-renyi-valid}
\addplot[bluesolid] table [x=validklp, y=validnll] {\sorted};
\addlegendentry{DREG 1 bs$\times$16};

\pgfplotstablesort[sort key=validklp]{\sorted}
                  {data/l40x0-bs256-hs128-t40-dreg16-kl-valid-nsd};
\addplot[tealhollow] table [x=validklp, y=validnll] {\sorted};
\addlegendentry{};

\pgfplotstablesort[sort key=validklp]{\sorted}
                  {data/l40x0-bs256-hs128-t40-dreg16-renyi-valid-nsd};
\addplot[tealsolid] table [x=validklp, y=validnll] {\sorted};
\addlegendentry{DREG 16};

\end{axis}
\end{tikzpicture}
\caption{\small Validation NLL on PTB with KL and Rényi objectives and base estimator DREG. DREG 1 T$\times$16 is trained 16 times longer. DREG 1 bs$\times$16 has a 16 times larger batch size. DREG 16 generalizes better than either.}
\label{fig:ptb-dreg-valid-x16}
\end{figure}

\begin{figure}
\begin{tikzpicture}
\begin{axis}[xmin=-0.01,xmax=9.22,ymin=9.20,ymax=9.45,
  xlabel=average KL, ylabel=Training NLL,
  height=0.33*\textwidth,
  width=\textwidth,
  grid=major,
  legend pos=north west,
  legend columns=1,
  legend style={
    font=\mystrut,
    legend cell align=left,
  },
]

\pgfplotstablesort[sort key=validklp]{\sorted}
                  {data/synthetic/l40x0-bs256-hs128-t40-dreg16-kl}
\addplot[redsolid] table [x=validklp, y=validnll] {\sorted};
\addlegendentry{DReG 16 K};

\pgfplotstablesort[sort key=validklp]{\sorted}
                  {data/synthetic/l40x0-bs256-hs128-t40-dreg16-renyi}
\addplot[bluesolid] table [x=validklp, y=validnll] {\sorted};
\addlegendentry{DReG 16 R};

\pgfplotstablesort[sort key=validklp]{\sorted}
                  {data/synthetic/l40x0-bs256-hs128-t40-dreg16-alpha};
\addplot[tealsolid] table [x=validklp, y=validnll] {\sorted};
\addlegendentry{DReG 16 $\alpha$};

\end{axis}
\end{tikzpicture}
\caption{\small The power objective with DReG on the synthetic data set compared to the KL and Rényi objectives.}
\label{fig:synthetic-dreg-power}
\end{figure}

\begin{figure}
\begin{tikzpicture}
\begin{axis}[xmin=-0.01,xmax=9.22,ymin=9.20,ymax=9.45,
  xlabel=average KL, ylabel=Training NLL,
  height=0.33*\textwidth,
  width=\textwidth,
  grid=major,
  legend pos=north west,
  legend columns=1,
  legend style={
    font=\mystrut,
    legend cell align=left,
  },
]

\pgfplotstablesort[sort key=validklp]{\sorted}
                  {data/synthetic/l8x10-bs256-hs128-t40-vimco16-kl}
\addplot[redsolid] table [x=validklp, y=validnll] {\sorted};
\addlegendentry{VIMCO 16 K};

\pgfplotstablesort[sort key=validklp]{\sorted}
                  {data/synthetic/l8x10-bs256-hs128-t40-vimco16-renyi}
\addplot[bluesolid] table [x=validklp, y=validnll] {\sorted};
\addlegendentry{VIMCO 16 R};

\pgfplotstablesort[sort key=validklp]{\sorted}
                  {data/synthetic/l8x10-bs256-hs128-t40-vimco16-alpha};
\addplot[tealsolid] table [x=validklp, y=validnll] {\sorted};
\addlegendentry{VIMCO 16 $\alpha$};

\end{axis}
\end{tikzpicture}
\caption{\small The power objective with VIMCO on the synthetic data set compared to the KL and Rényi objectives.}
\label{fig:synthetic-vimco-power}
\end{figure}

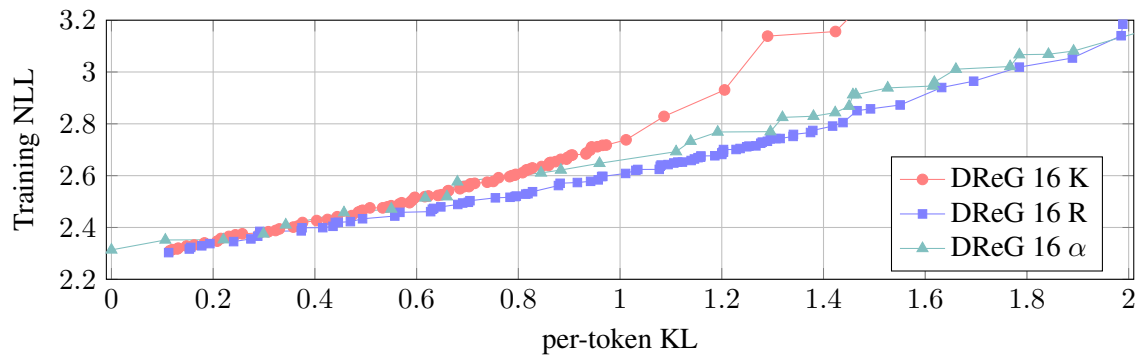
\begin{figure}
\begin{tikzpicture}
\begin{axis}[xmin=-0.01,xmax=2.01,ymin=2.2,ymax=3.2,
  xlabel=per-token KL, ylabel=Training NLL,
  height=0.33*\textwidth,
  width=\textwidth,
  grid=major,
  legend pos=south east,
  legend columns=1,
  legend style={
    font=\mystrut,
    legend cell align=left,
  },
]

\pgfplotstablesort[sort key=validklp]{\sorted}
                  {data/l40x0-bs256-hs128-t40-dreg16-kl}
\addplot[redsolid] table [x=validklp, y=validnll] {\sorted};
\addlegendentry{DReG 16 K};

\pgfplotstablesort[sort key=validklp]{\sorted}
                  {data/l40x0-bs256-hs128-t40-dreg16-renyi}
\addplot[bluesolid] table [x=validklp, y=validnll] {\sorted};
\addlegendentry{DReG 16 R};

\pgfplotstablesort[sort key=validklp]{\sorted}
                  {data/l40x0-bs256-hs128-t40-dreg16-alpha};
\addplot[tealsolid] table [x=validklp, y=validnll] {\sorted};
\addlegendentry{DReG 16 $\alpha$};

\end{axis}
\end{tikzpicture}
\caption{\small The power objective with DReG on PTB compared to the KL and Rényi objectives.}
\label{fig:ptb-dreg-power}
\end{figure}

\begin{figure}
\begin{tikzpicture}
\begin{axis}[xmin=-0.01,xmax=2.01,ymin=2.2,ymax=3.2,
  xlabel=per-token KL, ylabel=Training NLL,
  height=0.33*\textwidth,
  width=\textwidth,
  grid=major,
  legend pos=south east,
  legend columns=1,
  legend style={
    font=\mystrut,
    legend cell align=left,
  },
]

\pgfplotstablesort[sort key=validklp]{\sorted}
                  {data/l8x10-bs256-hs128-t40-vimco16-kl}
\addplot[redsolid] table [x=validklp, y=validnll] {\sorted};
\addlegendentry{VIMCO 16 K};

\pgfplotstablesort[sort key=validklp]{\sorted}
                  {data/l8x10-bs256-hs128-t40-vimco16-renyi}
\addplot[bluesolid] table [x=validklp, y=validnll] {\sorted};
\addlegendentry{VIMCO 16 R};

\pgfplotstablesort[sort key=validklp]{\sorted}
                  {data/l8x10-bs256-hs128-t40-vimco16-alpha};
\addplot[tealsolid] table [x=validklp, y=validnll] {\sorted};
\addlegendentry{VIMCO 16 $\alpha$};

\end{axis}
\end{tikzpicture}
\caption{\small The power objective with VIMCO on PTB compared to the KL and Rényi objectives.}
\label{fig:ptb-vimco-power}
\end{figure}
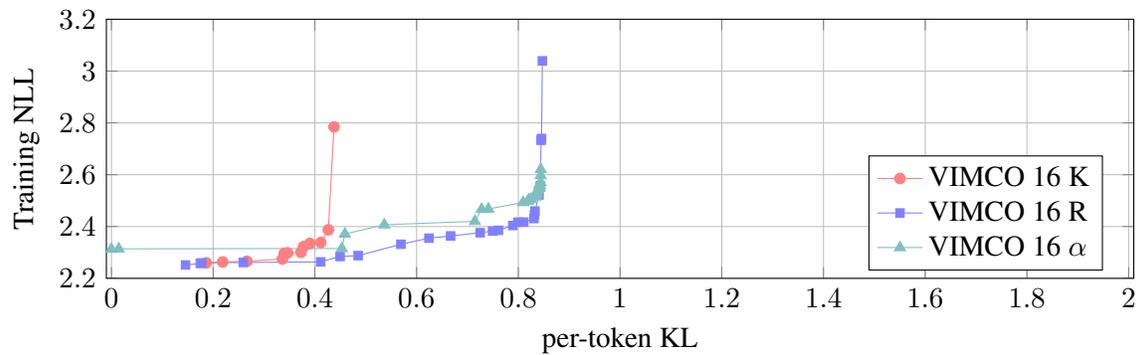

\section{Additional Language Modelling Experiments}
\label{sec:additional-language-modelling-experiments}

\subsection{Robustness}
\label{sec:additional-language-modelling-experiments-robustness}

We performed additional experiments to verify that our results about the relative merits of the estimators are robust to different choices of batch size, number of parameters and optimization length.
With continuous latents, we focussed on DReG, the best performing base estimator and varied the batch size (\Cref{fig:ptb-dreg-bs}), the length of optimization (\Cref{fig:ptb-dreg-ol-hs128}), and the length of optimization again at two times the model size (\Cref{fig:ptb-dreg-ol-hs256}).
For VIMCO, the best performing base estimator for the discrete latents, \Cref{fig:ptb-vimco-bs,fig:ptb-vimco-ol-hs128,fig:ptb-vimco-ol-hs256} tell a similar story.
In most cases, we observed that varying these nuisance factors only shifted the Pareto curves downwards or upwards, leaving their relative positions the same.
The exception is \Cref{fig:ptb-dreg-ol-hs256}, DREG with 256 hidden units, where the differences between the three estimators are very small.

Finally, similar to those in \S\ref{sec:single-vs-multi-sample}, we present experiments with single-sample DReG at an increased computational budget.
As training (\Cref{fig:ptb-dreg-train-x16}) and validation (\Cref{fig:ptb-dreg-valid-x16}) results show, the overall picture may be the same as for VIMCO, (that is, steeper training curves translate to steeper validation curves), but it is harder to assess this since the curves are closer.

\subsection{Asymmetric Samples}
\label{sec:additional-language-modelling-experiments-asymmetric-samples}

We now investigate whether the improvements are due to a better estimate of the marginal likelihood $\ln p(x)$, or the mutual information term $I_{p_D}^p(X,Z)$ in \eqref{eq:cross-mi-obj}.
\Cref{fig:ptb-dreg-train-samples,fig:ptb-vimco-train-samples} show that improvements in training fit require multiple samples in both terms, even more so than in the experiments on the synthetic data earlier (\Cref{fig:synthetic-dreg-samples,fig:synthetic-vimco-samples}).

\subsection{Experiments with the Power Objective}
\label{sec:experiments-with-the-power-objective}

Recall that the power objective \eqref{eq:power-obj} is a special case
of the Rényi objective \eqref{eq:renyi-obj-estimator} with the choice of $\lambda = (\alpha-1)/\alpha$.
With this $\lambda$, the objective simplifies to $\E_{p_D(x)} \ln p^\alpha(x)$.
Since $\ln p^\alpha(x) = \ln(p(x|z)^{\alpha} p(z)) = \alpha \ln p(x|z) + \ln p(z)$, implementing the power objective is as easy as upweighting the log-likelihood term $\ln p(x|z)$.
Here we investigate whether at the same time, by tying $\lambda$ and $\alpha$, we can maintain parity with the Rényi objective in terms efficiency of latent usage.
As \Cref{fig:synthetic-dreg-power,fig:synthetic-vimco-power,fig:ptb-dreg-power,fig:ptb-vimco-power} show, the power objective is often better than the KL objective but lags the Rényi objective as $\alpha$ determines both $\lambda$, the weight of the mutual information term and its bias with respect to the KL.

\section{Optimization Settings}
\label{sec:optimization-settings}

In all experiments, we use the Adam optimizer \citep{kingma2014adam} with $\beta_1=0$, $\beta_2=0.999$, and $\epsilon=1e-8$.
We tune hyperparameters using a black-box hyperparameter tuner based on batched Gaussian Process Bandits \citep{golovin2017google}.
Hyperparameters and their ranges are listed in \Cref{tab:hp-ranges}.
The learning rate is the only hyperparameter tuned in all experiments.
The rest only apply in specific circumstances.
\emph{Input and output dropout} are the dropout rates applied to the inputs and outputs of the LSTM, respectively, while \emph{state dropout} is the dropout rate for the LSTM's recurrent state from the previous time step \citep{gal2016theoretically}.
For a description of the VQ-VAE parameters see \citet{van2017neural}.

\begin{table}
  \centering
  \begin{tabular}{@{}lrrrr@{}}
    \toprule
    hyperparameter & min & max & scale & \\
    \midrule
    learning rate & 0.0001 & 0.01 & log & \\
    $\lambda$ & -0.05 & 0.9999 & & KL and Rényi only \\
    $\alpha$ & 0.86 & 16.0 & log & power objective only \\
    input dropout & 0.0 & 0.9 & & validation only \\
    state dropout & 0.0 & 0.8 & & validation only \\
    output dropout & 0.0 & 0.95 & & validation only \\
    L2 penalty coefficient & 5e-6 & 1e-3 & log & validation only \\
    number of latents & 1 & 8 & & VQ-VAE only \\
    number of categories of latents & 2 & 20 & & VQ-VAE only \\
    VQ $\beta$ & 0.01 & 20.0 & log & VQ-VAE only \\
    VQ decay & 0.9 & 0.99999 & log & VQ-VAE only \\
    \bottomrule
  \end{tabular}
  \caption{Hyperparameter tuning ranges.}
  \label{tab:hp-ranges}
\end{table}

{
\small
\bibliography{paper}
}

\end{document}